\newtheorem{theorem}{Theorem}
\newtheorem{lemma}{Lemma}
\newtheorem{remark}{Remark}
\newtheorem{fact}{Fact}
\Crefname{algorithm}{Alg.}{Algs.}
\Crefname{section}{Sec.}{Secs.}
\Crefname{appendix}{App.}{Apps.}
\newcommand{\E}{{\mathbb E}}
\newcommand{\rank}[1]{\operatorname{rank}\left(#1\right)}
\DeclareMathOperator*{\argmin}{arg\,min}
\newcommand{\iter}[2]{ {#1}^{(#2)}}
\newcommand{\R}{{\mathbb R}}
\def\mA{{\mathbf{A}}}
\def\mB{{\mathbf{B}}}
\def\mC{{\mathbf{C}}}
\def\mE{{\mathbf{E}}}
\def\mH{{\mathbf{H}}}
\def\mI{{\mathbf{I}}}
\def\mM{{\mathbf{M}}}
\def\mQ{{\mathbf{Q}}}
\def\mR{{\mathbf{R}}}
\def\mS{{\mathbf{S}}}
\def\mU{{\mathbf{U}}}
\def\mV{{\mathbf{V}}}
\def\mX{{\mathbf{X}}}
\def\mY{{\mathbf{Y}}}
\def\mSigma{{\bm{\Sigma}}}
\def\vzero{{\mathbf{0}}}
\def\veps{{\bm{\eps}}}
\def\vb{{\mathbf{b}}}
\def\vc{{\mathbf{c}}}
\def\vu{{\mathbf{u}}}
\def\vx{{\mathbf{x}}}
\def\vy{{\mathbf{y}}}
\def\vz{{\mathbf{z}}}
\def\eps{{\varepsilon}}
\def\svd{SVD}
\def\lowner{L\"owner}
\def\lp{\ell_p} 
\newcommand{\levScores}[1][]{%
    \ifthenelse{\isempty{#1}}{$\lp{}$-leverage scores}{$\ell_{#1}$-leverage scores}
}
\newcommand{\norm}[2]{\left\| #1 \right\|_{#2} }
\newcommand{\covMat}[1]{ {#1}^{\top} {#1}}
\newcommand{\inv}[1]{ \left( #1 \right)^{-1}}
\newcommand{\eye}[1]{ \mI_{#1}}
\newcommand{\opt}[1]{ {#1}^{\star}}
\newcommand{\sol}[2]{ {#1}_{#2}}
\newcommand{\Normal}[2]{ \mathcal{N} \left( {#1}, {#2} \right) }
\def\bias{\textsf{bias}}
\def\var{\textsf{var}}
\def\mse{\textsf{MSE}}
\def\tr{\textsf{trace}}
\newcommand{\RandCovMat}[2]{ {#1}^{\top}{#2}^{\top}{#2} {#1}}
\def\SkTime{T_{\textsf{sketch}}}
\newcommand{\fd}[2]{  {#1}=\textsf{FD}\left( {#2} \right) } 
\newcommand{\rfd}[2]{  {#1}=\textsf{RFD}\left( {#2} \right) } 
\newcommand{\xfd}[2]{  {#1}=\textsf{Sk}\left( {#2} \right) } 
\newacronym{ols}{OLS}{Ordinary Least Squares}
\newacronym{rr}{RR}{Ridge Regression}
\newacronym{fd}{FD}{Frequent Directions}
\newacronym{fdrr}{FDRR}{Frequent Directions Ridge Regression}
\newacronym{rfdrr}{RFDRR}{Robust Frequent Directions Ridge Regression}
\newacronym{ifdrr}{iFDRR}{Iterative Frequent Directions Ridge Regression}
\newacronym{mse}{MSE}{Mean Squared Error}
\newacronym{rfd}{RFD}{Robust Frequent Directions}
\newacronym{ml}{ML}{Machine Learning}
\newacronym{sjlt}{SJLT}{Sparse Johnson Lindenstrauss Transform}
\newacronym{ihs}{IHS}{Iterative Hessian Sketch}
\begin{document}

%

%

\twocolumn[

\aistatstitle{Ridge Regression with Frequent Directions: Statistical and 
Optimization Perspectives}

\aistatsauthor{Charlie Dickens}

\aistatsaddress{University of Warwick} ]

\begin{abstract}
Despite its impressive theory \& practical performance, Frequent Directions (\acrshort{fd})
has not been widely adopted for large-scale regression tasks.
Prior work has shown randomized sketches 
(i) perform worse in estimating the covariance matrix of the data than \acrshort{fd};
(ii) incur high error when estimating the bias and/or variance on sketched ridge regression.
We give the first constant factor relative error bounds on the bias \& variance for sketched ridge 
regression using \acrshort{fd}.
We complement these statistical results by showing that \acrshort{fd} can be used in 
the optimization setting through an iterative
scheme which yields high-accuracy solutions.
This improves on randomized approaches which need to compromise the need for a new sketch every 
iteration with speed of convergence.
In both settings, we also show using \emph{Robust Frequent Directions} 
further enhances performance.
\end{abstract}

\section{Introduction}
\label{sec:introduction}

Ridge regression (\acrshort{rr}) has become a key tool in data analysis but it is resource intensive to solve at 
large scale and in high dimensions.
Recall that the \acrshort{rr} problem is to return:
\begin{equation}
    \min_{\vx \in \R^{d}} \left\{ f(\vx) = \frac{1}{2}\|\mA \vx - \vb \|_2^2
    + \frac{\gamma}{2}\|\vx\|_2^2 \right\}
    \label{eq:ridge-regression}
\end{equation}
Solving \eqref{eq:ridge-regression} when $n > d$ by the \svd{}  (or other related 
decompositions) requires $O(nd^2)$ time and $O(d^2)$ space.
These complexities are not acceptable given the scale of modern data.

A crucial quantity in both solving and approximating \acrshort{rr} is the 
\emph{Hessian}\footnote{Due to the fact it is the matrix of second derivatives of 
$f(\vx)$ in \eqref{eq:ridge-regression}.
It is composed of the data covariance $\covMat{\mA}$ and a regularization 
term $\gamma \eye{d}$.}
matrix $\mH_{\gamma} = \covMat{\mA} + \gamma \eye{d}$.
Maintaining $\mH_{\gamma}$ exactly by rank-one updates of the samples 
costs $O(nd^{\omega -1})$ time and $O(d^2)$ space so offers little overall benefit.
Speeding up this computation has been studied through \emph{randomized} matrix sketching techniques
which estimate $\mH_{\gamma}$ through 
$\tilde{\mH}_{\gamma} = \RandCovMat{\mA}{\mS} + \gamma \eye{d}$.
Provided that $\mS \in \R^{m \times n}$ is sampled from a suitable distribution
(details in \citep{woodruff2014sketching, drineas2016randnla}), then 
$\tilde{\mH}_{\gamma}$ is a good proxy for $\mH_{\gamma}$.
The computational savings come when $\mS$ can be applied to input $\mA$ quickly or implicitly as the
data is read.

The exact solution to \eqref{eq:ridge-regression} is given in 
\eqref{eq:rr-exact}.
There are two central ``one-shot'' methods to approximate \eqref{eq:ridge-regression}: \emph{Classical \eqref{eq:classical-sketch-def}}
\citep{avron2016sharper}
and \emph{Hessian \eqref{eq:hessian-sketch-def}} 
\citep{pilanci2015randomized} Sketching:
\vspace*{-2mm}
\begin{align}
    \opt{\vx} &= \inv{\covMat{\mA} + \gamma \eye{d}} \mA^{\top} \vy
    \label{eq:rr-exact}\\
    \vx^C     &= 
    \inv{\mA^{\top} \mS^{\top} \mS \mA + \gamma \eye{d}} 
    \mA^{\top} \mS^{\top} \mS \vy \label{eq:classical-sketch-def}\\
    \vx^H     &= 
     \inv{\mA^{\top} \mS^{\top} \mS \mA + \gamma \eye{d}} \mA^{\top} \vy\label{eq:hessian-sketch-def}
     \vspace*{-2mm}
\end{align} 
If sketching the data to obtain $\mS \mA$ takes time $\SkTime,$ then approximating
\eqref{eq:ridge-regression} is $O(\SkTime + md^2)$ time.
As\footnote{The $\tilde{O}$ notation suppresses lower order and 
failure probability terms} $m = \tilde{O}(d \operatorname{poly} \log(d))$ the space grows as $\tilde{O}(d^2)$.

When ridge regression is practical, one often finds
redundancy in the spectrum of high-dimensional input data.
Hence, it would be ideal to perform an online or streaming variant of \svd{} keeping only the
informative parts of the spectrum.
Unfortunately, greedy heuristics \citep{brand2002incremental} can be shown to perform arbitrarily badly \citep{huang2018near}.
\cite{liberty2013simple} introduced \acrfull{fd}
for exactly this problem;
to find a matrix summary $\mB \in \R^{m \times d}$ that well approximates the information one would 
obtain from performing an \svd{} of $\mA$.
Therefore, \acrshort{fd} is a natural candidate sketch for approximating ridge regression.

Frequent Directions is an orthogonal approach to randomized matrix sketching.
\acrshort{fd} \emph{deterministically} updates the top singular directions observed in the data stream, keeping only the most 
important (or the most frequently occurring).
In \cite{ghashami2016frequent}, compelling evidence was given that showed
\acrshort{fd} more accurately approximates $\covMat{\mA}$ at a given projection
dimension $m$ than randomized methods.
\acrshort{fd} is also a mergeable summary \citep{agarwal2013mergeable}, and can be adapted
to sparse data \citep{ghashami2016efficient}.
Given that $\mH_{\gamma} = \covMat{\mA} + \gamma \eye{d}$ is a fundamental operator in \acrshort{rr},
one would hope that \acrshort{fd} can be used as the sketch here, rather than random projection.
Indeed, this is shown in \cite{shi2020deterministic} who use
$\hat{\mH} = \covMat{\mB} + \gamma \eye{d}$ to approximate $\mH_{\gamma}$.
If the interplay between the regularisation $\gamma$ and the approximation error from \acrshort{fd} are
correctly balanced, then $\opt{\vx}$ can be reasonably approximated.
We refer to this approach as \emph{\acrfull{fdrr}} (\Cref{alg:fdrr}, \Cref{app:fd-properties})
returning $\hat{\vx} = \hat{\mH}^{-1} \mA^{\top}\vy$.

However, it remains the case that despite being a high-quality sketch, \acrshort{fd} is under-exploited
in regression tasks.
Our motivation is to better understand how \acrshort{fd} can be used in regression and what properties
it preserves.
We are interested in the following questions which prior work has failed to address:
\begin{enumerate}
    \vspace*{-4mm}
    \item{\textbf{Statistical model estimation.} Ridge regression is often studied under a linear model with a ground truth vector $\vx_0$
    that describes the behavior of the data.
    If \acrshort{fd} is employed as the sketch, then how does this distort the bias and variance 
    of the returned weights compared to the ``optimal'' bias and variance in recovering 
    $\vx_0$?}
    \vspace*{-2.5mm}
    \item{
    \textbf{Solution estimation.}  Can the coarse approximation of $\opt{\vx}$ from \cite{shi2020deterministic} be bootstrapped
    to obtain a high accuracy estimate $\hat{\vx}$?
    }
\end{enumerate}
\vspace*{-3mm}
These two questions underpin complementary perspectives commonly found in the machine 
learning community.
The former is a \emph{statistical perspective} while the latter is an \emph{optimization perspective}.
It is argued in \cite{wang2017sketched} that \emph{both} are of importance in theory and practice
depending on the application.
The statistical perspective is relevant in machine learning when the approximate solution $\hat{\vx}$
is used as a proxy for the optimal weights $\opt{\vx}$ which are too expensive to obtain.
Meanwhile, the optimization perspective is useful when one wishes to understand how sequentially 
refining expensive iterations can lead to better estimates of the solution vector.
\vspace*{-3mm}
\subsection{Contributions}
Our contributions are two-fold:
\begin{enumerate}
    \vspace*{-3mm}
    \item{\textbf{Statistical results:} we give the first analysis for 
    \acrshort{fdrr} under a linear model.
    We provide constant factor relative error bounds on the bias, variance,
    and mean-squared error (\acrshort{mse}).
    For a $\theta \in (0,1)$, 
    we find that 
    $ ({1-\theta} )\norm{\bias(\opt{\vx})}{2}^{2} \le 
    \norm{\bias(\hat{\vx})}{2}^{2} \le
    \nicefrac{\norm{\bias(\opt{\vx})}{2}^{2}}{1-\theta}$, 
    likewise for trace of variance and \acrshort{mse}.
    We show that using the more accurate \emph{Robust Frequent Directions} \citep{huang2018near}
    improves this to a $1-\theta'$ approximation for $\theta' < \theta$.}
    \vspace*{-2.5mm}
    \item{\textbf{Optimization results:} 
    we present the first analysis of \acrshort{fd} in an iterative scheme 
    to obtain \emph{high-quality solution estimation}.
    We show that
    $t$ iterates
    $\vx^{(t)} = \vx^{(t-1)} - \hat{\mH}^{-1} \nabla f(\vx^{(t)})$ yields weights $\hat{\vx} = \vx^{(t)}$
    satisfying $\|\hat{\vx} - \opt{\vx}\|_2 \le \zeta^t  \|\opt{\vx}\|_2 $.
    This can substantially improve the one-shot sketch estimate
    $\hat{\vx} = \vx^{(1)}$
    of \cite{shi2020deterministic} even if $t$ is small or moderate.
    }
\end{enumerate}
\vspace*{-4mm}
Although these results are simple, there are significant practical implications.
From the statistical side, our results show that \acrshort{fdrr}
returns weights which are much more faithful to the underlying model 
than randomized sketching: this is highlighted in Table \ref{tab:methods}.
If the bias-variance tradeoff is a key concern then \acrshort{fdrr} 
should be preferred to using random projections.
Secondly, on the optimization side, we show the existence of small-space deterministic 
preconditioners which can be iteratively used to refine the estimates to ridge regression.
The significance of our results is that \acrshort{fdrr} requires space only $O(md)$ for $m < d$
by storing $\mB \in \R^{m \times d}$ and some extra information such as $\mA^{\top}\vy$.
Consequently, \acrshort{fdrr} can operate in higher dimensions than
randomized methods which need $\mS \mA \in \R^{\tilde{O}(d) \times d}$.

\subsection{Related Work}
\begin{table*}[!ht]
\centering
\begin{tabular}{@{}lllllll@{}}
\toprule
Method               & \multicolumn{2}{l}{$\|\bias(\hat{\vx})\|_2^2$}                 & \multicolumn{2}{l}{$\tr(\var(\vx))$}                     & Time   & Space (num. rows $m$) \\ 
                     & LB           & UB                        & LB           & UB                           &        &      \\ \midrule
\acrshort{fdrr}      & $1 - \theta$ & 
$\nicefrac{1}{1 - \theta}$&$1 - \theta$  & $\nicefrac{1}{1 - \theta}$   &$O(ndm)$& $O(md)$\\
\acrshort{rfdrr}     & $1 - \theta'$& $\nicefrac{1}{1 -\theta'}$& $1 - \theta'$& $\nicefrac{1}{1 -\theta'}$   &$O(ndm)$& $O(md)$\\
Classical            &$\nicefrac{1}{(1+\rho)^2}$&$\nicefrac{1}{(1-\rho)^2}$&    $c_1\frac{n(1-\rho)}{m(1+\rho)^2}$        &
$c_2\frac{n(1+\rho)}{m(1-\rho)^2}$ & $O(\textsf{nnz}(\mA)) \sim O(nd^2)$     & $\tilde{O}(d \rho^{-2} \operatorname{poly} \log d)$  \\
Hessian              & 
$c'_1 \frac{\rho}{1+\rho}$             &$c'_2\rho(1+\rho)$
& $\nicefrac{1}{(1+\rho)^2}$ &$\nicefrac{1}{(1-\rho)^2}$            & 
$O(\textsf{nnz}(\mA)) \sim O(nd^2)$     & $\tilde{O}(d \rho^{-2} \operatorname{poly} \log d)$  \\ \bottomrule
\end{tabular}
\caption{
Lower (LB) \& upper (UB) bounds for bias \& variance of competing sketching methods.
Deterministic methods require
$m = \nicefrac{\|\mA - \mA_k\|_F^2}{(1-\sqrt{1-\theta})\gamma} + k < d$ rows.
Bounds for Classical/Hessian sketch are slight modifications of 
\citep{wang2017sketched} for $(1 \pm \rho)$-$\ell_2$ subspace embeddings
$\mS$.
Extra parameters are constants $c_1, c_2 \approx 1$ and singular values
$\sigma_j^2$ of the input data.
The constants $c'_1, c'_2$ are slightly more involved, \citep{wang2017sketched} should be consulted for the details.
}
\label{tab:methods}
\end{table*}
Although we are not the first to study \acrshort{fd} in regression tasks,
prior work has different motivations, presents \emph{complementary} results to ours, and thus uses
different techniques.
\cite{shi2020deterministic} introduced \acrshort{fdrr}, returning 
$\hat{\vx}$
which satisfies a coarse bound $\norm{\hat{\vx} - \opt{\vx}}{} \le \zeta \norm{\opt{\vx}}{}$ in ${O}(d/\zeta)$ space.
Let $m = O(1/\zeta) < d$ be the number of rows in (and the rank of) $\mB$.
Since $\hat{\mH} = \covMat{\mB} + \gamma \eye{d}$, \cite{shi2020deterministic} show that
using an eigendecomposition of $\hat{\mH}$ can be used to obtain $\hat{\mH}^{-1}$ in 
$O(md)$ which results in the first $o(d^2)$ streaming algorithm to estimate $\opt{\vx}$.

However, this results fails
to provide any information on the model estimation provided by $\hat{\vx}$ performs under the
linear model we study.
Until this work nothing was known about the statistical performance of 
sketched ridge regression using \acrshort{fd}.
We seek strong statistical guarantees on the bias and variance of $\hat{\vx}$
when compared to the same quantities had no sketching been performed.
Alternatively, \cite{huang2018near} propose using \acrshort{fd} for \emph{adversarial 
online learning} through an approximate Newton method.
Hence, their application and bounds are much different from ours; no bounds on
the solution estimation $\norm{\hat{\vx} - \opt{\vx}}{2}$ are provided.

Randomized approaches for sketched ridge regression from the statistical setting typically 
exploit \emph{$\ell_2$-subspace embeddings} which asserts that $\mS \mA$ has all directions 
of $\mA$ preserved up to some small relative error.
However, this requires sampling $m=\tilde{O}(d \operatorname{poly} \log d)$ projections:
a stronger condition than retaining only $m < d$ directions as in \acrshort{fd}.
A severe weakness of one-shot randomized sketching in the statistical setting is that only 
one of bias or variance can be well approximated: Classical sketch estimates well the bias but has
significantly higher variance than the optimal solution while Hessian sketch has the 
opposite behaviour \citep{wang2017sketched}.
It is also shown in \cite{wang2017sketched} that averaging the solutions to many sketched 
ridge regression problems can improve the bias and variance estimation.
However, we are primarily interested in the `standard' usage of one-shot sketching such as
Classical and Hessian sketch which use only one sketch of the data.
Thus, a comparison to the so-called \emph{model-averaging} approach falls outside the scope 
of our study.

The central benefit of using \acrshort{fd} for ridge regression is that $o(d^2)$ space is 
required to obtain approximation guarantees \cite{shi2020deterministic}.
At a high level, this is due to the fact \acrshort{fd} incurs less distortion in 
approximating $\covMat{\mA}$ by $\covMat{\mB}$ compared to a random projection 
$\mA^{\top} \mS^{\top} \mS \mA$ when $\mB$ and $\mS \mA$ are of the same size.
Random projections can still preserve much information when $m < d$ for problems such as 
approximate matrix product \citep{cohen2015optimal}; convex constrained least squares 
\citep{pilanci2015randomized, pilanci2016iterative}; underconstrained ridge regression \citep{chowdhury2018iterative}.
However, we are not aware of any \emph{statistical guarantees} on the bias-variance 
tradeoff of ridge regression when $m < d$ dimensions are kept in the sketch.
Results in the optimization setting can be hindered by the need for a new sketch (even if
it is of small-size) at every iteration \cite{pilanci2016iterative}.
Using a single random sketch requires optimizing for the 
correct step size 
and 
is only known to work for Gaussian random projections
\citep{lacotte2019faster} which needs
$O(nd^2)$ time to obtain $\mS \mA$ so is not a viable scalable solution.

\textbf{Our Approach.}
Gaps in the existing literature mark our central departure from current work.
Recall that in the statistical setting the task is to understand the model estimation
(i.e., the mean-square error of $\hat{\vx}$, to be defined formally in Section 
\ref{sec:prelim}) 
meanwhile in the optimization setting we wish to 
minimise the solution estimation error $\|\hat{\vx} - \opt{\vx} \|$.

In both statistical and optimization settings, randomized sketches have proven
the most frequently studied technique.
This is in spite of the superior practical performance \acrshort{fd} provides in
estimating $\covMat{\mA}$ shown in \cite{ghashami2016frequent}.
Where \acrshort{fd} has been studied, there has been no attempt to understand 
the statistical properties (bias, variance, \acrshort{mse}) which is crucial
when using approximate weights $\hat{\vx}$ in place of $\opt{\vx}$.
Secondly, there has been no attempt to understand the performance of 
\acrshort{fd} as a small-space preconditioners for high-accuracy solvers.

\textbf{Paper Outline.}  Section \ref{sec:prelim} outlines the notation
and sketching results we build upon.
In Section \ref{sec:statistical-fdrr} we present the statistical properties of 
(R)\acrshort{fdrr}.
Section \ref{sec:iterative-fdrr} illustrates the iterative ridge sketching method.
Both sections contain experiments to highlight the performance of our methods.
The technical details \& proofs are deferred to the appendix.

\vspace{-3mm}
\section{Preliminaries and Notation}
\label{sec:prelim}
\vspace{-3mm}
Matrices of size $n \times d$ are denoted by uppercase letters e.g.
$\mA \in \R^{n \times d} $.
The $p$-dimensional identity matrix is denoted by $\eye{p}$.
Lower case symbols represent vectors, e.g. $\vx \in \R^{d}$.
The norms we use are the Frobenius norm $\|\mA\|_F$, the spectral or operator
norm over matrices $\|\mA\|_2$ and Euclidean norm over vectors, $\|\vx\|_2$.

\textbf{Frequent Directions: Theoretical Properties}
The property we exploit is that \acrshort{fd} approximately preserves the 
norm of matrix vector products after sketching.
Theorem \ref{thm:fd-guarantee-main} outlines the guarantees obtained by
the returned summary $\mB \in \R^{m \times d}$ of \acrshort{fd} \& \emph{Robust Frequent Directions} (\acrshort{rfd}).
\cite{huang2018near}, show that \acrshort{rfd} improves the accuracy of
\acrshort{fd} by a factor of $2$.
Both implementations are in \Cref{alg:frequent-directions-algorithm}, 
Appendix \ref{app:fd-properties}.

\textbf{Notation for Frequent Directions:} we use
$\Delta_k = \|\mA -\mA_k\|_F^2$ \& $\alpha = \nicefrac{1}{m-k}$ to 
write the bounds for both \acrshort{fd} \& \acrshort{rfd}
\eqref{eq:fd-bound-thm-main}.

\begin{theorem}[\cite{ghashami2016frequent, huang2018near}]
\label{thm:fd-guarantee-main}
Let $\mA \in \R^{n \times d}$. 
The \emph{(Robust) Frequent Directions} algorithm processes
$\mA$ one row at a time, returns a matrix $\mB \in \R^{m \times d}$ and
a scalar $\delta$
such that for any unit vector $\vu \in \R^d$:
\vspace*{-1mm} 
\begin{equation}
    \vspace*{-1mm} 
    \norm{\covMat{\mA} - \left(\covMat{\mB} + \delta \eye{d}\right)}{2}
    \le 
    \alpha' \Delta_k. 
    \label{eq:fd-bound-thm-main}
\end{equation}
If $\fd{\mB}{\mA}$, then $\delta = 0$ \& $\alpha'=\alpha$.
Else if $\rfd{[\mB,\delta]}{\mA}$, $\delta$ is adaptively chosen
and $\alpha'=\alpha/2$.
\end{theorem}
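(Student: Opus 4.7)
The plan is to prove the bound \eqref{eq:fd-bound-thm-main} by tracking, row by row, the ``mass'' that \acrshort{fd} discards during its periodic shrinkage steps and then bounding the accumulated shrinkage. Let $\mB_i \in \R^{m \times d}$ denote the state of the sketch after the $i$-th input row is processed, and let $\delta_i \ge 0$ be the squared $m$-th singular value that is subtracted from every squared singular value of $\mB_i$ whenever the sketch fills up (so $\delta_i = 0$ except on shrinkage steps). Write $\Delta := \sum_i \delta_i$ for the total shrinkage.

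First I would establish, by induction on $i$, the \emph{one-sided} invariant
\begin{equation*}
 0 \;\le\; \vu^{\top}\!\left( \covMat{\mA} - \covMat{\mB} \right) \vu \;\le\; \Delta
\end{equation*}
for every unit vector $\vu \in \R^d$. The lower bound is immediate because a shrinkage step replaces $\covMat{\mB_i}$ by $\covMat{\mB_i} - \delta_i\eye{d}$, which can only decrease $\vu^{\top}\covMat{\mB}\vu$; non-shrinkage steps simply add a positive semidefinite rank-one term to both $\covMat{\mA}$ and $\covMat{\mB}$. The upper bound follows because each shrinkage step subtracts at most $\delta_i$ from $\vu^{\top}\covMat{\mB}\vu$.

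Second, I would bound $\Delta$ by the ``Frobenius accounting'' argument of \cite{ghashami2016frequent}. Since each shrinkage step zeros out an $m$-th singular value and reduces the top $m-1$ by $\delta_i$, the total Frobenius drop caused by shrinkages is $m\,\Delta$, while $\|\mB\|_F^2 \ge 0$ implies $\|\mA\|_F^2 \ge m\,\Delta - (\text{contribution absorbed into }\covMat{\mB})$. Comparing to the best rank-$k$ approximation $\mA_k$ and using $\|\mB_k\|_F^2 \le \|\mA\|_F^2 - \Delta_k + (\text{cross terms})$ yields the clean inequality $(m-k)\,\Delta \le \Delta_k$, hence $\Delta \le \alpha \Delta_k$. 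Taking the supremum over unit $\vu$ in the invariant and recalling the definition of the spectral norm finishes the \acrshort{fd} case with $\delta = 0$ and $\alpha' = \alpha$.

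Third, for \acrshort{rfd} I would set $\delta := \Delta/2$, equal to half the running total shrinkage accumulated so far. Adding $\tfrac{\Delta}{2}\eye{d}$ to $\covMat{\mB}$ re-centers the one-sided error: the previous interval $[0,\Delta]$ becomes $[-\Delta/2, \Delta/2]$ for $\vu^{\top}(\covMat{\mA} - \covMat{\mB} - \delta\eye{d})\vu$, so the spectral-norm error drops by a factor of two, giving $\alpha' = \alpha/2$. The fact that \acrshort{rfd} can compute $\delta$ adaptively as rows stream in (by maintaining a running sum of the $\delta_i$) shows the bound holds \emph{at every intermediate time}, not just at the end.

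The main technical obstacle is the second step, i.e., the Frobenius accounting. The subtlety is that one must carefully separate the mass that is \emph{discarded} by shrinkages from the mass that is \emph{retained} in the surviving singular values of $\mB$, and then charge the retained ``top-$k$'' portion of $\mB$ against $\|\mA_k\|_F^2$ using the optimality of truncated SVD. Handling this inductively across all shrinkage epochs, while keeping the comparison to \emph{any} rank-$k$ matrix tight, is where the $(m-k)$ denominator arises and is the only place where the analysis is not essentially mechanical.
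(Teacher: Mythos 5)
This theorem is imported from \cite{ghashami2016frequent} and \cite{huang2018near}; the paper itself gives no proof (it restates the bounds in Appendix A as Theorems \ref{thm:fd-guarantee} and \ref{thm:rfd-guarantee} and uses them as black boxes). Your sketch is essentially the canonical argument from those sources and is correct in outline: the one-sided invariant $0 \preceq \covMat{\mA} - \covMat{\mB} \preceq \Delta\,\eye{d}$ with $\Delta = \sum_i \delta_i$, the Frobenius accounting $(m-k)\Delta \le \Delta_k$, and the observation that adding $\delta = \Delta/2$ re-centers the error interval from $[0,\Delta]$ to $[-\Delta/2,\Delta/2]$, halving the spectral-norm bound for \acrshort{rfd}. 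The one place your description drifts from the actual argument is the second step: the lower bound on the retained mass is obtained by projecting $\mB$ onto the top-$k$ \emph{right singular vectors of $\mA$} and applying the one-sided invariant coordinatewise, i.e. $\|\mB\|_F^2 \ge \sum_{j=1}^k \|\mB \vv_j\|_2^2 \ge \|\mA_k\|_F^2 - k\Delta$; combined with $\|\mA\|_F^2 - \|\mB\|_F^2 \ge m\Delta$ this gives $(m-k)\Delta \le \Delta_k$ directly, with no ``cross terms'' and no appeal to the top-$k$ part of $\mB$ itself. With that correction the argument closes cleanly.
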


\textbf{Modelling Assumptions.}
We assume that a dataset $\mA \in \R^{n \times d}$ and targets $\vy \in \R^{n}$
are given such that 
\vspace*{-2mm} 
\begin{equation}
    \label{eq:linear_model}
    \vspace*{-2mm} 
    \vy = \mA \vx_0 + \veps.
\end{equation}
For both statistical and optimization settings, we will assume that $n > d$
and the input data has $\rank{\mA} = d$ so that $\opt{\vx}$ is 
uniquely defined.

\noindent
\textbf{Statistical setting.}
The noise 
$\veps$ is zero-mean, $\E(\veps) = \vzero_d$, and the covariance is 
$\E(\veps \veps^{\top}) = \sigma^2 \eye{n}$.
For an estimate of the weights $\hat{\vx}$, we are interested in
\begin{itemize}
    \vspace{-3mm}
    \item{$\bias(\hat{\vx}) = \E(\hat{\vx}) - \vx_0$ \& squared norm
    $\|\bias(\hat{\vx})\|_2^2$.}
    \item{Variance: $\var(\hat{\vx}) = (\hat{\vx} - \E(\hat{\vx}))
    (\hat{\vx} - \E(\hat{\vx}))^{\top}$ and its trace: $\tr(\var(\hat{\vx}))$.}
    \item{Mean-square error (\acrshort{mse}): 
    $\mse(\hat{\vx}) = \E\left(\|\hat{\vx} - \vx_0\|_2^2\right)$ which by the 
    bias-variance decomposition is $\mse(\hat{\vx}) = \|\bias(\hat{\vx})\|_2^2
    + \tr(\var(\hat{\vx}))$.}
\end{itemize}
\vspace{-3mm}
All expectations are taken over the randomness in $\veps$.

\noindent
\textbf{Optimization setting.}
No assumptions on $\veps$ are made and it is assumed to be fixed.
The notion of approximation we adopt is under the Euclidean norm: 
for an estimate $\hat{\vx}$ how small can the \emph{solution error}
$\|\hat{\vx} - \opt{\vx}\|_2$ be made.

\textbf{Randomized Sketching} typically require $\mS \in \R^{m \times n}$ to obtain a  $(1 \pm \rho)$-$\ell_2$ subspace embedding for $\mA$ 
\citep{woodruff2014sketching} which preserves all $\text{rank}(\mA) = d$ directions.
There are many choices of $\mS$ which satisfy the necessary properties 
to compare to the bounds we present \citep{woodruff2014sketching, drineas2016randnla}.
Prior work in both statistical \& optimization perspectives,
does not typically show a \emph{strong} difference in accuracy based 
on how $\mS$ is generated \citep{wang2017sketched, 
pilanci2016iterative, cormode2019iterative}.
Thus, we focus only on 
the \emph{Gaussian} and 
\emph{Sparse Johnson-Lindenstrauss Transforms} (\acrshort{sjlt}) 
\citep{nelson2013osnap}.
The Gaussian is a high-quality sketch and is well-studied due to 
favorable properties such as rotational invariance 
\citep{lacotte2019faster, pilanci2016iterative} yet is slow to apply.
Hence, we also test the \acrshort{sjlt}, which has $s$ nonzeros per 
column so is applied in time $O(s \textsf{nnz}(\mA))$ while also enjoying the same space bound.
Details on constructing the sketches are found in Appendix 
\ref{sec:appendix-miscellaneous}.
\section{Statistical Properties of \acrshort{fdrr}}
\label{sec:statistical-fdrr}
\begin{figure*}[!ht]
    {\centering
    \makebox[0pt]{\includegraphics[width=1.2\textwidth]{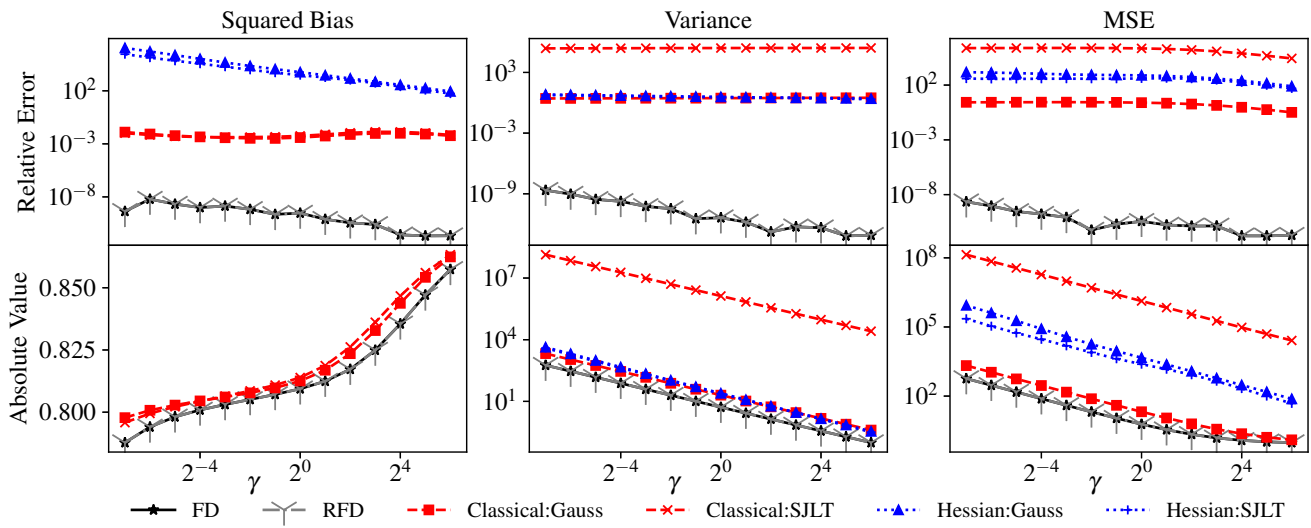}}
    \caption{
    \emph{Relative error} (top) and reported \emph{absolute value} (bottom) of
    the $3$ metrics
    $\|\bias(\hat{\vx})\|^2, 
    \tr(\var(\hat{\vx})), \mse(\hat{\vx})$
    plotted against $\gamma$.
    The instance has effective dimension $R_1 = \lfloor 0.15 d + 0.5 \rfloor$.
    The deterministic methods dominate the randomized methods in all 
    $3$ metrics.
    \emph{Hessian Sketch} has high error in estimating 
    $\|\bias(\opt{\vx})\|_2^2$ so is omitted from the bottom left panel.
    }
    \label{fig:bias-variance-mse}
    }
\end{figure*}

Recall the linear model from Equation \eqref{eq:linear_model} which generates the data \& 
assume $\gamma > 0$ is the regularisation parameter.
Recall that
$\mH_{\gamma} = \covMat{\mA} + \gamma \eye{d}$, the exact solution is
$\opt{\vx} = \mH_{\gamma}^{-1}\mA^{\top}\vy$,
$\hat{\mH} = \covMat{\mB} + \gamma \eye{d}$ and \acrshort{fdrr}
returns $\hat{\vx} = \hat{\mH}^{-1}\mA^{\top}\vy$.
Without sketching we have the following result for the optimal weights:
\begin{lemma}
\label{lem:optimal-bias-variance}
    The optimal bias and variance terms are:
    $\bias(\opt{\vx}) = -\gamma {\mH_{\gamma}^{-1} \vx_0}$ and 
    $\var(\opt{\vx}) = 
    \sigma^2 \mH_{\gamma}^{-1} \covMat{\mA} \mH_{\gamma}^{-1}$
    \label{lem:optimal-bias-variance}
\end{lemma}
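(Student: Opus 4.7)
The plan is to compute $\E(\opt{\vx})$ directly from the model and then subtract $\vx_0$ for the bias, and use the definition of variance with the centered estimator for the variance. Both computations are short; the only care needed is to rewrite $\covMat{\mA}$ in a way that makes the role of the regularization parameter $\gamma$ visible.

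First I would substitute the model $\vy = \mA \vx_0 + \veps$ into $\opt{\vx} = \mH_\gamma^{-1} \mA^{\top} \vy$ to get $\opt{\vx} = \mH_\gamma^{-1} \covMat{\mA} \vx_0 + \mH_\gamma^{-1} \mA^{\top} \veps$. Taking expectation, and using $\E(\veps) = \vzero_d$, yields $\E(\opt{\vx}) = \mH_\gamma^{-1} \covMat{\mA} \vx_0$. The key algebraic move is $\covMat{\mA} = \mH_\gamma - \gamma \eye{d}$, which gives $\E(\opt{\vx}) = \vx_0 - \gamma \mH_\gamma^{-1} \vx_0$ and hence $\bias(\opt{\vx}) = \E(\opt{\vx}) - \vx_0 = -\gamma \mH_\gamma^{-1} \vx_0$.

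For the variance, I would observe that the centered estimator is $\opt{\vx} - \E(\opt{\vx}) = \mH_\gamma^{-1} \mA^{\top} \veps$, since this is the only random part. Then
\[
\var(\opt{\vx}) = \E\bigl[\mH_\gamma^{-1} \mA^{\top} \veps \veps^{\top} \mA \mH_\gamma^{-1}\bigr] = \mH_\gamma^{-1} \mA^{\top} \E(\veps \veps^{\top}) \mA \mH_\gamma^{-1},
\]
and substituting $\E(\veps \veps^{\top}) = \sigma^2 \eye{n}$ and using the symmetry $\mH_\gamma^{-\top} = \mH_\gamma^{-1}$ gives $\var(\opt{\vx}) = \sigma^2 \mH_\gamma^{-1} \covMat{\mA} \mH_\gamma^{-1}$.

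There is no real obstacle here; the lemma is a standard textbook computation and serves as the reference point for the approximation bounds that follow. The only thing I would emphasize is the identity $\covMat{\mA} = \mH_\gamma - \gamma \eye{d}$, since the same identity (applied to $\covMat{\mB} = \hat{\mH} - \gamma \eye{d}$) will likely drive the subsequent comparison between $\hat{\vx}$ and $\opt{\vx}$ in the (R)\acrshort{fdrr} analysis.
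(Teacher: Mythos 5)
Your proposal is correct and follows essentially the same route as the paper: expand $\opt{\vx}$ under the linear model, use $\covMat{\mA} = \mH_{\gamma} - \gamma \eye{d}$ (the paper adds and subtracts $\gamma\eye{d}$, which is the same algebra) for the bias, and push $\E(\veps\veps^{\top}) = \sigma^2\eye{n}$ through the centered estimator for the variance. No gaps.
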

\vspace{-4mm}
The proof is given in Appendix 
\ref{sec:appendix-statistical-results}.
Now, the task is to understand the extent to which approximating the weights through
\acrshort{fdrr} distorts the behaviour expressed in Lemma 
\ref{lem:optimal-bias-variance}.
To that end, we have the following lemma which expresses both the bias and 
variance of the weights $\hat{\vx}$ found from solving \acrshort{fdrr}.
\begin{lemma}[\acrshort{fdrr} bias and variance]
\label{lem:fd-bias-variance}
    $\bias(\hat{\vx}) = (\hat{\mH}^{-1} \covMat{\mA} - \eye{d})\vx_0$
    and $\var(\hat{\vx}) = \sigma^2 \hat{\mH}^{-1} \covMat{\mA}
    \hat{\mH}^{-1}$.
\end{lemma}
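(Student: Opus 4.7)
The plan is to prove both formulas by direct substitution of the linear model $\vy = \mA\vx_0 + \veps$ into the closed form $\hat{\vx} = \hat{\mH}^{-1}\mA^{\top}\vy$ and then taking expectations. Neither step requires invoking the Frequent Directions guarantee of \Cref{thm:fd-guarantee-main}; the approximation matrix $\hat{\mH}$ enters only through its symmetry and invertibility (guaranteed since $\gamma > 0$ and $\covMat{\mB}$ is positive semidefinite). The structure of the argument is the same as the classical bias/variance derivation for ordinary ridge regression, just with $\mH_\gamma$ replaced by $\hat{\mH}$ on the inverse side while the data moment $\mA^{\top}\mA$ remains unsketched because the targets are regressed against the original $\mA$, not $\mB$.

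First I would write $\hat{\vx} = \hat{\mH}^{-1}\mA^{\top}(\mA\vx_0 + \veps) = \hat{\mH}^{-1}\covMat{\mA}\vx_0 + \hat{\mH}^{-1}\mA^{\top}\veps$. Taking expectations over $\veps$ and using $\E(\veps) = \vzero_n$ gives $\E(\hat{\vx}) = \hat{\mH}^{-1}\covMat{\mA}\vx_0$. Subtracting $\vx_0$ yields the bias formula $\bias(\hat{\vx}) = (\hat{\mH}^{-1}\covMat{\mA} - \eye{d})\vx_0$ immediately.

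For the variance, I would note that the centered deviation is $\hat{\vx} - \E(\hat{\vx}) = \hat{\mH}^{-1}\mA^{\top}\veps$. Forming the outer product and using the symmetry of $\hat{\mH}^{-1}$ gives
\begin{equation*}
\var(\hat{\vx}) = \hat{\mH}^{-1}\mA^{\top}\,\E(\veps\veps^{\top})\,\mA\hat{\mH}^{-1} = \sigma^2\,\hat{\mH}^{-1}\covMat{\mA}\hat{\mH}^{-1},
\end{equation*}
where the second equality uses the modelling assumption $\E(\veps\veps^{\top}) = \sigma^2\eye{n}$.

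There is essentially no obstacle here: the main thing to be careful about is keeping track of which occurrences of the covariance are the exact $\covMat{\mA}$ (appearing because $\mA^{\top}\vy$ is computed against the true data) and which are the sketched $\covMat{\mB}$ (appearing only inside $\hat{\mH}$). This asymmetry is precisely what distinguishes \acrshort{fdrr} from, say, the Classical Sketch, and it is what makes the later bias/variance bounds reduce to controlling $\hat{\mH}^{-1}\covMat{\mA}$ via the spectral inequality of \Cref{thm:fd-guarantee-main}; that reduction, however, belongs to the subsequent lemmas rather than this one.
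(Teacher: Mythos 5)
Your proposal is correct and follows essentially the same route as the paper: substitute the linear model into $\hat{\vx} = \hat{\mH}^{-1}\mA^{\top}\vy$, take expectations using $\E(\veps)=\vzero$ to get the bias, and form the outer product of the centered deviation $\hat{\mH}^{-1}\mA^{\top}\veps$ with $\E(\veps\veps^{\top})=\sigma^2\eye{n}$ to get the variance. Your remark that the FD guarantee is not needed here and only enters in the subsequent spectral bounds is also consistent with how the paper structures the argument.
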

With this understanding, the next task is to relate these expressions to the 
corresponding terms achieved by $\opt{\vx}$ as expressed in Lemma 
\ref{lem:optimal-bias-variance}.

Observe that we may write $\bias(\hat{\vx})$ as 
$\bias(\hat{\vx}) = (\hat{\mH}^{-1} \covMat{\mA} - \eye{d})\mH_{\gamma} \mH_{\gamma}^{-1}\vx_0$.
This manipulation is useful as $\bias(\opt{\vx}) = -\gamma \mH_{\gamma}^{-1}\vx_0$.
Hence, if we can control the smallest and largest eigenvalues of 
$\mM = (\hat{\mH}^{-1} \covMat{\mA} - \eye{d})\mH_{\gamma}$ then we should be 
able to relate $\|\bias(\hat{\vx})\|_2^2$ to $\|\bias(\opt{\vx})\|_2^2$.
This is exactly how our proof proceeds as we establish the following
\begin{lemma}
\label{lem:spectral-bound}
Let $\gamma' = \gamma - \alpha \Delta_k > 0$.
If 
$\mM = \left(
\hat{\mH}^{-1} \covMat{\mA} -  \eye{d}\right)\mH_{\gamma}$,
then
$\lambda_{\max}(\mM) \le \gamma^2 / \gamma'$ \& 
$\lambda_{\min}(\mM) \ge \gamma'$
\end{lemma}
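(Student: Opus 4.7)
The plan is to first put $\mM$ in a usable form by writing $\covMat{\mA} - \hat{\mH} = \mE - \gamma\eye{d}$ with $\mE := \covMat{\mA} - \covMat{\mB}$, giving $\mM = \hat{\mH}^{-1}(\mE - \gamma\eye{d})\mH_\gamma$. Theorem~\ref{thm:fd-guarantee-main} combined with the PSD covariance-error property of FD yields $0 \preceq \mE \preceq \alpha\Delta_k\,\eye{d}$; with $\gamma' = \gamma - \alpha\Delta_k$ this translates into two Loewner inequalities I will use throughout: $\hat{\mH} = \mH_\gamma - \mE \succeq \gamma'\eye{d}$ (so $\|\hat{\mH}^{-1}\|_2 \le 1/\gamma'$ and $\|\vv\|_2 \le \|\hat{\mH}\vv\|_2/\gamma'$) and $\gamma'\eye{d} \preceq \gamma\eye{d}-\mE \preceq \gamma\eye{d}$ (so the singular values of $\mE-\gamma\eye{d}$ lie in $[\gamma',\gamma]$).

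For the upper bound on $|\lambda_{\max}(\mM)|$, I rearrange $\mM\vv = \lambda\vv$ into the generalized eigenvalue relation $(\mE - \gamma\eye{d})\mH_\gamma\vv = \lambda\,\hat{\mH}\vv$; taking Euclidean norms gives $|\lambda| = \|(\mE-\gamma\eye{d})\mH_\gamma\vv\|_2 / \|\hat{\mH}\vv\|_2 \le \gamma\,\|\mH_\gamma\vv\|_2 / \|\hat{\mH}\vv\|_2$. The key step is to convert $\|\mH_\gamma\vv\|_2$ into a quantity comparable to $\|\hat{\mH}\vv\|_2$: writing $\mH_\gamma\vv = \hat{\mH}\vv + \mE\vv$ and applying the triangle inequality together with $\|\mE\vv\|_2 \le (\gamma-\gamma')\|\vv\|_2 \le (\gamma-\gamma')\|\hat{\mH}\vv\|_2/\gamma'$ yields $\|\mH_\gamma\vv\|_2 \le (\gamma/\gamma')\|\hat{\mH}\vv\|_2$, and substituting gives $|\lambda| \le \gamma^2/\gamma'$.

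The lower bound follows from a dual argument applied to $\mM^{-1} = \mH_\gamma^{-1}(\mE-\gamma\eye{d})^{-1}\hat{\mH}$, whose eigenvalues are the reciprocals of those of $\mM$. Rearranging $\mM^{-1}\vv = \mu\vv$ as $\hat{\mH}\vv = \mu(\mE-\gamma\eye{d})\mH_\gamma\vv$ and running the same template---now using $\sigma_{\min}(\mE-\gamma\eye{d}) = \gamma'$ to bound the denominator from below and controlling $\|\hat{\mH}\vv\|_2$ relative to $\|\mH_\gamma\vv\|_2$ via the same $\mH_\gamma = \hat{\mH} + \mE$ decomposition (tightened through the eigenvector identity rather than a generic inequality)---produces $|\mu| \le 1/\gamma'$ and hence $|\lambda| \ge \gamma'$.

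The main obstacle is that $\mM$ is non-symmetric and not self-adjoint under any obvious inner product, so its extreme eigenvalues cannot be extracted from a Rayleigh quotient, and the naive submultiplicative bound $\|\mM\|_2 \le \|\hat{\mH}^{-1}\|_2\,\|\mE-\gamma\eye{d}\|_2\,\|\mH_\gamma\|_2$ is vacuous because $\|\mH_\gamma\|_2$ is not controlled by $\gamma$. The essential trick is to route everything through the generalized eigenvalue formulation and then exploit $\mH_\gamma = \hat{\mH} + \mE$ to cancel the potentially divergent $\|\mH_\gamma\vv\|_2$ against the lower bound $\hat{\mH} \succeq \gamma'\eye{d}$, turning the argument into a bounded ratio $\|\mH_\gamma\vv\|_2/\|\hat{\mH}\vv\|_2 \le \gamma/\gamma'$; making the parallel bound for the lower direction is the only genuinely delicate calculation.
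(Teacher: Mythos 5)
Your reduction to the generalized eigenvalue relation $(\mE - \gamma\eye{d})\mH_{\gamma}\vv = \lambda \hat{\mH}\vv$ with $\mE = \covMat{\mA} - \covMat{\mB}$ is a genuinely different route from the paper's, which instead expresses the extremal eigenvalues as generalized Rayleigh quotients $\vu^{\top}(\hat{\mH}-\covMat{\mA})\mH_{\gamma}\vu \,/\, \vu^{\top}\hat{\mH}\vu$, changes variables by $\mH_{\gamma}^{1/2}$, and sandwiches numerator and denominator separately via the \lowner{} inequalities $\gamma'\eye{d} \preceq \hat{\mH} - \covMat{\mA} \preceq \gamma\eye{d}$ and $(\gamma'/\gamma)\mH_{\gamma} \preceq \hat{\mH} \preceq \mH_{\gamma}$. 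Your upper bound is complete and correct: $|\lambda|\,\|\hat{\mH}\vv\|_2 \le \gamma\|\mH_{\gamma}\vv\|_2$ combined with $\|\mH_{\gamma}\vv\|_2 \le \|\hat{\mH}\vv\|_2 + \|\mE\|_2\|\vv\|_2 \le (\gamma/\gamma')\|\hat{\mH}\vv\|_2$ does give $|\lambda| \le \gamma^2/\gamma'$, and it has the advantage of bounding every eigenvalue in modulus without pretending $\mM$ is symmetric.

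The lower bound, however, has a genuine gap. Your template applied to $\mM^{-1}$ gives $|\lambda| \ge \gamma'\,\|\mH_{\gamma}\vv\|_2/\|\hat{\mH}\vv\|_2$, so you need $\|\hat{\mH}\vv\|_2 \le \|\mH_{\gamma}\vv\|_2$ at the eigenvector. This does \emph{not} follow from $\hat{\mH} \preceq \mH_{\gamma}$: it would require $\hat{\mH}^2 \preceq \mH_{\gamma}^2$, and squaring does not preserve the \lowner{} order for non-commuting matrices --- a point the paper's own list of \lowner{} facts explicitly records. The generic decomposition $\mH_{\gamma} = \hat{\mH} + \mE$ with the triangle inequality yields only $\|\hat{\mH}\vv\|_2 \le \|\mH_{\gamma}\vv\|_2 + (\gamma-\gamma')\|\vv\|_2 \le (2 - \gamma'/\gamma)\|\mH_{\gamma}\vv\|_2$ (using $\mH_{\gamma} \succeq \gamma\eye{d}$), which delivers $|\lambda| \ge \gamma\gamma'/(2\gamma-\gamma')$ --- strictly weaker than the claimed $\gamma'$. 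The clause ``tightened through the eigenvector identity rather than a generic inequality'' is precisely where the missing argument should live, and no such tightening is supplied; as written, this half of the lemma is not proved.
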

Given that $\|\mM \vu\|_2 \in [\lambda_{\min}(\mM) \|\vu\|_2,
\lambda_{\max}(\mM) \|\vu\|_2]$ we can take $\vu = \mH_{\gamma}^{-1}\vx_0$ 
combined with Lemma \ref{lem:spectral-bound} to establish:
    $\|\bias(\hat{\vx})\|_2^2 \in  
    \left[ \gamma'^2\|\bias(\opt{\vx})\|_2^2,
    \frac{\gamma^4}{\gamma'^2}\|\bias(\opt{\vx})\|_2^2
    \right].$
Finally, provided that the parameters of the \acrshort{fd} sketch are 
appropriately set compared to the regularization $\gamma$, 
$\|\bias(\hat{\vx})\|_2^2$ can be shown to be within accurate relative-error 
bounds of $\| \bias(\opt{\vx})\|_2^2$.
\begin{theorem}
Let $\fd{\mB}{\mA} \in \R^{m \times d}$ and
let $\theta \in (0,1)$ be a parameter.
If
    $m = \nicefrac{\|\mA - \mA_k\|_F^2}{(1 - \sqrt{1 - \theta})\gamma} + k$,
then
\vspace{-3mm}
\begin{equation*}
    \norm{\bias(\hat{\vx})}{2}^2
    \in 
    \left[
    (1-\theta) \norm{\bias(\opt{\vx})}{2}^2,
    \frac{1}{1-\theta} \norm{\bias(\opt{\vx})}{2}^2
    \right]
\end{equation*}
\label{thm:bias-bound-main}
\end{theorem}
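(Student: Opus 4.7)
The plan is to piggy-back on the preceding lemmas: Lemma~\ref{lem:optimal-bias-variance} gives $\bias(\opt{\vx}) = -\gamma\,\mH_{\gamma}^{-1}\vx_0$; Lemma~\ref{lem:fd-bias-variance} combined with inserting $\mH_\gamma \mH_\gamma^{-1}=\eye{d}$ gives $\bias(\hat{\vx}) = \mM\,\mH_{\gamma}^{-1}\vx_0$; and Lemma~\ref{lem:spectral-bound} brackets the extreme eigenvalues of $\mM$ between $\gamma'=\gamma-\alpha\Delta_k$ and $\gamma^2/\gamma'$. Thus almost all of the heavy lifting for the theorem has already been done: what remains is a calibration argument in which $m$ is chosen large enough that the ratio $\gamma'/\gamma$ translates into the announced $1-\theta$ factor.

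First I would combine the two bias formulas, substituting $\mH_{\gamma}^{-1}\vx_0 = -\bias(\opt{\vx})/\gamma$ into $\bias(\hat{\vx}) = \mM\,\mH_{\gamma}^{-1}\vx_0$, to obtain $\norm{\bias(\hat{\vx})}{2} = \frac{1}{\gamma}\,\norm{\mM\, \bias(\opt{\vx})}{2}$. Applying the two-sided bound $\norm{\mM \vu}{2}\in[\lambda_{\min}(\mM)\norm{\vu}{2},\lambda_{\max}(\mM)\norm{\vu}{2}]$ with $\vu=\bias(\opt{\vx})$, inserting $\lambda_{\min}(\mM)\ge\gamma'$ and $\lambda_{\max}(\mM)\le\gamma^2/\gamma'$ from Lemma~\ref{lem:spectral-bound}, and squaring, produces
\[
\norm{\bias(\hat{\vx})}{2}^{2} \in \left[\left(\frac{\gamma'}{\gamma}\right)^{2}\norm{\bias(\opt{\vx})}{2}^{2},\ \left(\frac{\gamma}{\gamma'}\right)^{2}\norm{\bias(\opt{\vx})}{2}^{2}\right].
\]
Both endpoints now depend only on the ratio $\gamma'/\gamma$, so the theorem reduces to proving the single inequality $\gamma'/\gamma \ge \sqrt{1-\theta}$.

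The final step is a calibration of $m$. Unwinding $\gamma'=\gamma-\alpha\Delta_k$ with $\alpha=1/(m-k)$, the inequality $\gamma-\alpha\Delta_k\ge\sqrt{1-\theta}\,\gamma$ rearranges directly to $\alpha\Delta_k\le (1-\sqrt{1-\theta})\gamma$, i.e., $m \ge k + \Delta_k/\big((1-\sqrt{1-\theta})\gamma\big)$, which is exactly the hypothesis of the theorem. In particular $\gamma'>0$, so the precondition of Lemma~\ref{lem:spectral-bound} is met and the whole chain of inequalities is justified.

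The main obstacle I would want to scrutinize is the step that lifts Lemma~\ref{lem:spectral-bound}'s \emph{eigenvalue} bounds on $\mM=(\hat{\mH}^{-1}\covMat{\mA}-\eye{d})\mH_\gamma$ into a two-sided \emph{Euclidean-norm} bound on $\norm{\mM\vu}{2}$: since $\mM$ is not manifestly symmetric, a Rayleigh-quotient argument does not apply verbatim, and care is needed to ensure that the stated eigenvalue bounds actually control $\norm{\mM\vu}{2}/\norm{\vu}{2}$ in both directions (likely via a similarity transform using $\hat{\mH}^{\pm 1/2}$, or by translating the bounds of Lemma~\ref{lem:spectral-bound} into singular-value inequalities for the relevant PSD factors). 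Beyond this single verification, every remaining step is routine algebra and the theorem follows by assembling the three preceding lemmas.
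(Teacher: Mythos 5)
Your proposal follows essentially the same route as the paper's own proof: express $\bias(\hat{\vx})=\mM\,\mH_\gamma^{-1}\vx_0$, apply the eigenvalue bounds of Lemma~\ref{lem:spectral-bound} to the vector $\mH_\gamma^{-1}\vx_0$ (equivalently, to $\bias(\opt{\vx})$ up to the scalar $-\gamma$), square, and observe that the hypothesis on $m$ forces $\gamma'=\sqrt{1-\theta}\,\gamma$, which yields exactly the stated $(1-\theta)$ and $1/(1-\theta)$ factors. The subtlety you flag at the end — that $\mM$ is not manifestly symmetric, so passing from eigenvalue bounds to two-sided control of $\norm{\mM\vu}{2}/\norm{\vu}{2}$ needs justification — is a genuine point, but it is a feature of the paper's own argument (which invokes its Lemma on extremal distortion for symmetric positive definite matrices without verifying symmetry of $\mM$), not a gap introduced by your proof.
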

\vspace{-5mm}
Dealing with the variance terms is slightly simpler than the bias terms.
This is thanks to the fact that, 
$\hat{\mH} = \covMat{\mB} + \gamma \eye{d}$ is symmetric positive definite
so we can exploit the \emph{\lowner{}} ordering over such matrices.
Expressing the variance of the weights $\hat{\vx}$ is simple and follows the 
same approach as for the optimal weights $\opt{\vx}$.
Subsequently, we need only invoke standard properties of the \lowner{} ordering
to establish bounds on $\var(\hat{\vx})$ compared to $\var(\opt{\vx})$.
One final nice property of the \lowner{} ordering is that the trace also 
respects the precedence.
That is, if $\mX \preceq \mY$ then $\tr(\mX) \le \tr(\mY)$.
This is the final piece to obtain:
\begin{theorem}
Under the same assumptions as Theorem \ref{thm:bias-bound-main},
$\tr(\var(\hat{\vx})) \in \left[ (1-\theta)\tr(\var(\opt{\vx})),
\frac{1}{1-\theta}\tr(\var(\opt{\vx}))\right]$.
\label{thm:variance-bound-main}
\end{theorem}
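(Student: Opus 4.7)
The plan is to mirror the bias proof but work in \lowner{} (PSD) order, since $\var(\hat{\vx})$ is a symmetric PSD matrix; the trace bound will then fall out of the fact that $\tr$ is monotone under $\preceq$. By \Cref{lem:fd-bias-variance} and \Cref{lem:optimal-bias-variance}, $\var(\hat{\vx}) = \sigma^2 \hat{\mH}^{-1} \covMat{\mA} \hat{\mH}^{-1}$ and $\var(\opt{\vx}) = \sigma^2 \mH_{\gamma}^{-1} \covMat{\mA} \mH_{\gamma}^{-1}$ have identical structural form with $\hat{\mH}$ in the role of $\mH_\gamma$, so it suffices to obtain a tight \lowner{} comparison between $\hat{\mH}$ and $\mH_\gamma$ and then propagate it through the congruence $\mX \mapsto \mX \covMat{\mA} \mX$.

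For the comparison, \Cref{thm:fd-guarantee-main} together with the standard fact that \acrshort{fd} produces $\covMat{\mB} \preceq \covMat{\mA}$ gives $0 \preceq \covMat{\mA} - \covMat{\mB} \preceq \alpha\Delta_k \eye{d}$. Adding $\gamma\eye{d}$ and combining with $\gamma\eye{d} \preceq \mH_\gamma$ and the choice $\alpha\Delta_k = (1-\sqrt{1-\theta})\gamma$ inherited from \Cref{thm:bias-bound-main} yields the sandwich $\sqrt{1-\theta}\,\mH_\gamma \preceq \hat{\mH} \preceq \mH_\gamma$; inverting (operator antimonotonicity of $x \mapsto x^{-1}$ on PD matrices) flips both inequalities to $\mH_\gamma^{-1} \preceq \hat{\mH}^{-1} \preceq \tfrac{1}{\sqrt{1-\theta}}\mH_\gamma^{-1}$. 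Pushing these two-sided bounds through the variance expression, by using $\covMat{\mA} \preceq \mH_\gamma \preceq \tfrac{1}{\sqrt{1-\theta}}\hat{\mH}$ for the middle factor and the \lowner{} bound on one outer $\hat{\mH}^{-1}$, should yield $\var(\hat{\vx}) \preceq \tfrac{1}{1-\theta}\var(\opt{\vx})$; running the chain with the reversed inequalities gives $\var(\hat{\vx}) \succeq (1-\theta)\var(\opt{\vx})$. Taking the trace of both \lowner{} inequalities completes the proof.

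The delicate part is the final propagation: \lowner{} order is not in general preserved under the quadratic map $\mX \mapsto \mX \covMat{\mA} \mX$ when $\mX$ and $\covMat{\mA}$ do not commute, so the substitutions must be paired carefully and may require inserting congruences by $\covMat{\mA}^{1/2}$ or $\mH_\gamma^{\pm 1/2}$ so that each replacement is justified by the basic rule $\mA \preceq \mB \Rightarrow \mC \mA \mC^{\top} \preceq \mC \mB \mC^{\top}$. That is where most of the bookkeeping of a formal write-up will live, and it is precisely the reason the bias argument (which only needs eigenvalue bounds on a single operator $\mM$) appears simpler to express than this variance counterpart.
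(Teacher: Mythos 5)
Your overall strategy is the same as the paper's proof of Theorem~\ref{thm:variance-bound}: write $\var(\hat{\vx}) = \sigma^2 \hat{\mH}^{-1}\covMat{\mA}\hat{\mH}^{-1}$, sandwich $\hat{\mH}$ between $\sqrt{1-\theta}\,\mH_{\gamma}$ and $\mH_{\gamma}$ in the \lowner{} order, push the sandwich through the quadratic form, and finish with monotonicity of the trace. You are also right that the only delicate point is that $\mX \mapsto \mX\covMat{\mA}\mX$ is not \lowner{}-monotone. The problem is that the concrete repair you sketch does not close this gap. Bounding the \emph{middle} factor via $\covMat{\mA} \preceq \mH_{\gamma} \preceq \tfrac{1}{\sqrt{1-\theta}}\hat{\mH}$ and absorbing one outer $\hat{\mH}^{-1}$ gives $\hat{\mH}^{-1}\covMat{\mA}\hat{\mH}^{-1} \preceq \tfrac{1}{\sqrt{1-\theta}}\hat{\mH}^{-1} \preceq \tfrac{1}{1-\theta}\mH_{\gamma}^{-1}$, but the target is $\tfrac{1}{1-\theta}\mH_{\gamma}^{-1}\covMat{\mA}\mH_{\gamma}^{-1}$, which is \emph{smaller} than $\tfrac{1}{1-\theta}\mH_{\gamma}^{-1}$ (again because $\covMat{\mA} \preceq \mH_{\gamma}$), and the discrepancy is unbounded when $\covMat{\mA}$ has small eigenvalues. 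Congruence, $\mX \preceq \mY \Rightarrow \mC\mX\mC^{\top} \preceq \mC\mY\mC^{\top}$, needs the \emph{same} $\mC$ on both sides; here the outer factors must change from $\hat{\mH}^{-1}$ to $\mH_{\gamma}^{-1}$ while the middle factor stays $\covMat{\mA}$, and no pairing of the substitutions you list produces that configuration. So as written the upper bound is not established (the lower bound has the mirror-image problem).

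For comparison, the paper runs the chain $\mH_{\gamma}^{-1} \preceq \hat{\mH}^{-1} \preceq \mH_{\gamma'}^{-1} \preceq \tfrac{\gamma}{\gamma'}\mH_{\gamma}^{-1}$ with $\gamma' = \sqrt{1-\theta}\,\gamma$ and conjugates $\covMat{\mA}$ by each term, so that the final, constant-losing comparison is between $\mH_{\gamma'}^{-1}\covMat{\mA}\mH_{\gamma'}^{-1}$ and $\mH_{\gamma}^{-1}\covMat{\mA}\mH_{\gamma}^{-1}$. That step is sound precisely because $\mH_{\gamma'}$, $\mH_{\gamma}$ and $\covMat{\mA}$ commute (they share the right singular vectors of $\mA$), so the inequality diagonalizes to the scalar statement $\sigma_i^2/(\sigma_i^2+\gamma')^2 \le (\gamma^2/\gamma'^2)\,\sigma_i^2/(\sigma_i^2+\gamma)^2$. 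This commutativity is the structural ingredient your argument is missing, and it is unavailable if you try to compare $\hat{\mH}$ to $\mH_{\gamma}$ directly, since $\covMat{\mB}$ need not commute with $\covMat{\mA}$. (To be fair, the paper's own write-up asserts the intermediate monotonicity $\hat{\mH}^{-1}\covMat{\mA}\hat{\mH}^{-1} \preceq \mH_{\gamma'}^{-1}\covMat{\mA}\mH_{\gamma'}^{-1}$ without justification as well, so you have correctly located the crux; but identifying where the difficulty lives is not the same as resolving it, and the specific route you propose provably gives only the weaker bound $\tfrac{\sigma^2}{1-\theta}\mH_{\gamma}^{-1}$.)
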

Theorems \ref{thm:bias-bound-main} \& \ref{thm:variance-bound-main} immediately
entail the same guarantee on the \acrshort{mse}.
Therefore;
\begin{theorem}
Under the same assumptions as Theorem \ref{thm:bias-bound-main},
    $(1-\theta) \mse(\opt{\vx}) \le 
    \mse(\hat{\vx}) \le 
    \frac{1}{1-\theta}\mse(\opt{\vx}).$
\label{thm:mse-bound-main}
\end{theorem}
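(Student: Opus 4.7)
The plan is to derive the \acrshort{mse} bound as an immediate consequence of \Cref{thm:bias-bound-main} and \Cref{thm:variance-bound-main} by invoking the standard bias--variance decomposition. Recall that for any estimator $\vz$ of $\vx_0$ under the linear model \eqref{eq:linear_model}, one has $\mse(\vz) = \|\bias(\vz)\|_2^2 + \tr(\var(\vz))$. I would apply this to both $\opt{\vx}$ and $\hat{\vx}$, so that the task reduces to controlling each of the two non-negative terms separately and then summing the bounds.

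Concretely, first I would use \Cref{thm:bias-bound-main} to sandwich the bias term as $(1-\theta)\|\bias(\opt{\vx})\|_2^2 \le \|\bias(\hat{\vx})\|_2^2 \le \tfrac{1}{1-\theta}\|\bias(\opt{\vx})\|_2^2$. Next I would apply \Cref{thm:variance-bound-main} to get the analogous sandwich $(1-\theta)\tr(\var(\opt{\vx})) \le \tr(\var(\hat{\vx})) \le \tfrac{1}{1-\theta}\tr(\var(\opt{\vx}))$. Since both the multiplicative factors $(1-\theta)$ and $1/(1-\theta)$ are shared across the bias and variance inequalities, and all four quantities are non-negative, termwise addition preserves the inequalities and gives $(1-\theta)\mse(\opt{\vx}) \le \mse(\hat{\vx}) \le \tfrac{1}{1-\theta}\mse(\opt{\vx})$, as desired.

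There is essentially no obstacle here: the sole subtlety is verifying that the hypotheses of the two preceding theorems coincide, which they do by assumption (the same row count $m = \|\mA - \mA_k\|_F^2/((1-\sqrt{1-\theta})\gamma) + k$ and the same $\theta \in (0,1)$). Thus the statement follows without any additional spectral analysis, and the \acrshort{mse} inherits exactly the $(1-\theta)$ relative-error guarantee enjoyed separately by the bias and variance of the \acrshort{fdrr} estimator.
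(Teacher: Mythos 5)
Your proof is correct and matches the paper's argument: the paper likewise obtains the MSE bound by combining the bias and variance sandwiches via the bias--variance decomposition and termwise addition (noting, as you implicitly do, that the appendix's variance lower bound of $\tr(\var(\opt{\vx}))$ only strengthens the needed $(1-\theta)\tr(\var(\opt{\vx}))$ bound). No further comment is needed.
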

\vspace{-2mm}
All proofs for this section are in Appendix 
\ref{sec:bias-variance-fd}, including the extension to obtain a
tighter approximation guarantee with \acrshort{rfd} (\Cref{app:bias-variance-rfd}).
\vspace{-3mm}
\subsection{Experimental Evaluation}
\label{sec:bias-variance-experiments}
\begin{figure*}[!ht]
    {\centering
    \makebox[0pt]{\includegraphics[width=1.2\textwidth]{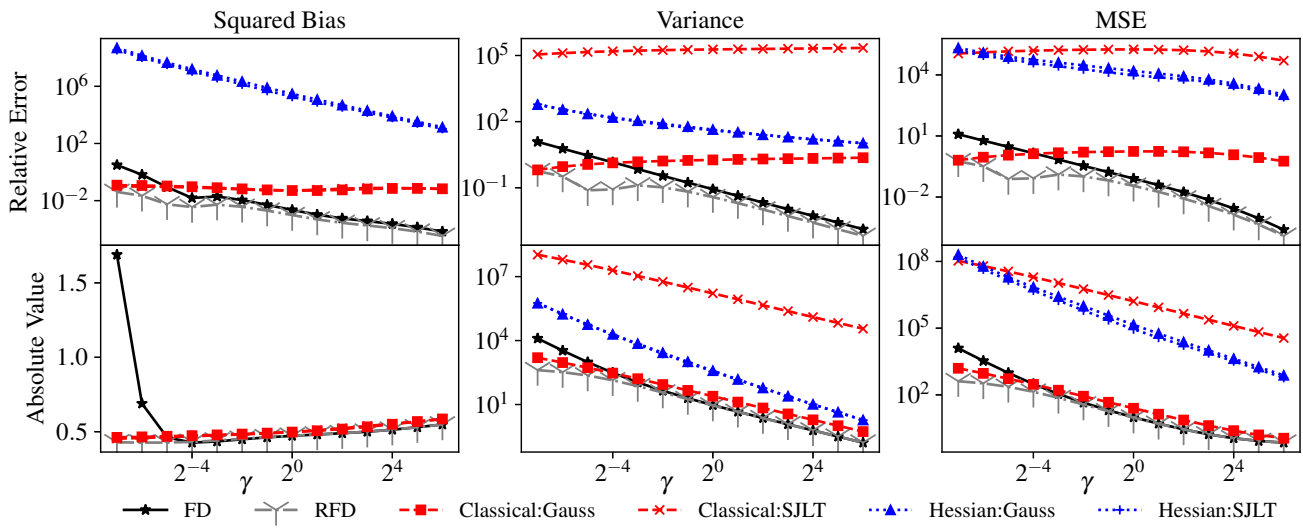}}
    \caption{
    The three metrics
    vs $\gamma$ for $R_2 = \lfloor 0.25 d + 0.5 \rfloor$.
    \acrshort{fdrr}  performs worse than \acrshort{rfdrr}
    at small values of $\gamma$ but then begins to improve.
    Holistically, 
    \acrshort{rfdrr} dominates; \acrshort{fdrr} is next best for large 
    enough $\gamma$;
    the randomized methods each have their deficiencies in bias, variance, or 
    scalability (Gaussian sketch).}
    \label{fig:bias-variance-mse-high-eff-rank}
    }
\end{figure*}
\vspace{-3mm}

\textbf{Competing Methods.}
We compare the \emph{deterministic methods} (Robust) Frequent 
Directions Ridge Regression (R)\acrshort{fdrr} against the 
randomized \emph{Classical} and \emph{Hessian} sketches (Equations 
\eqref{eq:classical-sketch-def}, \eqref{eq:hessian-sketch-def}).
The two methods for generating $\mS$ are \emph{Gaussian} and 
\acrshort{sjlt} with a sparsity of $s=10$.
We refer to the competing methods by \textsf{SketchModel:SketchType},
e.g. Classical:Gaussian.

\textbf{Data Generation.}
We test on synthetic data generated in a similar fashion to 
\cite{shi2020deterministic}.
The data size is $(n,d)=(2^{10},2^9)$ and has \emph{effective rank}
$R = \lfloor r d + 0.5 \rfloor$ for $r \in (0,1)$.
This ensures that most of the energy is concentrated on roughly the top 
$r$-fraction of the directions and is also used to fix the sparsity of the underlying
(and unobserved) ground truth vector $\vx_0$ which generates the data.
We take $\vy = \mA \vx_0 + \veps$ with every $\eps_i \sim \Normal{0}{2^2}$.
Further details for generating the data are in Appendix 
\ref{sec:appendix-miscellaneous}.

\textbf{Experimental Setup.}
We choose $R_1 = \lfloor 0.15 d + 0.5 \rfloor$ and 
$R_2 = \lfloor 0.25 d + 0.5 \rfloor$ so that $R_2$ is of higher effective
rank.
This parameter setting is used to generate the linear model as described
above.
Then we plot the analytical expressions for bias, variance and 
\acrshort{mse} for (R)\acrshort{fdrr} (\Cref{sec:statistical-fdrr}) 
and the randomized
methods.
We set $m = 256$ and vary $\gamma \in \{2^{-8},\dots,2^6\}$ for all
methods.
The random methods are tested 10 times with the median results being reported.
Only one trial is necessary for the deterministic methods.
For the three metrics there is an optimal value $u^*$ which is a function of $\opt{\vx}$
estimated by $\hat{u}$, a function of $\hat{\vx}$.
We measure the \emph{relative error} $|\hat{u} - u^*|/u^*$ and
the \emph{absolute value} of the estimate $\hat{u}$.
Results are reported in Figure \ref{fig:bias-variance-mse}.

\textbf{Findings: $R_1$.}
Across all $3$ metrics, both deterministic methods dominate the randomized methods.
At this projection dimension $m$, \acrshort{rfdrr} is marginally better than 
\acrshort{fdrr}, but the difference in performance negligible.
The relative error of all three metrics is consistently many orders of 
magnitude better than randomized methods.
For the bias, both Classical:Gaussian and Classical:\acrshort{sjlt} method are the most 
competitive;
in absolute terms they are not too far from 
$\|\bias(\opt{\vx})\|^2$ yet
their relative error is much weaker than the deterministic methods.
The Classical:Gaussian sketch appears most consistently competitive to (R)\acrshort{fdrr},
however, this is not scalable for large data streams.
Classical:\acrshort{sjlt} appears competitive for bias but has the worst variance.
On the other hand, both Hessian sketch methods substantially overestimate the 
bias yet their variance is sandwiched between the variance of Classical:Gaussian and Classical:\acrshort{sjlt}.
In the Hessian sketch model, there is little change observed between using Gaussian or \acrshort{sjlt}.

\textbf{Findings: $R_2$.}
The sketch dimension has been maintained at $m=256$.
At this effective dimension we see differences in the deterministic methods as shown in 
Figure \ref{fig:bias-variance-mse-high-eff-rank}.
The relative errors are higher than in Figure \ref{fig:bias-variance-mse} due to the 
increased complexity of the ridge regression problem ($R_2 > R_1$).
\acrshort{rfdrr} remains consistently the best performing sketch across all $3$ metrics.
In relative error,
\acrshort{fdrr} performs up to roughly $2$ orders of magnitude worse than \acrshort{rfdrr} 
and roughly $1$ order of magnitude worse than Classical:Gaussian in bias and variance
up to $\gamma \le 2^{-4}$.
However, for $\gamma > 2^{-4}$ \acrshort{fdrr} begins to perform similarly to 
\acrshort{rfdrr} in both bias and variance.
For the randomized sketches, Classical:Gaussian again looks competitive for small $\gamma$,
yet once roughly $\gamma > 2^{-4}$, there appears to be no improvement in relative error 
and its utility appears to wane, in contrast to (R)\acrshort{fdrr}.
As in Figure \ref{fig:bias-variance-mse}, we observe the same deficiencies with 
Classical:\acrshort{sjlt} and both Hessian sketch methods.

\textbf{Summary.} Across all $3$ metrics and in both the low ($R_1$) and higher ($R_2$)
effective rank regression problems, \acrshort{rfdrr} is the standout sketch method.
For less complex problems ($R_1$), \acrshort{fdrr} is competitive with \acrshort{rfdrr},
however when the complexity of the problem is increased ($R_2$), this behaviour becomes 
dependent on the regularisation.
For the randomized methods, Classical:Gaussian is the most competitive with the 
deterministic methods, but this is fraught with scalability issues as it takes $O(nd^2)$ 
time to generate $\mS \mA$.
When more scalable sketches are used instead of the Gaussian, or the Hessian Sketch 
approach is used, there is noticeable performance degradation.

\vspace{-3mm}
\section{Iterative Frequent Directions Ridge Regression}
\label{sec:iterative-fdrr}
\vspace{-3mm}

\begin{algorithm}[!ht]
\SetAlgoLined
\KwIn{Data $\mA \in \R^{n \times d}$, targets $\vb \in \R^{n}$, 
regularisation $\gamma > 0$, sketch size $m$, 
num. iterations $t \ge 1$,
Method $\textsf{Sk} \in \{\textsf{FD},\textsf{RFD}\}$}
\KwOut{Weights $\hat{\vx} \in \R^{d}$}
$\xfd{\mB,\rho}{\mA}$ 
\Comment{$\rho=0$ if $\textsf{Sk} = \textsf{FD}$ else is nonzero} \\
$\hat{\mH} = \inv{\covMat{\mB}+(\gamma+\rho) \eye{d}},
\vc = \mA^{\top}\vb, \iter{\vx}{0} = \vzero_d$ \\ 
\For{$i=1:t$}{
$\iter{\vx}{i+1} = \iter{\vx}{i} - \hat{\mH}^{-1}
\mA^{\top} \left(\mA \iter{\vx}{t} - \vb\right)$
}
$\hat{\vx} = \iter{\vx}{t}$
 \caption{Iterative Frequent Directions Ridge Regression \acrshort{ifdrr}}
 \label{alg:ifdrr}
\end{algorithm}

\cite{shi2020deterministic}
guarantee a `mid-precision' approximation $\hat{\vx}$ to $\opt{\vx}$.
By that we mean, maintaining 
$m \ge k + \|\mA - \mA_k\|_F^2/\gamma\zeta$ rows in the sketch ensures
error
$\|\hat{\vx} - \opt{\vx}\|_2 \le \zeta \| \opt{\vx}\|_2$.
Thus the sketch grows according to $O(1/\zeta)$ for $\zeta$ accuracy;
this is fine if $\zeta$ is not too small, but
if an application requires the error of $\|\hat{\vx} - \opt{\vx}\|_2$ to
be very small (say $10^{-8}$ or less), then this behaviour is not ideal.

The estimate $\hat{\vx}$ can be refined to better approximate $\opt{\vx}$ 
through iterative gradient steps at the cost of
further passes over the data.
Our proposal (\Cref{alg:ifdrr}) is a Newton-type algorithm that exploits scalable approximation to the Hessian $\mH_{\gamma}$.
Our approach here is reminiscent of many other iterative sketching algorithms 
\citep{pilanci2016iterative,chowdhury2018iterative}.
In common with both of them is that our summary $\mB$ has $o(d)$ rows, a substantial saving over explicitly using the $d \times d$ size
Hessian matrix.
The structure of $\hat{\mH}$ avoids the $O(d^3)$ time cost 
for inversion due to the trick of \cite{shi2020deterministic} or 
Woodbury's Identity.

To prove correctness of Algorithm \ref{alg:ifdrr} we closely follow typical 
proofs for gradient descent-type algorithms.
A key property we need is that the gradient of $f(\vx)$ is 
$\nabla f (\vx) = \mH_{\gamma} ( \vx - \opt{\vx})$ (Lemma \ref{lem:grad-expression},
Appendix \ref{sec:theory-iterative-fdrr}).
Then we are able to analyse the sequence of iterates relative to their distance
from $\opt{\vx}$.
Crucially, we obtain:
\begin{align}
\iter{\vx}{t+1} &= \iter{\vx}{t} - \hat{\mH}^{-1} \nabla f(\iter{\vx}{t}) 
                    \label{eq:ihs_update}\\ 
                &= \iter{\vx}{t} - \hat{\mH}^{-1} \mH_{\gamma}
                (\iter{\vx}{t} - \opt{\vx}).
                \nonumber
\end{align}
Hence, 
$\iter{\vx}{t+1} - \opt{\vx} = 
\left(
\eye{d} - \hat{\mH}^{-1} \mH_{\gamma}
\right)
\left(\iter{\vx}{t} - \opt{\vx}\right).$
Therefore, to show convergence it is enough for us to establish the following 
lemma:
\begin{lemma}
If $ \gamma > 2 {\|\mA - \mA_k\|_F^2}/(m - k),$
then $\norm{\eye{d} - \hat{\mH}^{-1}\mH_{\gamma}}{2} < 1$
\label{lem:spectral-norm-main}
\end{lemma}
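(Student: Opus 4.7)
The plan is to reduce the spectral norm in question to an elementary product of two factors that are each easy to bound: the operator norm of $\hat{\mH}^{-1}$ and the operator norm of the difference $\mA^{\top}\mA - \mB^{\top}\mB$, which is exactly what Theorem \ref{thm:fd-guarantee-main} controls.

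First I would rewrite the quantity of interest. Since $\hat{\mH} = \covMat{\mB} + \gamma\eye{d}$ and $\mH_{\gamma} = \covMat{\mA} + \gamma \eye{d}$, their difference is $\mH_{\gamma} - \hat{\mH} = \covMat{\mA} - \covMat{\mB}$, independent of $\gamma$. Factoring $\hat{\mH}^{-1}$ on the left gives
\begin{equation*}
\eye{d} - \hat{\mH}^{-1}\mH_{\gamma} \;=\; \hat{\mH}^{-1}\bigl(\hat{\mH} - \mH_{\gamma}\bigr) \;=\; -\,\hat{\mH}^{-1}\bigl(\covMat{\mA} - \covMat{\mB}\bigr).
\end{equation*}
Taking the spectral norm and applying submultiplicativity yields
$\norm{\eye{d} - \hat{\mH}^{-1}\mH_{\gamma}}{2} \le \norm{\hat{\mH}^{-1}}{2}\cdot\norm{\covMat{\mA} - \covMat{\mB}}{2}$.

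Next I would bound each factor separately. For the first, $\covMat{\mB}$ is positive semidefinite, so the eigenvalues of $\hat{\mH}$ are at least $\gamma$, giving $\norm{\hat{\mH}^{-1}}{2} \le 1/\gamma$. For the second, I invoke Theorem \ref{thm:fd-guarantee-main} in the \acrshort{fd} case ($\delta=0$, $\alpha' = \alpha = 1/(m-k)$), which yields $\norm{\covMat{\mA} - \covMat{\mB}}{2} \le \alpha \Delta_k = \|\mA - \mA_k\|_F^2/(m-k)$. Combining the two gives
\begin{equation*}
\norm{\eye{d} - \hat{\mH}^{-1}\mH_{\gamma}}{2} \;\le\; \frac{\|\mA - \mA_k\|_F^2}{(m-k)\,\gamma}.
\end{equation*}
Under the hypothesis $\gamma > 2\|\mA - \mA_k\|_F^2/(m-k)$, this right-hand side is strictly less than $1/2$, and in particular strictly less than $1$, as required. (The factor $2$ in the hypothesis buys a contraction constant $\zeta < 1/2$, which is what the iterative scheme of Algorithm \ref{alg:ifdrr} actually needs to converge geometrically.)

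There is no real obstacle: the only subtle point is that $\hat{\mH}^{-1}\mH_{\gamma}$ is a product of two SPD matrices and is generally not symmetric, so one cannot simply read the spectral norm off from eigenvalues. The factorization above circumvents this by exposing the nonsymmetric expression as $\hat{\mH}^{-1}$ multiplied by a fixed error matrix, after which submultiplicativity of the spectral norm suffices. The \acrshort{rfd} variant in Algorithm \ref{alg:ifdrr} (where the effective regulariser is $\gamma+\rho$ and $\alpha'=\alpha/2$) goes through by the same argument with the constants improved by a factor of $2$.
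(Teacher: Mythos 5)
Your proof is correct, and it takes a genuinely different and in fact sharper route than the paper. The paper symmetrizes: it notes $\hat{\mH}^{-1}\mH_{\gamma}$ is similar to $\mE = \hat{\mH}^{-1/2}\mH_{\gamma}\hat{\mH}^{-1/2}$, uses the L\"owner sandwich $(1-q)\mH_{\gamma} \preceq \hat{\mH} \preceq \mH_{\gamma}$ with $q = \alpha\Delta_k/\gamma$ to place all eigenvalues of $\mE$ in $[1, \tfrac{1}{1-q}]$, and concludes the worst displacement is $\tfrac{q}{1-q}$, which is $<1$ exactly when $q<\tfrac12$ --- hence the factor $2$ in the hypothesis. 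Your factorization $\eye{d} - \hat{\mH}^{-1}\mH_{\gamma} = \hat{\mH}^{-1}(\covMat{\mB}-\covMat{\mA})$ plus submultiplicativity gives the bound $q$ directly, which is strictly smaller than $\tfrac{q}{1-q}$; under your argument the hypothesis could be weakened to $\gamma > \alpha\Delta_k$ (the stated hypothesis merely buys you a constant $<\tfrac12$), and the downstream convergence rate in Theorem \ref{thm:iterative-convergence-main} would improve from $(b/(1-b))^t$ to $b^t$. Your approach also sidesteps a delicacy in the paper's proof: $\hat{\mH}^{-1}\mH_{\gamma}$ is a product of SPD matrices and is not symmetric, so similarity preserves its eigenvalues (hence the spectral radius) but not the spectral norm; the paper's passage from $\norm{\eye{d}-\hat{\mH}^{-1}\mH_{\gamma}}{2}$ to $\norm{\eye{d}-\mE}{2}$ really needs to be read as a statement about the contraction in the $\hat{\mH}$-weighted norm, whereas you bound the Euclidean operator norm outright. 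What the paper's route buys in exchange is the explicit two-sided eigenvalue localization of the preconditioned Hessian, the standard currency for analyzing preconditioned iterations, and a template that it reuses verbatim for the \acrshort{rfd} variant. Your closing remark that the \acrshort{rfd} case goes through with the constant halved is also correct, since $\norm{\covMat{\mA}-(\covMat{\mB}+\delta\eye{d})}{2}\le \alpha\Delta_k/2$ and $\covMat{\mB}+\delta\eye{d}\succeq \mathbf{0}$ still gives $\norm{(\covMat{\mB}+(\delta+\gamma)\eye{d})^{-1}}{2}\le 1/\gamma$.
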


\begin{remark}
We claim that the assumption on $\gamma$ in Lemma \ref{lem:spectral-norm-main}
is valid.
Since $m - k \ge 1$ the assumption
asks that $\gamma$ is some fraction of the tail or residual of the mass.
As ridge regression is intended to apply in the high-dimensional setting
with much redundancy in the feature space, it is typical to assume that the 
regularization exceeds the tail in such a fashion.
\end{remark}
The proof of Lemma \ref{lem:spectral-norm-main} is presented in Appendix 
\ref{sec:theory-iterative-fdrr}.
It amounts to manipulating the \acrshort{fd} guarantee of
Theorem \ref{thm:fd-guarantee-main} alongside properties of the \lowner{} ordering.
The starting point is to analyse the spectrum of 
$\eye{d} - \hat{\mH}^{-1} \mH_{\gamma}$.
By matrix similarity we instead analyse 
$\eye{d} - \hat{\mH}^{-1/2} \mH_{\gamma} \hat{\mH}^{-1/2}$ 
but specifically need the 
extremal eigenvalues of the auxiliary matrix 
$\mE = \hat{\mH}^{-1/2} \left( \covMat{\mA} + \gamma \eye{d}\right)\hat{\mH}^{-1/2}.$ 

Crucially, we show that all $\lambda_i(\mE) \in [1,\frac{1}{1-q}]$ 
where $q = \frac{\|\mA - \mA_k\|_F^2}{(m-k)\gamma}$.
This implies that the largest distortion $|1 - \lambda_i(\mE)|$ occurs
at $|1 - \frac{1}{1-q}|$.
Recall that for convergence we required $\|\eye{d} - \hat{\mH}^{-1}\mH\|_2 < 1$
which is satisfied provided $|1 - \frac{1}{1-q}| < 1$.
Hence, we need $q < 1/2$ which is true by the assumption of Lemma 
\ref{lem:spectral-norm-main}.
Finally, we have the convergence theorem which follows by combining all of the 
above pieces.
Details can be found in Appendix \ref{sec:theory-iterative-fdrr}.
\begin{theorem}
\label{thm:iterative-convergence-main}
Let $b \in (0,1/2)$, $\alpha=\nicefrac{1}{m-k}, \Delta_k = \|\mA - \mA_k\|_F^2$
and suppose that $\alpha \Delta_k = b\gamma$.
The iterative sketch algorithm for regression with Frequent Directions satisfies
    $\norm{\iter{\vx}{t+1} - \opt{\vx}}{2} \le 
    \left(\nicefrac{b}{1-b}\right)^{t+1}
    \norm{\opt{\vx}}{2}$
\end{theorem}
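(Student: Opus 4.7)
The plan is to convert the one-step identity already derived in the text into a geometric contraction, then telescope it from the zero initialization $\iter{\vx}{0}=\vzero_d$.

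First I would start from the Newton-step recursion $\iter{\vx}{t+1} - \opt{\vx} = \left(\eye{d} - \hat{\mH}^{-1}\mH_{\gamma}\right)\left(\iter{\vx}{t} - \opt{\vx}\right)$, which the text already establishes via \eqref{eq:ihs_update} combined with $\nabla f(\vx) = \mH_{\gamma}(\vx - \opt{\vx})$ (\Cref{lem:grad-expression}). Taking Euclidean norms and invoking submultiplicativity of the spectral norm yields the one-step contraction $\norm{\iter{\vx}{t+1} - \opt{\vx}}{2} \le \norm{\eye{d} - \hat{\mH}^{-1}\mH_{\gamma}}{2}\cdot\norm{\iter{\vx}{t} - \opt{\vx}}{2}$, so the whole theorem reduces to a quantitative sharpening of \Cref{lem:spectral-norm-main}.

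Second, I would upgrade that lemma under the hypothesis $\alpha\Delta_k = b\gamma$. By similarity, the spectrum of $\eye{d} - \hat{\mH}^{-1}\mH_\gamma$ agrees with that of $\eye{d} - \mE$ for $\mE = \hat{\mH}^{-1/2}\mH_\gamma\hat{\mH}^{-1/2}$. \Cref{thm:fd-guarantee-main} furnishes the \lowner{} sandwich $\vzero \preceq \covMat{\mA}-\covMat{\mB} \preceq \alpha\Delta_k\eye{d}$, which on adding $\gamma\eye{d}$ becomes $\mH_\gamma - \alpha\Delta_k\eye{d} \preceq \hat{\mH} \preceq \mH_\gamma$. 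The right inequality immediately forces $\mE \succeq \eye{d}$. The left inequality, combined with $\mH_\gamma \succeq \gamma\eye{d} \succ \alpha\Delta_k\eye{d}$ (valid since $b<1/2$), inverts to $\hat{\mH}^{-1} \preceq (\mH_\gamma - \alpha\Delta_k\eye{d})^{-1}$; conjugating by $\mH_\gamma^{1/2}$ and using that $\mH_\gamma$ and $\mH_\gamma - \alpha\Delta_k\eye{d}$ commute produces $\mE \preceq \mH_\gamma(\mH_\gamma - \alpha\Delta_k\eye{d})^{-1} \preceq (1-b)^{-1}\eye{d}$, where the last step uses that $\lambda \mapsto \lambda/(\lambda - \alpha\Delta_k)$ is decreasing and $\lambda_{\min}(\mH_\gamma) \ge \gamma$. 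Thus $\lambda_i(\mE) \in [1,(1-b)^{-1}]$, the supremum of $|1-\lambda|$ on this interval is $b/(1-b)$, and $\norm{\eye{d} - \hat{\mH}^{-1}\mH_\gamma}{2} \le b/(1-b) < 1$. Telescoping this contraction $t+1$ times and using $\norm{\iter{\vx}{0} - \opt{\vx}}{2} = \norm{\opt{\vx}}{2}$ then delivers the claim.

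The main obstacle is the \lowner{}-order step: translating the asymmetric operator-norm guarantee of \Cref{thm:fd-guarantee-main} into the two-sided sandwich on $\hat{\mH}$ and inverting it so that the spectrum of $\mE$ lands exactly in $[1, (1-b)^{-1}]$ rather than a looser range. Everything else (the recursion, the submultiplicative bound, the geometric telescoping) is routine. A small bookkeeping point is to verify that the same \lowner{} inclusions remain valid in the RFD branch of \Cref{alg:ifdrr}, where the adaptive shift $\rho \ge 0$ replaces $\gamma$ by $\gamma + \rho$ inside $\hat{\mH}$ and can only strengthen the lower bound on $\hat{\mH}$, hence cannot worsen the contraction rate.
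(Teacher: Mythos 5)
Your proposal is correct and follows essentially the same route as the paper's proof: the recursion $\iter{\vx}{t+1}-\opt{\vx}=(\eye{d}-\hat{\mH}^{-1}\mH_{\gamma})(\iter{\vx}{t}-\opt{\vx})$ via Lemma \ref{lem:grad-expression}, reduction by similarity to the spectrum of $\mE=\hat{\mH}^{-1/2}\mH_{\gamma}\hat{\mH}^{-1/2}$, the \lowner{} sandwich from the \acrshort{fd} guarantee placing $\lambda_i(\mE)\in[1,(1-b)^{-1}]$, and geometric telescoping from $\iter{\vx}{0}=\vzero_d$. The only cosmetic difference is that you invert $\hat{\mH}\succeq\mH_{\gamma}-\alpha\Delta_k\eye{d}$ and use monotonicity of $\lambda\mapsto\lambda/(\lambda-\alpha\Delta_k)$, whereas the paper derives the equivalent bound $(1-q)\mH_{\gamma}\preceq\hat{\mH}$ directly before conjugating.
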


Theorem \ref{thm:iterative-convergence-main} demonstrates that 
convergence is governed by an interplay between the regularisation parameter and
the tail of mass.
Let $\beta = \frac{b}{1-b}$ so that 
$\beta = \alpha \Delta_k / (\gamma -  \alpha \Delta_k)$.
When $\beta$ is smaller, decay is faster.
Hence, we can understand the tradeoff between regularisation and
sketch accuracy necessary for convergence.
Decreasing $\beta$ can be achieved by increasing $\gamma$ or by reducing
$\alpha \Delta_k$.
The former regularises the data more (less importance is placed 
on the observed data) while the latter is equivalent to choosing a greater
sketch size.
For example, taking $b=1/4$, Theorem \ref{thm:iterative-convergence-main}
yields $\gamma = 4 \alpha \Delta_k$ so $\beta = 1/3$ \& the error 
decreases by (at worst) a factor of $3$ each iteration.

\begin{remark}
Although $\|\mA - \mA_k\|_F^2$ may not be known (or cannot be estimated)
in advance, setting $k=0$ amounts to taking 
$b = \frac{\|\mA\|_F^2}{m \gamma}$, but this may be too pessimistic in 
practice:
$\|\mA\|_F^2$ can be maintained in small space while observing the stream.
\end{remark}
\begin{figure*}[!ht]
     \centering
     \begin{subfigure}[b]{0.3\textwidth}
         \centering
         \includegraphics[width=\textwidth]{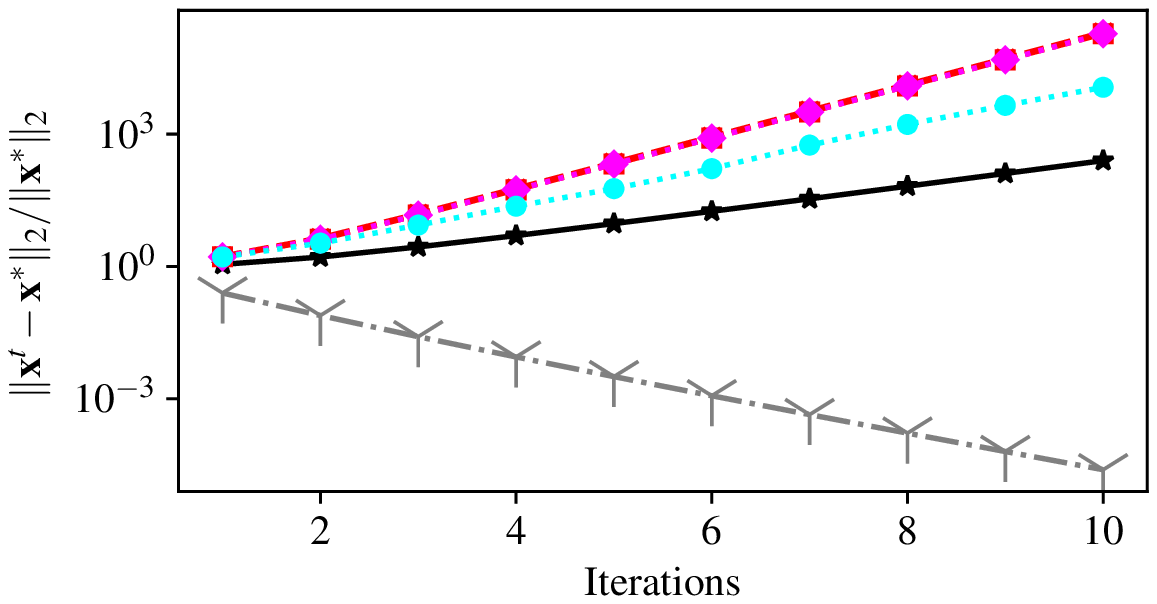}
         \caption{$\gamma=10$}
         \label{fig:w8a-gamma-10}
     \end{subfigure}
     \hfill
     \begin{subfigure}[b]{0.3\textwidth}
         \centering
         \includegraphics[width=\textwidth]{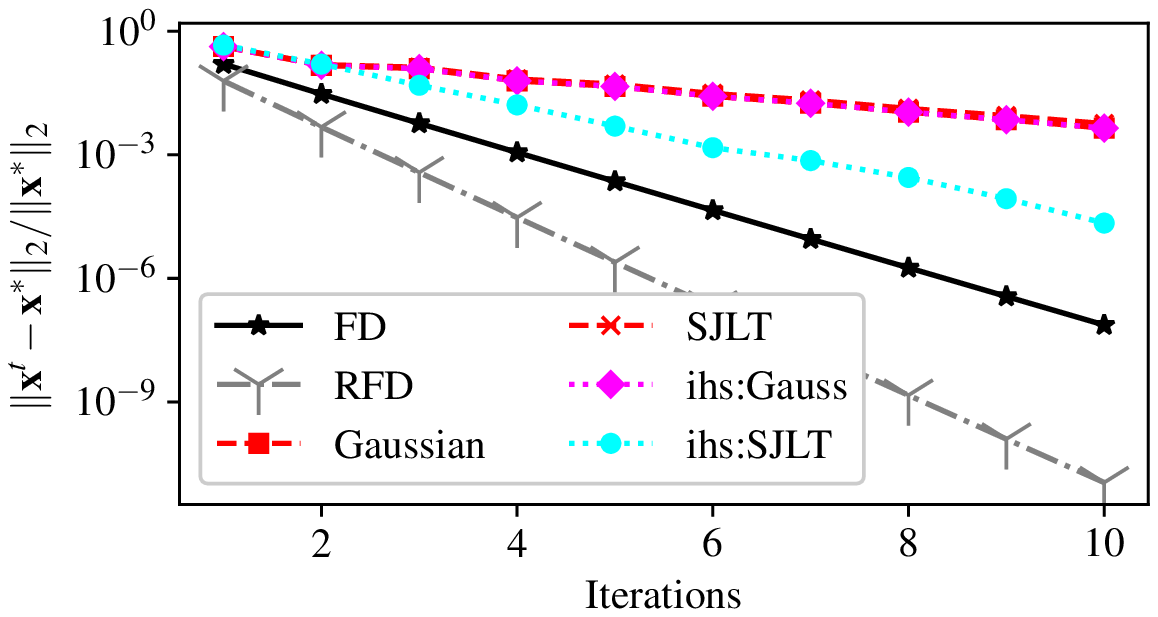}
         \caption{$\gamma=100$}
         \label{fig:w8a-gamma-100}
     \end{subfigure}
     \hfill
     \begin{subfigure}[b]{0.3\textwidth}
         \centering
         \includegraphics[width=\textwidth]{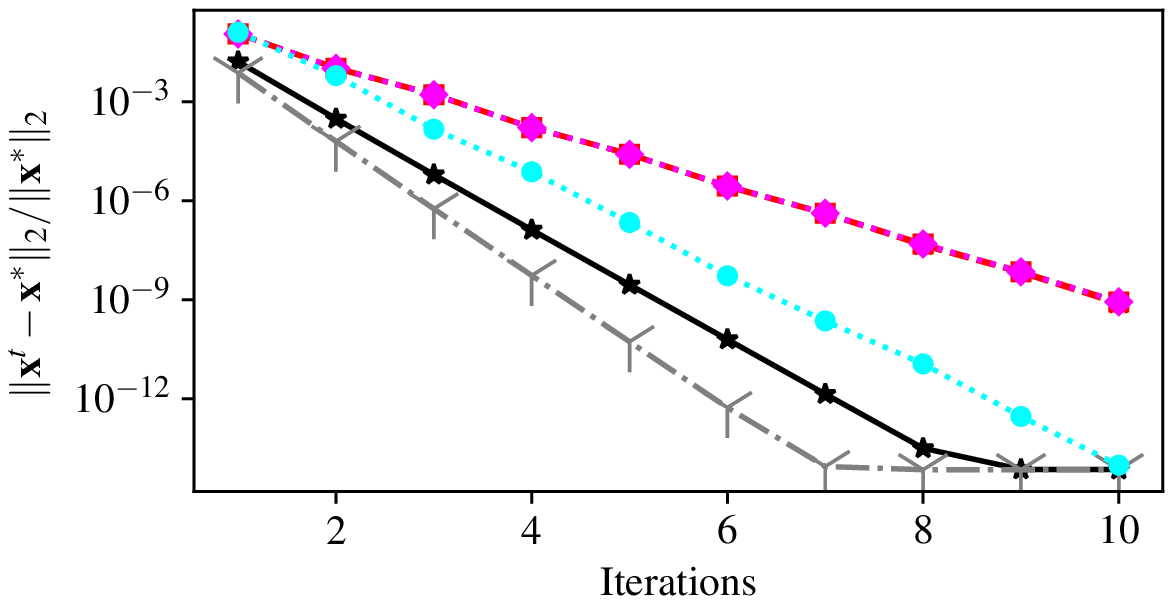}
         \caption{$\gamma=1000$}
         \label{fig:w8a-gamma-1000}
     \end{subfigure}
        \caption{Algorithm \ref{alg:ifdrr} on W8A dataset for $\gamma=10,100,1000$.
        Our approaches, \acrshort{fd} and \acrshort{rfd} outperform the 
        randomized methods.
        The nearest competitor is \acrshort{ihs}:\acrshort{sjlt} which 
        requires a new sketch for every gradient step.
        Our method requires only a single sketch.}
        \label{fig:iterative-sketch-w8a}
\end{figure*}
\vspace{-3mm}
\subsection{Improving Performance with \acrshort{rfd}}
\vspace{-3mm}
One downside of Theorem \ref{thm:iterative-convergence-main} is the fairly
stringent assumption $2 \alpha \Delta_k < \gamma$.
While this is valid, it would be preferable to
weaken this constraint.
Indeed, this is possible due to the improved sketch quality of Robust Frequent Directions.
Theorem \ref{thm:rfd-iterative-convergence-main}
weakens the assumption of $2 \alpha \Delta_k < \gamma$
to ask for $\alpha \Delta_k < \gamma,$ while simultaneously improving the
rate of convergence from $\nicefrac{b}{1-b}$ to $\nicefrac{b}{2-b}$.
Recalling the previous example of taking $b=1/4$, this is an improvement from
$\beta = 1/3$ by Theorem \ref{thm:iterative-convergence-main} to 
$\beta = 1/7$.

\begin{theorem}
\label{thm:rfd-iterative-convergence-main}
Let $b \in (0,1)$ and suppose that $\alpha \Delta_k = b\gamma$.
The iterative sketch algorithm for regression with Robust Frequent Directions satisfies
    $\norm{\iter{\vx}{t+1} - \opt{\vx}}{2} \le 
    \left(\nicefrac{b }{2 - b}\right)^{t+1}
    \norm{\opt{\vx}}{2}.$
\end{theorem}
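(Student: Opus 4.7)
The plan is to follow the same three-step scheme that yields Theorem~\ref{thm:iterative-convergence-main}, but to exploit two features that are special to RFD: (i) the RFD error bound carries a factor $\alpha/2$ instead of $\alpha$, and (ii) the adaptive shift $\delta$ is already folded into $\hat{\mH}=\covMat{\mB}+(\gamma+\delta)\eye{d}$, so no extra regularisation slack is needed to absorb it. These two ingredients together are what let us both weaken the assumption from $2\alpha\Delta_k<\gamma$ to $\alpha\Delta_k<\gamma$ and sharpen the rate from $b/(1-b)$ to $b/(2-b)$.

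\textbf{Step 1: reduce to a spectral bound on $\eye{d}-\hat{\mH}^{-1}\mH_\gamma$.} Exactly as in the derivation preceding Lemma~\ref{lem:spectral-norm-main}, the update \eqref{eq:ihs_update} gives $\iter{\vx}{t+1}-\opt{\vx}=(\eye{d}-\hat{\mH}^{-1}\mH_\gamma)(\iter{\vx}{t}-\opt{\vx})$. Since $\iter{\vx}{0}=\vzero_d$, unrolling and applying submultiplicativity yields
\[
\norm{\iter{\vx}{t+1}-\opt{\vx}}{2}\;\le\;\norm{\eye{d}-\hat{\mH}^{-1}\mH_\gamma}{2}^{\,t+1}\,\norm{\opt{\vx}}{2},
\]
so the task reduces to showing that the spectral norm is at most $b/(2-b)$.

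\textbf{Step 2: convert the RFD operator-norm guarantee into a Löwner sandwich.} By Theorem~\ref{thm:fd-guarantee-main} applied to RFD, $\norm{\covMat{\mA}-(\covMat{\mB}+\delta\eye{d})}{2}\le(\alpha/2)\Delta_k$, and by construction $\hat{\mH}-\mH_\gamma=(\covMat{\mB}+\delta\eye{d})-\covMat{\mA}$, so $\norm{\hat{\mH}-\mH_\gamma}{2}\le(\alpha/2)\Delta_k=(b/2)\gamma$. Using $\gamma\eye{d}\preceq \mH_\gamma$ to absorb the additive perturbation into a multiplicative one then gives
\[
(1-b/2)\,\mH_\gamma\;\preceq\;\hat{\mH}\;\preceq\;(1+b/2)\,\mH_\gamma.
\]

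\textbf{Step 3: read off the eigenvalue window.} By matrix similarity the spectrum of $\eye{d}-\hat{\mH}^{-1}\mH_\gamma$ equals that of $\eye{d}-\hat{\mH}^{-1/2}\mH_\gamma\hat{\mH}^{-1/2}$. Conjugating the Löwner sandwich by $\hat{\mH}^{-1/2}$ places the eigenvalues of $\hat{\mH}^{-1/2}\mH_\gamma\hat{\mH}^{-1/2}$ in $[(1+b/2)^{-1},(1-b/2)^{-1}]$, and the worst-case deviation from $1$ over this interval is
\[
\max\!\left(\frac{b/2}{1+b/2},\;\frac{b/2}{1-b/2}\right)\;=\;\frac{b}{2-b},
\]
which is strictly less than $1$ precisely when $b\in(0,1)$. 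Combining with Step~1 gives the claim.

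\textbf{Main obstacle.} The only delicate point is Step~2: translating the two-sided operator-norm inequality into a relative (Löwner) bound on $\hat{\mH}$ versus $\mH_\gamma$, which is what makes the $b/(2-b)$ contraction appear cleanly out of the matrix-inverse comparison. This is the exact same manoeuvre used in Lemma~\ref{lem:spectral-norm-main}, but applied to the tighter RFD perturbation; once the sandwich $(1-b/2)\mH_\gamma\preceq\hat{\mH}\preceq(1+b/2)\mH_\gamma$ is in hand, the remainder is routine spectral bookkeeping.
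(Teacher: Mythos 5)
Your proposal is correct and follows essentially the same route as the paper: reduce to bounding $\norm{\eye{d}-\hat{\mH}^{-1}\mH_{\gamma}}{2}$, conjugate by $\hat{\mH}^{-1/2}$, and read the contraction factor $\nicefrac{b}{2-b}$ off the eigenvalue window induced by the RFD guarantee. The only cosmetic difference is that the paper uses the one-sided fact $\hat{\mH}\preceq\mH_{\gamma}$ (RFD never overestimates), placing the eigenvalues in $[1,(1-b/2)^{-1}]$, whereas you use a symmetric sandwich $(1\pm b/2)\mH_{\gamma}$; both give the same worst-case deviation since it is attained on the upper end.
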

Due to the theory established for Theorem 
\ref{thm:iterative-convergence-main}, we can essentially repeat the 
proof, adjusting
for the necessary constants which arise due to using
the \acrshort{rfd} sketch $\covMat{\mB} + \delta \eye{d}$ instead of $\covMat{\mB}$.

\vspace{-3mm}
\subsection{Experimental Evaluation.} 
\vspace{-3mm}
\textbf{Setup.} 
All methods were tested over $10$ 
iterations using $m = 256$ rows to generate the sketch.
We generate approximations to $\mH_{\gamma}$ 
using the \emph{deterministic methods} \acrshort{fd} and \acrshort{rfd}.
We also test Algorithm \ref{alg:ifdrr} with \emph{randomized methods}:
the first is to generate a new sketch $\iter{\mS}{t}$ for every iteration
$t$ and set 
$\iter{\tilde{\mH}}{t} = \mA^{\top} {\mS}^{(t)\top}{\iter{\mS}{t} \mA} + \gamma \eye{d}$.
This is exactly the \emph{Iterative Hessian Sketch} (\acrshort{ihs}) technique
of \cite{pilanci2016iterative}.
The second generates a \emph{single} approximation $\tilde{\mH}$
to $\mH_{\gamma}$ and is a modification of \acrshort{ihs} requiring only one
sketch \citep{lacotte2019faster}.
Technically, using a single random sketch requires the tuning of a step size
parameter but for comparison to our method we set the step size to $1$.
We choose $\mS$ to be Gaussian or an \acrshort{sjlt} and refer to the 
randomized approaches as \acrshort{ihs}:Gaussian, 
\acrshort{ihs}:\acrshort{sjlt} for \acrshort{ihs} methods or 
Single:Gaussian \& Single:\acrshort{sjlt} when only a single sketch is used.

\textbf{Datasets.}
We tested on the YearPredictionsMSD, ForestCover \citep{asuncion2007uci} \& 
W8A datasets \citep{chang2011libsvm}.
We take the first $n = 10^5$ samples; these datasets are 
low dimensionality so we expand the feature space using Random 
Fourier Features \citep{rahimi2008random} into $d = 1024$
using the \texttt{RBFSampler} with default settings from \texttt{scikit-learn} \citep{pedregosa2011scikit}.

\textbf{Findings.}
We include the results for the W8A datasets in Figure 
\ref{fig:iterative-sketch-w8a}.
Since the behaviour is consistent across all three datasets, we defer the 
plots for YearPredictions and ForestCover datasets to Appendix \ref{sec:iterative-extra-experiments}.
We found that in line with Theorems \ref{thm:iterative-convergence-main} and 
\ref{thm:rfd-iterative-convergence-main}, convergence was easier for all methods
when $\gamma$ was increased.
When $\gamma = 10$ (the smallest value), all methods except for 
\acrshort{rfd} diverged.
When $\gamma=100$, all methods began to descend towards the optimum.
At any fixed number of iterations the \acrshort{rfd} sketch performed best.
After $10$ iterations \acrshort{rfd} achieved error better than 
$10^{-10}$, followed secondly by \acrshort{fd} which achieved error of 
approximately $10^{-7}$.

Next best was the \acrshort{ihs}:\acrshort{sjlt}, this is 
interesting for two reasons, firstly, the deterministic methods performed better
than all randomized methods, and secondly, because the deterministic methods 
use only a \emph{single} sketch, whereas the best randomized methods uses 
a \emph{new sketch} for every gradient step!
It appears that there is roughly a $1-2$ order of magnitude difference between
\acrshort{fd} and \acrshort{ihs}:\acrshort{sjlt}.
This relative difference is slightly less than the difference between using 
\acrshort{rfd} and \acrshort{fd}.
The methods Single:Gaussian, Single:\acrshort{sjlt} and 
\acrshort{ihs}:Gaussian all perform poorly at this projection dimension of 
$m = 256$.
When we increase $\gamma$ to $1000$, all methods begin to approach the optimum
more rapidly than $\gamma = 10,100$, but again \acrshort{rfd} is the 
stand out winner.
The ordering between the sketch methods established when $\gamma = 10,100$ is
repeated at $\gamma = 1000$ and similarly, this behaviour is reflected on all
the datasets we tested.

In summary, if one requires a high-accuracy solution to the ridge regression
problem, Algorithm \ref{alg:ifdrr} should be employed with a single 
\acrshort{rfd} sketch.
On the examples we tried, this consistently outperformed using \acrshort{fd},
a single sketch, or refreshed random projections.

\section{Conclusion}
We have shown that \acrshort{fd} and \acrshort{rfd} can be analysed from the 
statistical perspective for sketched regression.
Using properties of the sketch we have demonstrated that \acrshort{fdrr} and 
\acrshort{rfdrr} preserves bias, variance and \acrshort{mse} over the weights up
to constant factor relative error.
Similarly, we have shown that both \acrshort{fd} and \acrshort{rfd} can be
employed in the iterated regression model to obtain a highly accurate solution.
In both examples we have shown that \acrshort{fd} performs better than
widely-used random projections.
However, from a practical perspective, \acrshort{rfd} performs the best in 
both the statistical and optimization settings.

\textbf{Acknowledgments.}
CD would like to thank Lee Rhodes, Jon Malkin, Alex Saydakov, Graham Cormode,ang 
for useful discussions and feedback in the preparation of this work.
The work of CD is supported by European Research Council grant ERC-2014-CoG 647557.

\bibliography{fd_bib}


\appendix  
\onecolumn
\section{Frequent Directions Properties}
\label{app:fd-properties}
\setcounter{algocf}{0}
\renewcommand{\thealgocf}{A\arabic{algocf}}

\begin{algorithm}[!ht]
\SetAlgoLined
\KwIn{Data $\mA \in \R^{n \times d}$, sketch size $m$, 
method $\textsf{Sk} \in \{\textsf{FD},\textsf{RFD}\}$}
\KwOut{$\mB \in \R^{m \times d}$}
Initialise $\mB \leftarrow \vzero_{2m \times d}$ \\ 
$ \rho \leftarrow 0$ \Comment{Parameter for \acrshort{rfd}} \\
\For{$i=1:n$}{
    Insert row $\mA[i,:]$ into all zeros row of $\mB$ \\
    \If{$\mB$ has no zero rows}{
        $\mU, \mSigma, \mV^{\top} = \svd(\mB)$ \\
        $\delta \leftarrow \sigma_m^2$ \\ 
        $\rho \leftarrow \rho + \nicefrac{\delta}{2}$ \\ 
        $\mB \leftarrow \sqrt{\max{(\mSigma^2 - \delta \eye{m},0)}}$
    }
}
\If{$\textsf{Sk} = \textsf{FD}$}{
$\rho \leftarrow 0$ \Comment{$\rho = 0$ for standard \acrshort{fd}} \\
}
\Return{$\mB,\rho$}
 \caption{Frequent Directions (\acrshort{fd}) and Robust Frequent Directions 
 (\acrshort{rfd}) \citep{ghashami2016efficient, huang2018near}.}
 \label{alg:frequent-directions-algorithm}
\end{algorithm}

\begin{algorithm}[!ht]
\SetAlgoLined
\KwIn{Data $\mA \in \R^{n \times d}$, targets $\vb \in \R^{n}$, 
hyperparameter $\gamma > 0$, sketch size $m$, 
method: $\textsf{Sk} \in \{\textsf{FD},\textsf{RFD}\}$}
\KwOut{Weights $\hat{\vx} \in \R^{d}$}
$\vc = \mA^{\top}\vb$ \\
$\xfd{\mB,\rho}{\mA}$ 
\Comment{Call \Cref{alg:frequent-directions-algorithm}: $\rho=0$ iff $\textsf{Sk} = \textsf{FD}$} \\
$\hat{\vx} = \inv{\covMat{\mB} + (\gamma + \rho) \eye{d}} \vc$
 \caption{Frequent Directions Ridge Regression \acrshort{fdrr} \citep{shi2020deterministic}}
 \label{alg:fdrr}
\end{algorithm}

We present the technical details for the results presented in the main body.
Before proceeding to the proofs, we set up some notation and consequences of
the Frequent Directions algorithm.
For $k \ge 0$, let $\mA_k$ denote the optimal rank-$k$ approximation to 
$\mA$.
\begin{theorem}[\cite{ghashami2016frequent}]
\label{thm:fd-guarantee}
Let $\mA \in \R^{n \times d}$. The Frequent Directions algorithm processes
$\mA$ one row at a time and returns a matrix $\mB \in \R^{m \times d}$ 
such that for any unit vector $\vu \in \R^d$:
\[ 0 \le \norm{\mA \vu}{2}^2 -  \norm{\mB \vu}{2}^2 \le 
\frac{\|\mA - \mA_k\|_F^2}{m-k}
\]
\end{theorem}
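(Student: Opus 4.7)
The plan is to track the ``covariance gap'' $\mY = \covMat{\mA} - \covMat{\mB}$ through the stream and establish two complementary controls on it: a spectral bound of the form $\vzero \preceq \mY \preceq \Delta\,\eye{d}$, where $\Delta$ is the total amount of singular-value shrinkage applied by \acrshort{fd}, and a trace/Frobenius bound that relates $\Delta$ to the tail $\|\mA - \mA_k\|_F^2$. Once both are in place, evaluating at a unit vector $\vu$ immediately yields $0 \le \|\mA\vu\|_2^2 - \|\mB\vu\|_2^2 \le \Delta \le \|\mA - \mA_k\|_F^2/(m-k)$.

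First I would set up an inductive invariant over the stream. Let $\mA^{(i)}$ denote the first $i$ rows of $\mA$ and $\mB^{(i)}$ the state of the sketch after processing row $i$, and let $\Delta_i$ be the sum of the shrinkage scalars $\delta$ subtracted so far (one per SVD step). The key structural observation is that (i) appending a row $\va_i$ increases both $\covMat{\mA^{(i)}}$ and $\covMat{\mB^{(i)}}$ by the same rank-one update $\va_i \va_i^\top$, and (ii) each shrinkage step replaces $\mV \mSigma^2 \mV^\top$ by $\mV \max(\mSigma^2 - \delta \eye{}, 0)\mV^\top$, which decreases $\covMat{\mB}$ in the \lowner{} order by at most $\delta \eye{d}$. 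By induction on $i$ this gives the invariant $\vzero \preceq \covMat{\mA^{(i)}} - \covMat{\mB^{(i)}} \preceq \Delta_i \eye{d}$, which at $i=n$ proves the non-negativity in the theorem and bounds the deficit by $\Delta \eye{d}$.

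Next I would bound $\Delta$ in terms of the tail. The clean identity $\tr(\mY) = \|\mA\|_F^2 - \|\mB\|_F^2$ gives a lower bound on $\tr(\mY)$: at each shrinkage the top $m-1$ singular values lose exactly $\delta_j$ from their squares while $\sigma_m^2 = \delta_j$ is zeroed out, so the Frobenius-squared loss is at least $m\,\delta_j$, hence $\tr(\mY) \ge m \Delta$. For the complementary upper bound, I would use a rank-$k$ projection trick: let $\mP_k$ be the orthogonal projection onto the top-$k$ right singular subspace of $\mA$, so that $\tr((\eye{d}-\mP_k)\covMat{\mA}) = \|\mA - \mA_k\|_F^2$ and $\tr((\eye{d}-\mP_k)\covMat{\mB}) \ge 0$, giving $\tr((\eye{d}-\mP_k)\mY) \le \|\mA - \mA_k\|_F^2$. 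On the other hand, $\tr(\mP_k \mY) \le k\Delta$ because $\mP_k$ has rank $k$ and $\mY \preceq \Delta \eye{d}$. Subtracting yields $m\Delta - k\Delta \le \|\mA - \mA_k\|_F^2$, i.e.\ $\Delta \le \|\mA - \mA_k\|_F^2/(m-k)$, and evaluating the \lowner{} sandwich at any unit vector $\vu$ completes the proof.

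The main obstacle is getting the per-shrinkage Frobenius loss correct: one must be careful that after SVD of the filled buffer, exactly the $m$-th squared singular value is used as $\delta_j$, so that the top $m-1$ contribute $(m-1)\delta_j$ and the zeroed-out $\sigma_m^2$ contributes one more $\delta_j$, yielding the tight constant $m$ (not $m-1$) needed to obtain the denominator $m-k$ rather than $(m-1)-k$. The rest is bookkeeping on the \lowner{} order and a single application of the trace/rank interaction.
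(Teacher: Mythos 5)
This theorem is imported from \cite{ghashami2016frequent} and the paper gives no proof of its own, so there is nothing internal to compare against; your reconstruction is the canonical argument from the cited source and it is correct. In particular, the three ingredients are all in place and fit together as claimed: the \lowner{} invariant $\vzero \preceq \covMat{\mA} - \covMat{\mB} \preceq \Delta\,\eye{d}$ obtained by telescoping the per-step shrinkage, the per-shrinkage Frobenius loss of at least $m\,\delta_j$ (with the constant $m$ rather than $m-1$, as you correctly flag), and the split of $\tr\left(\covMat{\mA} - \covMat{\mB}\right)$ along the top-$k$ right singular subspace to trade $k\Delta$ against $\|\mA - \mA_k\|_F^2$.
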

We will repeatedly use the notation $\Delta_k = \|\mA - \mA_k\|_F^2$.
An equivalent formulation of Theorem \ref{thm:fd-guarantee} is that 
in the \emph{\lowner{}} ordering (see Section 
\ref{sec:linear-algebra-results} for full definitions):
\begin{equation}
    \covMat{\mA} - \frac{\Delta_k}{m - k}\eye{d} \preceq 
    \covMat{\mB} \preceq
    \covMat{\mA}.
\end{equation}
Since each of the 
above matrices is symmetric positive semidefinite we can exploit the 
\lowner{} ordering over such matrices (see Section 
\ref{sec:linear-algebra-results}).
This enables useful properties such as preservation of ordering under the 
following addition of $\gamma \eye{d}$.
Let $\gamma' = \gamma - \nicefrac{\Delta_k}{m - k}$:
\begin{equation}
    \label{eq:hessian-formulation}
    \covMat{\mA} + \gamma'\eye{d} \preceq 
    \covMat{\mB} + \gamma \eye{d} \preceq
    \covMat{\mA} + \gamma \eye{d}
\end{equation}
Which we will denote
\begin{equation}
    \mH_{\gamma'} \preceq 
    \hat{\mH} \preceq
    \mH_{\gamma}.
    \label{eq:hessian-relation}
\end{equation}
We will chiefly manipulate $\hat{\mH} =  \covMat{\mB} + \gamma \eye{d}$ being an approximation to 
$\mH_{\gamma'} = \covMat{\mA} + \gamma \eye{d}$ which is bounded below by
$ \mH_{\gamma'} = \covMat{\mA} + \gamma'\eye{d}$.
The formulation of Equation \eqref{eq:hessian-formulation}
provides the foundation for us to analyse ridge regression
with \acrshort{fd} sketching.
For instance, a basic but key result that underpins our bounds is:
\begin{lemma}
\label{lem:spectral-ordering-for-bounds}
Let $\fd{\mB}{\mA}$ and let $\gamma > 0$ and $\gamma' = 
\nicefrac{\gamma}{m - k} > 0$.
Then
$\gamma' \eye{d} \preceq \covMat{\mB} + \gamma \eye{d} - \covMat{\mA}
\preceq \gamma \eye{d}$
\end{lemma}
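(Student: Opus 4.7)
The plan is to read the lemma off Theorem \ref{thm:fd-guarantee} directly, after rewriting its conclusion in Löwner form. Concretely the FD guarantee says $0 \preceq \covMat{\mA} - \covMat{\mB} \preceq \frac{\Delta_k}{m-k}\eye{d}$, and I will translate each half of this sandwich into a bound on the matrix $\mZ = \covMat{\mB} + \gamma \eye{d} - \covMat{\mA}$. Both bounds then follow because adding $\gamma \eye{d}$ to both sides of a Löwner inequality preserves the ordering.

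For the upper bound I would start from the FD inequality $\covMat{\mB} \preceq \covMat{\mA}$ (the tighter of the two sides), subtract $\covMat{\mA}$ from both sides to get $\covMat{\mB} - \covMat{\mA} \preceq \vzero_{d \times d}$, and then add $\gamma \eye{d}$: this gives $\mZ \preceq \gamma \eye{d}$ immediately, with no loss.

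For the lower bound I would rearrange the other half of the FD sandwich as $\covMat{\mB} - \covMat{\mA} \succeq -\frac{\Delta_k}{m-k}\eye{d}$ and again add $\gamma \eye{d}$, producing $\mZ \succeq \left(\gamma - \frac{\Delta_k}{m-k}\right) \eye{d}$. The last step is a scalar comparison: I need $\gamma - \frac{\Delta_k}{m-k} \ge \frac{\gamma}{m-k} = \gamma'$, which rearranges to $(m-k-1)\gamma \ge \Delta_k$. This is the only nontrivial arithmetic in the proof and it is precisely where the lemma's hypothesis $\gamma' > 0$ must be interpreted in combination with the paper's standing regime on $\gamma$, under which the regulariser dominates the FD tail mass (cf.\ the remark following Lemma \ref{lem:spectral-norm-main} and the working assumption $\alpha \Delta_k < \gamma$ used throughout Section \ref{sec:iterative-fdrr}); in particular it is strictly weaker than the convergence hypothesis $\gamma > 2\Delta_k/(m-k)$ of Theorem \ref{thm:iterative-convergence-main}.

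The main obstacle is exactly this final scalar inequality: the FD guarantee produces the lower bound $\gamma - \Delta_k/(m-k)$ on the nose, and one still has to pass from that natural constant to the (strictly smaller) stated constant $\gamma/(m-k)$ without redefining $\gamma'$. Once the elementary inequality $(m-k-1)\gamma \ge \Delta_k$ is in place, the rest of the argument is a routine manipulation in the Löwner order and uses only the monotonicity under translation by $\gamma \eye{d}$ and the trivial comparison $\gamma/(m-k)\eye{d} \preceq (\gamma - \Delta_k/(m-k))\eye{d}$.
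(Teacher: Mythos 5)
Your derivation of the two L\"owner bounds coincides exactly with the paper's proof: from Theorem \ref{thm:fd-guarantee} you get $-\alpha\Delta_k \eye{d} \preceq \covMat{\mB} - \covMat{\mA} \preceq \vzero_{d \times d}$ with $\alpha = \nicefrac{1}{m-k}$, and adding $\gamma\eye{d}$ yields $\left(\gamma - \alpha\Delta_k\right)\eye{d} \preceq \covMat{\mB} + \gamma\eye{d} - \covMat{\mA} \preceq \gamma\eye{d}$. The paper's proof stops there, because in the paper's notation $\gamma'$ denotes $\gamma - \nicefrac{\Delta_k}{m-k}$: that is how it is defined just above Equation \eqref{eq:hessian-formulation} and in Lemma \ref{lem:spectral-bound}, and it is the value actually used when this lemma is invoked inside Lemma \ref{lem:eigenvalue-distortion}. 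The ``$\gamma' = \nicefrac{\gamma}{m-k}$'' appearing in the lemma statement is a typo, not a target you need to hit.

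Your final bridging step --- passing from the derived constant $\gamma - \nicefrac{\Delta_k}{m-k}$ to the literal constant $\nicefrac{\gamma}{m-k}$ --- is a genuine gap. You need $(m-k-1)\gamma \ge \Delta_k$, and this follows from none of the lemma's hypotheses: the condition $\gamma' = \nicefrac{\gamma}{m-k} > 0$ is vacuous (it holds for every $\gamma > 0$), and the lemma carries no assumption linking $\gamma$ to $\Delta_k$. Nor can you borrow a ``standing regime'': this lemma underpins the statistical results of Section \ref{sec:statistical-fdrr}, where no assumption on $\gamma$ versus the tail is in force; even the optimization-side assumption $\alpha\Delta_k < \gamma$ is strictly \emph{weaker} than what you need (take $\Delta_k$ between $(m-k-1)\gamma$ and $(m-k)\gamma$); and your claim that the convergence hypothesis $\gamma > 2\alpha\Delta_k$ suffices fails when $m-k=1$. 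Indeed, no proof from Theorem \ref{thm:fd-guarantee} can reach the literal constant without such an extra assumption: the guarantee only bounds $\lambda_{\min}\left(\covMat{\mB}+\gamma\eye{d}-\covMat{\mA}\right)$ below by $\gamma - \alpha\Delta_k$, which falls below $\nicefrac{\gamma}{m-k}$ --- and even below zero --- once $\Delta_k > (m-k-1)\gamma$, and the FD bound is attained in the worst case. The correct resolution is to read $\gamma'$ as $\gamma - \nicefrac{\Delta_k}{m-k}$, at which point your argument (identical to the paper's) is already complete and no scalar comparison is needed.
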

\begin{proof}
Let $\alpha = \nicefrac{1}{m - k}$.
Theorem \ref{thm:fd-guarantee} establishes
\begin{equation*}
    \covMat{\mA} - \alpha \Delta_k \eye{d} \preceq \covMat{\mB} \preceq
    \covMat{\mA}.
\end{equation*}
Subtracting $\covMat{\mA}$ from \eqref{eq:hessian-formulation} yields
\begin{equation*}
    - \alpha \Delta_k \eye{d} \preceq \covMat{\mB} - \covMat{\mA} 
    \preceq \vzero_{d \times d}.
\end{equation*}
Adding $\gamma \eye{d}$ establishes the claim.
\end{proof}

\subsection{Relating $\mH_{\gamma}$ to $\mH_{\gamma'}$}
For $\gamma' = \gamma - s > 0$, we will prove Lemma \ref{lem:all-hessian-relations} which relates $\mH_{\gamma'}$ to $\mH_{\gamma}$.
This allows
us to express the lower bound of \eqref{eq:hessian-relation} as 
\begin{equation}
    \frac{\gamma'}{\gamma} \mH_{\gamma} \preceq
    \mH_{\gamma'} \preceq \mH_{\gamma}.
\end{equation}
Note that by properties of the \lowner{} ordering over symmetric positive 
definite matrices, this also implies that the ordering of the eigenvalues is preserved:
\begin{equation}
    \frac{\gamma'}{\gamma} \lambda_i \left( \mH_{\gamma}\right) \preceq
    \lambda_i \left( \mH_{\gamma'}\right) \preceq 
    \lambda_i\left(\mH_{\gamma}\right).
\end{equation}

Before proving the claims which allow us to assert the above, we prove the following 
simple lemma:
\begin{lemma}
\label{lem:simple-bound}
If $x \ge 0$ and let $t > s > 0$, then
\[ \frac{t-s}{t}(x+t) \le x + t - s < x + t.\]
\end{lemma}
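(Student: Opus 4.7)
The plan is to handle the two inequalities separately, as they have rather different characters. The right inequality $x + t - s < x + t$ is immediate and follows simply from $s > 0$: subtracting $x + t$ from both sides reduces the claim to $-s < 0$. So the only real work is in the left inequality.

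For the left inequality, I would start by expanding the left-hand side. Write
\[ \frac{t-s}{t}(x+t) = \frac{(t-s)x}{t} + \frac{(t-s)t}{t} = \frac{(t-s)x}{t} + (t-s). \]
Subtracting $t - s$ from both sides of the desired inequality, the claim reduces to showing
\[ \frac{(t-s)x}{t} \le x, \]
which is what I would aim to establish directly.

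To conclude, I would observe that since $t > s > 0$, we have $0 < t - s < t$, hence $\frac{t-s}{t} \in (0, 1)$. Combined with the hypothesis $x \ge 0$, multiplying $x$ by a factor in $(0,1)$ can only decrease it (or leave it at zero when $x = 0$), so $\frac{t-s}{t}\, x \le x$ as required. Reassembling, we get $\frac{t-s}{t}(x+t) \le x + t - s$, and together with the trivial right inequality the lemma follows.

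The proof is essentially mechanical and I do not expect any real obstacle; the only thing worth being careful about is keeping track of the direction of the inequality when dividing and multiplying, but since all quantities involved (namely $t$, $t-s$, $x+t$, and $x$) are nonnegative, no sign flip occurs. The lemma is a small algebraic fact designed to be invoked elsewhere, and its proof is essentially one line of expansion plus the observation $\tfrac{t-s}{t} \le 1$.
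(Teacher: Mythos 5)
Your proof is correct and follows essentially the same route as the paper: expand $\frac{t-s}{t}(x+t) = \frac{t-s}{t}x + (t-s)$, use $\frac{t-s}{t} < 1$ together with $x \ge 0$ for the left inequality, and note $s > 0$ for the right. No issues.
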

\begin{proof}
The upper bound follows trivially since $t - s < t $.
For the lower bound, 
\begin{align*}
    \frac{t - s}{t}(x + t) &= \frac{t - s}{t}x + (t-s) \\
                           &\le x + (t-s)
\end{align*}
since $\frac{t - s}{t} < 1$ and $x \ge 0$.
\end{proof}

\begin{lemma}
\label{lem:all-hessian-relations}
Let $\mA \in \R^{n \times d}$, $\gamma > 0$ and $\mH_{a} = \covMat{\mA} + a \eye{d}$.
If $\gamma' = \gamma - s > 0,$ then
\begin{equation*}
    \frac{\gamma'}{\gamma} \mH_{\gamma} \preceq
    \mH_{\gamma'} \preceq \mH_{\gamma}.
\end{equation*}
\end{lemma}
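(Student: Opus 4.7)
\textbf{Proof Plan for Lemma \ref{lem:all-hessian-relations}.} The plan is to handle the two inequalities separately, noting that both matrices involved are symmetric, so the L\"owner ordering can be verified by comparing quadratic forms $\vu^{\top}(\cdot)\vu$ over all $\vu \in \R^d$.

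The upper bound $\mH_{\gamma'} \preceq \mH_{\gamma}$ is immediate: since $\gamma' = \gamma - s < \gamma$, we have $\gamma' \eye{d} \preceq \gamma \eye{d}$, and adding the common symmetric positive semidefinite term $\covMat{\mA}$ to both sides preserves the ordering.

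For the lower bound, the task is to show that for every $\vu \in \R^d$,
\begin{equation*}
\frac{\gamma'}{\gamma} \vu^{\top}\mH_{\gamma} \vu \le \vu^{\top} \mH_{\gamma'} \vu.
\end{equation*}
I would expand both sides: the left-hand side equals $\frac{\gamma-s}{\gamma}(\|\mA\vu\|_2^2 + \gamma \|\vu\|_2^2)$, and the right-hand side equals $\|\mA\vu\|_2^2 + (\gamma - s)\|\vu\|_2^2$. Setting $x = \|\mA\vu\|_2^2 \ge 0$ and $t = \gamma\|\vu\|_2^2 \ge 0$, the required inequality becomes $\tfrac{t-s\|\vu\|_2^2}{t}(x+t) \le x + t - s\|\vu\|_2^2$ (taking $\vu$ to be a unit vector reduces this to the clean statement). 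This is precisely the content of Lemma \ref{lem:simple-bound} with the stated $t$ and $s$, so invoking it finishes the argument for unit $\vu$; for general $\vu$ the inequality scales homogeneously by $\|\vu\|_2^2$, which I can handle by a direct rescaling argument (or simply verify the inequality holds for arbitrary $x, t \ge 0$ by the same algebraic manipulation in Lemma \ref{lem:simple-bound}'s proof).

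I do not anticipate any real obstacle: the lemma is essentially a matrix-valued lift of Lemma \ref{lem:simple-bound}, and the only subtlety is making sure the argument is applied in the right form (namely via quadratic forms, since the ordering is the L\"owner ordering over symmetric matrices). If a more structural proof is preferred, one can alternatively write $\mH_{\gamma'} = \mH_{\gamma} - s\eye{d}$ and check that $\mH_{\gamma} - s\eye{d} - \tfrac{\gamma-s}{\gamma}\mH_{\gamma} = \tfrac{s}{\gamma}\mH_{\gamma} - s\eye{d} = \tfrac{s}{\gamma}(\mH_{\gamma} - \gamma\eye{d}) = \tfrac{s}{\gamma}\covMat{\mA} \succeq \vzero_{d\times d}$, which is a one-line derivation of the lower bound that avoids passing through Lemma \ref{lem:simple-bound} at all.
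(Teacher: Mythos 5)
Your proposal is correct. The paper reaches the same conclusion by a slightly different mechanical route: it first proves the scalar inequality (Lemma \ref{lem:simple-bound}), applies it entrywise to the diagonal matrix $\mSigma^2 + \gamma\eye{d}$ coming from the \svd{} of $\mA$ (Lemma \ref{lem:diagonal-bound}), and then conjugates by the orthogonal factor $\mV$ using the fact that $\mX \preceq \mY$ if and only if $\mC\mX\mC^{\top} \preceq \mC\mY\mC^{\top}$ for full-rank $\mC$. Your main plan replaces this diagonalize-and-conjugate step with a direct comparison of quadratic forms $\vu^{\top}(\cdot)\vu$, which is logically equivalent (the \lowner{} order is defined by quadratic forms) and invokes the same scalar Lemma \ref{lem:simple-bound}; the bookkeeping with $x=\|\mA\vu\|_2^2$, $t=\gamma\|\vu\|_2^2$ and the homogeneity reduction to unit vectors is sound. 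Your closing one-line alternative, $\mH_{\gamma'} - \tfrac{\gamma-s}{\gamma}\mH_{\gamma} = \tfrac{s}{\gamma}\covMat{\mA} \succeq \vzero_{d\times d}$, is genuinely different and arguably preferable: it bypasses both the \svd{} and the scalar lemma, and makes transparent that the slack in the lower bound is exactly $\tfrac{s}{\gamma}\covMat{\mA}$. The only shared caveat (present in the paper's proof too, via the hypothesis $t > s > 0$ of Lemma \ref{lem:simple-bound}) is that both directions implicitly require $s \ge 0$, which holds in every application since $s$ is the nonnegative \acrshort{fd} error term $\alpha\Delta_k$ (or $\alpha\Delta_k/2$).
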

Before proving Lemma \ref{lem:all-hessian-relations} we focus on the diagonal part of the
\svd{}.
As we operate on a diagonal matrix, we can directly apply 
Lemma \ref{lem:simple-bound}to make the following assertion over 
the singular values of $\mA$.

\begin{lemma}
\label{lem:true-hessian-relationship}
Let $\mSigma^2$ denote the diagonal matrix of singular values of an 
arbitrary input matrix $\mX \in \R^{n \times d}$.
Let $\gamma > 0$ be a regularization parameter from ridge regression 
which ensures that $0 \prec \left(\mSigma^2 + \gamma \eye{d}\right)$.
Suppose that $\gamma' = \gamma - s$ and $\gamma' > 0$.
Then:
\begin{equation}
    \frac{\gamma-s}{\gamma} \left(\mSigma^2 + \gamma \eye{d}\right) \preceq
    \mSigma^2 + \gamma' \eye{d} \preceq
    \mSigma^2 + \gamma \eye{d}.
\end{equation}
\label{lem:diagonal-bound}
\end{lemma}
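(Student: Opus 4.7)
The plan is to reduce the matrix inequality to a scalar one by exploiting the fact that every matrix in the display is diagonal. Since $\mSigma^2 + \gamma \eye{d}$, $\mSigma^2 + \gamma' \eye{d}$, and the scaled matrix $\frac{\gamma-s}{\gamma}(\mSigma^2 + \gamma\eye{d})$ are all diagonal with a common eigenbasis (the standard basis), the \lowner{} ordering between any two of them is equivalent to an entrywise inequality between their diagonal entries. So the proof will amount to checking that for each $i$,
\[
\frac{\gamma - s}{\gamma}\bigl(\sigma_i^2 + \gamma\bigr) \;\le\; \sigma_i^2 + \gamma' \;\le\; \sigma_i^2 + \gamma.
\]

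First, I would verify that the hypotheses of Lemma \ref{lem:simple-bound} are satisfied. With $t = \gamma$ and the given $s$, the assumption $\gamma' = \gamma - s > 0$ immediately gives $t > s > 0$. Taking $x = \sigma_i^2 \ge 0$ (which holds since singular values are nonnegative), Lemma \ref{lem:simple-bound} applied coordinatewise yields precisely
\[
\frac{\gamma - s}{\gamma}\bigl(\sigma_i^2 + \gamma\bigr) \;\le\; \sigma_i^2 + (\gamma - s) \;\le\; \sigma_i^2 + \gamma
\]
for every $i = 1, \dots, d$. Assembling these scalar inequalities into the diagonal matrix statement gives exactly the claimed \lowner{} ordering, since for diagonal matrices $\mD_1 \preceq \mD_2$ is equivalent to $(\mD_1)_{ii} \le (\mD_2)_{ii}$ for all $i$.

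There is essentially no obstacle here: the lemma is a packaging result that transfers the elementary scalar bound of Lemma \ref{lem:simple-bound} into the matrix-ordering language needed later. The only minor care is to state explicitly why diagonal-entrywise comparison is equivalent to the \lowner{} ordering in this case (both matrices are simultaneously diagonalizable in the standard basis, so $\mD_2 - \mD_1 \succeq 0$ iff all diagonal entries of $\mD_2 - \mD_1$ are nonnegative). This lemma will then be the workhorse for deducing Lemma \ref{lem:all-hessian-relations} in general position by conjugating by the right singular vectors of $\mA$, since orthogonal conjugation preserves the \lowner{} ordering.
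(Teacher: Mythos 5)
Your proposal is correct and matches the paper's own proof: both reduce the \lowner{} ordering to an entrywise comparison of the diagonal entries and apply Lemma \ref{lem:simple-bound} with $t=\gamma$ and $x=\sigma_i^2$ to each entry. The only caveat (shared with the paper, whose applications always have $s>0$) is that the hypothesis $\gamma'=\gamma-s>0$ gives $t>s$ but not by itself $s>0$, which Lemma \ref{lem:simple-bound} formally requires.
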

\begin{proof}
Recall that $\mSigma^2 + a \eye{d} = \textsf{diag}(\sigma_i^2 + a)$
for arbitrary scalar $a \in \R$.
Applying Lemma \ref{lem:simple-bound} on every $\sigma_i^2 + \gamma'$ 
ensures:
\begin{equation}
    \label{eq: inverse-bound}
    \frac{\gamma-s}{\gamma}(\sigma_i^2 + \gamma) 
    \le \sigma_i^2 + \gamma - s
    < \sigma_i^2 + \gamma
\end{equation}
which proves the claim.
\end{proof}

\begin{proof}[Proof of Lemma \ref{lem:all-hessian-relations}]
This is immediate from introducing the orthogonal matrix $\mV$ from the \svd{} of 
$\mA$, Lemma \ref{lem:diagonal-bound}, and the property of the \lowner{} ordering that
$\mC \mX \mC^{\top} \preceq \mC \mY \mC^{\top} $ if and only if 
$\mX \preceq \mY $ ensure that, with $\mC = \mV$:
\begin{equation}
    \frac{\gamma'}{\gamma} \left(\mV \mSigma^2 \mV^{\top} + \gamma \eye{d}\right) 
    \preceq
     \mV \mSigma^2 \mV^{\top} + \gamma' \eye{d} \preceq
     \mV \mSigma^2  \mV^{\top} + \gamma \eye{d}
\end{equation}
that is; 
\begin{equation}
    \frac{\gamma'}{\gamma} \left(\covMat{\mA} + \gamma \eye{d}\right)
    \preceq
     \covMat{\mA} + \gamma' \eye{d} \preceq
     \covMat{\mA} + \gamma \eye{d}
\end{equation}
\end{proof}

Note that Lemma \ref{lem:all-hessian-relations} admits the following overall relations
when $\fd{\mB}{\mA}$ (or $\rfd{[\mC,\delta]}{\mA}$ so that 
$\mB = \covMat{\mC} + \delta \eye{d})$:
\begin{equation}
    \label{eq:all-hessian-order}
    \frac{\gamma'}{\gamma} \left(\covMat{\mA} + \gamma \eye{d}\right)
    \preceq
    \covMat{\mA} + \gamma' \eye{d} \preceq
    \covMat{\mB} + \gamma \eye{d} \preceq
    \covMat{\mA} + \gamma \eye{d}
\end{equation}
\begin{equation}
    \label{eq:inv-hessian-order}
    \inv{\covMat{\mA} + \gamma \eye{d}}
    \preceq
    \inv{\covMat{\mA} + \gamma' \eye{d}} \preceq
    \inv{\covMat{\mB} + \gamma \eye{d}} \preceq
    \frac{\gamma}{\gamma'}
    \inv{\covMat{\mA} + \gamma \eye{d}}
\end{equation}

\section{Statistical Perspectives}
\label{sec:appendix-statistical-results}
We present the technical results from Section \ref{sec:statistical-fdrr}.
Recall from Equation \eqref{eq:linear_model} that we have the following
model
\begin{equation}
    \vy = \mA \vx_0 + \veps
    \label{eq:linear-model-appendix}
\end{equation}
with $\E(\veps) = \vzero_d$ and variance 
$\E(\veps \veps^{\top}) = \sigma^2 \eye{n}$.
A consequence of this linear model is that 
\begin{equation}
    \label{eq:expectation-y}
    \E(\vy) = \mA \vx_0,
\end{equation}
a fact we repeatedly use.

\subsection{Proof of Lemma \ref{lem:optimal-bias-variance}}
Recall that $\opt{\vx} = \mH_{\gamma}^{-1} \mA^{\top} \vy$ 
is the optimal ridge regression solution.
We have the following relations which express the bias, variance, and 
mean-square error (\acrshort{mse}) of $\opt{\vx}$ \emph{without}
sketching.
These have been previously established (see e.g. \cite{van2015lecture})
yet we include them for completeness and consistency of notation.

\begin{lemma}
\label{lem:ridge-bias}
The squared bias of the optimal weights 
$\opt{\vx}$ is:
\begin{equation}
    \norm{\bias(\opt{\vx})}{2}^{2} = 
    \gamma^2 \norm{\mH_{\gamma}^{-1} \vx_0}{2}^2
\end{equation}
\end{lemma}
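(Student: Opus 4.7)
The plan is to compute $\E(\opt{\vx})$ directly under the linear model from \eqref{eq:linear-model-appendix}, subtract $\vx_0$ to obtain a closed-form expression for the bias vector, and then take its squared Euclidean norm. The entire calculation is short because $\opt{\vx} = \mH_{\gamma}^{-1}\mA^{\top}\vy$ depends on $\vy$ (hence on $\veps$) only linearly, so linearity of expectation lets me carry the expectation through $\mH_{\gamma}^{-1}\mA^{\top}$ without any second-moment machinery.

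First I would apply linearity to get $\E(\opt{\vx}) = \mH_{\gamma}^{-1}\mA^{\top}\E(\vy) = \mH_{\gamma}^{-1}\covMat{\mA}\,\vx_0$, using \eqref{eq:expectation-y}. Then I would exploit the defining identity $\covMat{\mA} = \mH_{\gamma} - \gamma \eye{d}$ to telescope:
\begin{equation*}
\bias(\opt{\vx}) \;=\; \E(\opt{\vx}) - \vx_0 \;=\; \mH_{\gamma}^{-1}(\mH_{\gamma} - \gamma\eye{d})\vx_0 - \vx_0 \;=\; -\gamma\,\mH_{\gamma}^{-1}\vx_0.
\end{equation*}
This is exactly the expression for $\bias(\opt{\vx})$ already recorded in Lemma \ref{lem:optimal-bias-variance}, which provides a consistency check.

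Taking squared Euclidean norms and pulling out the scalar $-\gamma$ then yields $\norm{\bias(\opt{\vx})}{2}^2 = \gamma^2\norm{\mH_{\gamma}^{-1}\vx_0}{2}^2$, as claimed. There is no real obstacle here: the only thing to be careful about is substituting $\covMat{\mA} = \mH_{\gamma} - \gamma\eye{d}$ on the \emph{inside} of $\mH_{\gamma}^{-1}(\cdot)\vx_0$ so that the $\mH_{\gamma}^{-1}\mH_{\gamma}$ factor cancels cleanly against the subtracted $\vx_0$; any other algebraic rearrangement would leave an awkward residual. Because the result is standard for ridge regression and used only as a baseline for comparison against the sketched bias $\bias(\hat{\vx})$, no further refinement is needed.
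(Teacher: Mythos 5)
Your proposal is correct and follows essentially the same route as the paper's proof: pass the expectation through the linear map to get $\E(\opt{\vx}) = \mH_{\gamma}^{-1}\covMat{\mA}\,\vx_0$, add and subtract $\gamma\eye{d}$ (equivalently, write $\covMat{\mA}=\mH_{\gamma}-\gamma\eye{d}$) so that $\mH_{\gamma}^{-1}\mH_{\gamma}$ cancels, and take the squared norm of the resulting $-\gamma\mH_{\gamma}^{-1}\vx_0$. No gaps.
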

\begin{proof}
\begin{align}
    \E (\opt{\vx}) &= 
    \E \left( \inv{\covMat{\mA} + \gamma \eye{d}} 
    \mA^{\top} \vy \right) \\ 
    &=
    \inv{\covMat{\mA} + \gamma \eye{d}} 
    \mA^{\top} \mA \vx_0 \\ 
    &=
    \inv{\covMat{\mA} + \gamma \eye{d}} 
    \left(\mA^{\top} \mA + \gamma \eye{d} - \gamma \eye{d}\right) \vx_0 \\ 
    &= 
    \vx_0 - 
    \gamma \inv{\covMat{\mA} + \gamma \eye{d}} \vx_0.
\end{align}
Recalling that
$\bias(\opt{\vx}) = 
\E (\opt{\vx}) - \vx_0$ and taking the 
squared norm recovers the stated result.
\end{proof}
For the variance we have the following:
\begin{lemma}
\label{lem:variance-xopt}
The variance of the optimal weights is:
$\var(\opt{\vx}) = 
\sigma^2 \mH_{\gamma}^{-1} \covMat{\mA} \mH_{\gamma}^{-1}$.
\end{lemma}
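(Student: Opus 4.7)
The plan is to write $\opt{\vx}$ as the sum of a deterministic mean part and a noise part, and then compute the variance directly using the covariance structure of $\veps$. This proof is almost bookkeeping given what has already been derived for Lemma \ref{lem:ridge-bias}; I do not anticipate a serious obstacle.

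First I would substitute the linear model \eqref{eq:linear-model-appendix} into the closed-form expression $\opt{\vx} = \mH_{\gamma}^{-1} \mA^{\top} \vy$ to obtain
\begin{equation*}
    \opt{\vx} \;=\; \mH_{\gamma}^{-1} \mA^{\top} \mA\, \vx_0 \;+\; \mH_{\gamma}^{-1} \mA^{\top} \veps.
\end{equation*}
The first summand is deterministic and, by the computation already carried out in the proof of Lemma \ref{lem:ridge-bias} (together with \eqref{eq:expectation-y}), it equals $\E(\opt{\vx})$. Hence the centered vector is simply
\begin{equation*}
    \opt{\vx} - \E(\opt{\vx}) \;=\; \mH_{\gamma}^{-1} \mA^{\top} \veps.
\end{equation*}

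Next I would plug this into the definition of $\var(\opt{\vx})$:
\begin{equation*}
    \var(\opt{\vx}) \;=\; \E\!\left[\mH_{\gamma}^{-1} \mA^{\top} \veps \veps^{\top} \mA\, \mH_{\gamma}^{-1}\right].
\end{equation*}
Since $\mH_{\gamma}^{-1}$ and $\mA$ are deterministic, linearity of expectation pulls them outside, leaving $\E(\veps \veps^{\top}) = \sigma^2 \eye{n}$ in the middle. Using symmetry of $\mH_{\gamma}^{-1}$ (so no transpose is needed) and $\mA^{\top}\mA = \covMat{\mA}$ yields the claimed expression $\sigma^2 \mH_{\gamma}^{-1} \covMat{\mA} \mH_{\gamma}^{-1}$.

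The only subtle point is remembering that the expectation is taken solely over $\veps$ (as declared in the modelling assumptions of Section \ref{sec:prelim}), so the factors $\mH_{\gamma}^{-1}$ and $\mA^{\top}$ genuinely commute with $\E$; once that is acknowledged, the proof is two lines of algebra. No appeal to the Frequent Directions machinery or to the \lowner{} ordering is required here, which is why this lemma is stated as a simple baseline against which the sketched variance bounds in Theorem \ref{thm:variance-bound-main} will be compared.
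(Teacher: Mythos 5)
Your proposal is correct and follows essentially the same route as the paper: both center $\opt{\vx}$ by factoring out the deterministic matrices $\mH_{\gamma}^{-1}\mA^{\top}$ and then use the noise covariance $\E(\veps\veps^{\top}) = \sigma^2\eye{n}$ (the paper phrases this as $\vy - \E(\vy)$ and $\var(\vy) = \sigma^2\eye{n}$, which is the same thing since $\vy - \E(\vy) = \veps$). If anything, your version is slightly more careful in keeping the expectation operator explicit throughout.
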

\begin{proof}
Recalling from Equation \eqref{eq:expectation-y} that $\E(\vy) = \mA \vx_0$, 
we have 
\begin{align}
    \var(\opt{\vx}) &= 
    \left(\opt{\vx} - \E (\opt{\vx})\right)
    \left(\opt{\vx} - \E (\opt{\vx})\right)^{\top} \\ 
    &=
    \left(\mH_{\gamma}^{-1}\mA^{\top}\vy-
    \mH_{\gamma}^{-1}\mA^{\top}\E(\vy) \right)
    \left(\mH_{\gamma}^{-1}\mA^{\top}\vy-
    \mH_{\gamma}^{-1}\mA^{\top}\E(\vy)\right)^{\top} \\ 
    &= \mH_{\gamma}^{-1} \mA^{\top} 
    \left(\vy - \E(\vy)\right)\left(\vy - \E(\vy)\right)^{\top}
    \mA \mH_{\gamma}^{-1} \\ 
    &= \mH_{\gamma}^{-1} \mA^{\top} \var(\vy)\mA \mH_{\gamma}^{-1}.
\end{align}
Finally, we recognise that $\var(\vy) = \sigma^2 \eye{n}$ which establishes
the claim.
\end{proof}
Using these results for the bias and variance enables the following relationship
for the \emph{mean-squared error}.
Recall that when $\vx_0$ is the vector from the data-generation model, 
\eqref{eq:linear-model-appendix} then the mean-squared error of an estimator 
$\vx$ is defined as 
$\mse(\vx) = \E \| \vx - \vx_0 \|_2^2$.
\begin{lemma}
The mean-squared error of $\opt{\vx}$ is
$\mse(\opt{\vx}) = 
\tr(\var(\opt{\vx})) + \|\bias(\opt{\vx})\|_2^2$.
\end{lemma}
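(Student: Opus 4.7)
The plan is to establish the standard bias-variance decomposition by inserting $\E(\opt{\vx})$ into the definition of \acrshort{mse} and expanding the resulting square.

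First I would write
\begin{equation*}
\mse(\opt{\vx}) = \E\|\opt{\vx} - \vx_0\|_2^2 = \E\|\bigl(\opt{\vx} - \E(\opt{\vx})\bigr) + \bigl(\E(\opt{\vx}) - \vx_0\bigr)\|_2^2,
\end{equation*}
recognizing the first parenthesized term as the mean-zero deviation of $\opt{\vx}$ and the second as the deterministic quantity $\bias(\opt{\vx})$. Expanding the Euclidean norm squared as an inner product gives three terms: the squared norm of the deviation, the squared norm of the bias, and twice the cross term.

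Next I would show the cross term vanishes. Since $\bias(\opt{\vx})$ is deterministic, linearity of expectation yields
\begin{equation*}
\E\bigl\langle \opt{\vx} - \E(\opt{\vx}),\, \bias(\opt{\vx})\bigr\rangle = \bigl\langle \E\bigl(\opt{\vx} - \E(\opt{\vx})\bigr),\, \bias(\opt{\vx})\bigr\rangle = 0.
\end{equation*}
The squared-bias term passes through the expectation unchanged, contributing $\|\bias(\opt{\vx})\|_2^2$.

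For the remaining term, I would rewrite $\E\|\opt{\vx} - \E(\opt{\vx})\|_2^2$ as $\E\,\tr\bigl((\opt{\vx} - \E(\opt{\vx}))(\opt{\vx} - \E(\opt{\vx}))^{\top}\bigr)$, use the linearity of the trace to pull it outside the expectation, and apply the definition of $\var(\opt{\vx})$ from Lemma \ref{lem:variance-xopt} (or more precisely, its definitional form $\var(\opt{\vx}) = \E\bigl[(\opt{\vx} - \E(\opt{\vx}))(\opt{\vx} - \E(\opt{\vx}))^{\top}\bigr]$) to obtain $\tr(\var(\opt{\vx}))$. Combining the two surviving terms gives the claim. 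There is no real obstacle here; the argument is purely algebraic and relies only on linearity of expectation, linearity and cyclic property of the trace, and the fact that $\opt{\vx} - \E(\opt{\vx})$ is mean-zero, so the bulk of the work is just carefully carrying the expectation through the expanded norm.
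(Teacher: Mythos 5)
Your proposal is correct and follows essentially the same route as the paper: insert $\E(\opt{\vx})$ into $\E\|\opt{\vx}-\vx_0\|_2^2$, kill the cross term because the deviation is mean-zero and the bias is deterministic, and identify the remaining term with $\tr(\var(\opt{\vx}))$. The only cosmetic difference is that the paper sums coordinate-wise variances where you take the trace of the outer product; these are the same computation.
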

\begin{proof}
We begin from the definition of $\mse(\opt{\vx})$, adding and subtracting
$\E(\opt{\vx})$ in the norm term.
Secondly, recall that from Lemma \ref{lem:ridge-bias} 
$\bias(\opt{\vx}) = \E(\opt{\vx}) - \vx_0$.
Then;
\begin{align*}
\mse(\opt{\vx}) &= 
\E \norm{\opt{\vx} - \E(\opt{\vx}) + \E(\opt{\vx})
- \vx_0}{2}^2 \\ 
&=
\E \norm{\opt{\vx} - \E(\opt{\vx}) + 
\bias(\opt{\vx})}{2}^2 \\
&=
\E \norm{\opt{\vx} - \E(\opt{\vx})}{2}^2 + 
\norm{\bias(\opt{\vx})}{2}^2 \\ 
&= 
\sum_{i=1}^{d} \E ( \sol{\vx}{\gamma (i) } - \E(\opt{\vx})_i)^2 +
\norm{\bias(\opt{\vx})}{2}^2 \\ 
&= 
\sum_{i=1}^{d} \var(\opt{\vx})_{ii} + 
\norm{\bias(\opt{\vx})}{2}^2 \\ 
&= 
\tr(\var(\opt{\vx})) + 
\norm{\bias(\opt{\vx})}{2}^2 \\ 
\end{align*}
\end{proof}
Now that we have the properties on the optimal weights in hand, we can 
relate these to the estimates found from solving the sketched ridge 
problem.

\subsection{Bias-Variance Tradeoff for \acrshort{fd} Sketched Ridge 
Regression: Lemma \ref{lem:fd-bias-variance} - Theorem \ref{thm:mse-bound-main}}
\label{sec:bias-variance-fd}
Recall that for sketched ridge regression the algorithm is roughly:
(i) obtain an \acrshort{fd} sketch $\fd{\mB}{\mA}$; 
(ii) return 
$\hat{\vx} = \inv{\covMat{\mB} + \gamma \eye{d}} \mA^{\top} \vy$.
We will use the shorthand $\hat{\mH} = \covMat{\mB} + \gamma \eye{d}$ 
(which is an approximation to $\mH_{\gamma}$, although we suppress the 
$\gamma$ notation for $\hat{\mH}$).
Our analysis to evaluate the bias and variance roughly follows the same lines as in the preceding section.
However, we need to understand the spectral properties of the sketch
$\mB$.
\begin{lemma}
$\bias(\hat{\vx}) = (\tilde{\mH}^{-1} \covMat{\mA} - \eye{d})\vx_0$
\label{lem:fd-bias-term}
\end{lemma}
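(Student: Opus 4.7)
The proof will mirror the derivation of Lemma \ref{lem:ridge-bias} for $\opt{\vx}$, with the optimal inverse Hessian $\mH_{\gamma}^{-1}$ replaced by its \acrshort{fd}-based approximation $\hat{\mH}^{-1} = (\covMat{\mB} + \gamma \eye{d})^{-1}$. The key observation is that, in our statistical setup, the randomness lives entirely in $\veps$; the input matrix $\mA$ and hence the sketch $\mB$ produced by the deterministic Frequent Directions procedure are fixed. Consequently, $\hat{\mH}$ is a deterministic quantity and can be pulled outside the expectation.

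The plan is to first write $\hat{\vx} = \hat{\mH}^{-1} \mA^{\top} \vy$, then apply $\E(\cdot)$ (taken over $\veps$) using linearity and the identity $\E(\vy) = \mA \vx_0$ recorded in \eqref{eq:expectation-y}. This gives
\begin{equation*}
\E(\hat{\vx}) \;=\; \hat{\mH}^{-1} \mA^{\top} \E(\vy) \;=\; \hat{\mH}^{-1} \mA^{\top} \mA \vx_0 \;=\; \hat{\mH}^{-1} \covMat{\mA}\, \vx_0.
\end{equation*}
Subtracting $\vx_0 = \eye{d}\,\vx_0$ and factoring on the right yields
\begin{equation*}
\bias(\hat{\vx}) \;=\; \E(\hat{\vx}) - \vx_0 \;=\; \bigl(\hat{\mH}^{-1} \covMat{\mA} - \eye{d}\bigr)\vx_0,
\end{equation*}
which is the stated expression (with $\tilde{\mH}$ identified as $\hat{\mH}$).

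There is no substantive obstacle here: the result is a one-line linearity-of-expectation calculation, and the only point worth flagging is that, unlike randomized sketches, \acrshort{fd} contributes no additional source of randomness, so $\hat{\mH}^{-1}$ commutes with $\E(\cdot)$ trivially. Unlike the derivation in Lemma \ref{lem:ridge-bias}, there is no cancellation that rewrites the bias in the cleaner $-\gamma \mH_{\gamma}^{-1}\vx_0$ form, because $\hat{\mH}^{-1} \covMat{\mA} \neq \eye{d} - \gamma \hat{\mH}^{-1}$ (the identity used for $\opt{\vx}$ fails once $\covMat{\mA}$ is replaced by $\covMat{\mB}$ inside the inverse but not outside). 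This mismatch is precisely what motivates the subsequent spectral analysis of $\mM = (\hat{\mH}^{-1}\covMat{\mA} - \eye{d})\mH_{\gamma}$ in Lemma \ref{lem:spectral-bound}, where $\bias(\opt{\vx}) = -\gamma \mH_{\gamma}^{-1}\vx_0$ is reintroduced by inserting $\mH_{\gamma}\mH_{\gamma}^{-1}$.
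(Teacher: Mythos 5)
Your proof is correct and is essentially identical to the paper's: both compute $\E(\hat{\vx}) = \hat{\mH}^{-1}\covMat{\mA}\vx_0$ via linearity of expectation and $\E(\vy)=\mA\vx_0$, then subtract $\vx_0$. Your added remarks on the determinism of $\hat{\mH}$ and on why the $-\gamma\mH_{\gamma}^{-1}\vx_0$ cancellation fails here are accurate and correctly anticipate the role of Lemma \ref{lem:spectral-bound}.
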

\begin{proof}
Observe that 
$\E (\hat{\vx}) = \inv{\covMat{\mB} + \gamma \eye{d}}\covMat{\mA}\vx_0$.
Adding and subtracting $\vx_0$ yields the result.
\end{proof}
The task is now to bound $\|\bias(\hat{\vx})\|_2^2$ in comparison to 
the optimal weights $\|\bias(\opt{\vx})\|_2^2$ found from solving  
unsketched problem.
We need the following lemma which relates the distortion of a matrix-vector
product to the extremal eigenvalues of the matrix.
\begin{lemma}[Extremal distortion of vector norm]
\label{lem:eigenvalues-norm}
Let $\mM \in \R^{d \times d}$ be a symmetric positive definite matrix which has
largest and smallest eigenvalues $\lambda_{\max}(\mM), \lambda_{\min}(\mM)$, 
respectively.
Let $\vu \in \R^{d}$ be arbitrary.
Then
\begin{equation*}
\lambda_{\min}(\mM)\|\vu\|_2\le\| \mM \vu \|_2 \le \lambda_{\max}(\mM)\|\vu\|_2
\end{equation*}
\end{lemma}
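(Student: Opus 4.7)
The plan is to reduce the claim to a computation in the eigenbasis of $\mM$ via the spectral theorem. Since $\mM$ is symmetric positive definite, it admits a decomposition $\mM = \mU \mLambda \mU^{\top}$ with $\mU \in \R^{d \times d}$ orthogonal and $\mLambda = \textsf{diag}(\lambda_1, \ldots, \lambda_d)$ diagonal with $\lambda_i > 0$. By positive definiteness, $\lambda_{\min}(\mM) = \min_i \lambda_i > 0$ and $\lambda_{\max}(\mM) = \max_i \lambda_i$.

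Next, I would introduce the rotated vector $\vw = \mU^{\top} \vu$ and note that orthogonality of $\mU$ preserves Euclidean norms, so $\|\vw\|_2 = \|\vu\|_2$. Expanding directly,
\begin{equation*}
\|\mM \vu\|_2^2 \;=\; \|\mU \mLambda \mU^{\top} \vu\|_2^2 \;=\; \|\mLambda \vw\|_2^2 \;=\; \sum_{i=1}^{d} \lambda_i^2 w_i^2,
\end{equation*}
where the middle equality again uses that $\mU$ is orthogonal.

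The last step is to sandwich each $\lambda_i^2$ between $\lambda_{\min}(\mM)^2$ and $\lambda_{\max}(\mM)^2$, so that
\begin{equation*}
\lambda_{\min}(\mM)^2 \|\vw\|_2^2 \;\le\; \sum_{i=1}^{d} \lambda_i^2 w_i^2 \;\le\; \lambda_{\max}(\mM)^2 \|\vw\|_2^2.
\end{equation*}
Substituting $\|\vw\|_2 = \|\vu\|_2$ and taking square roots (all quantities are nonnegative) delivers the claimed two-sided inequality.

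There is really no obstacle here: the whole argument rests on the spectral theorem for symmetric matrices together with orthogonal invariance of the Euclidean norm. The only small subtlety worth flagging is that symmetric positive definiteness ensures the eigenvalues are real and strictly positive, which is what allows us to speak of $\lambda_{\min}(\mM) > 0$ and obtain a nontrivial lower bound. The same reasoning would in fact go through under the weaker hypothesis that $\mM$ is merely symmetric (replacing $\lambda_{\min},\lambda_{\max}$ by their absolute values), but the positive definite case is all that is needed downstream.
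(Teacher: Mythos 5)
Your proof is correct and is precisely the eigendecomposition argument the paper invokes (its proof is the one-line ``Follows from eigendecomposition of $\mM$''); you have simply written out the details of rotating to the eigenbasis, sandwiching the eigenvalues, and taking square roots. No issues.
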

\begin{proof}
Follows from eigendecomposition of $\mM$.
\end{proof}
In order to express $\bias(\hat{\vx})$ in terms of 
$\bias(\opt{\vx})$ we multiply by $\mH_{\gamma} \mH_{\gamma}^{-1}$.
That is 
\begin{equation}
\bias(\hat{\vx}) = (\tilde{\mH}^{-1} \covMat{\mA} - \eye{d})\mH_{\gamma}
\cdot \mH_{\gamma}^{-1} \vx_0.
\end{equation}
Now define the matrix $\mM = (\tilde{\mH}^{-1} \covMat{\mA} - 
\eye{d})\mH_{\gamma}$.
Provided that we can control the spectrum of $\mM$, then it will be 
possible to invoke Lemma \ref{lem:eigenvalues-norm}: this is demontstrated
in the subsequent result.
\begin{lemma}
Let $\mM = (\hat{\mH}^{-1} \covMat{\mA} -  \eye{d})\mH_{\gamma}$.
Then
$\lambda_{\max}(\mM) \le \gamma^2 / \gamma'$ and 
$\lambda_{\min}(\mM) \ge \gamma'$
\label{lem:eigenvalue-distortion}
\end{lemma}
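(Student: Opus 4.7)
The plan is to exploit the algebraic identity $\hat{\mH}^{-1}\covMat{\mA} - \eye{d} = -\hat{\mH}^{-1}(\hat{\mH} - \covMat{\mA})$, which gives the clean factorization $\mM = -\hat{\mH}^{-1}\mK\mH_\gamma$ with $\mK := \hat{\mH} - \covMat{\mA} = \covMat{\mB} - \covMat{\mA} + \gamma\eye{d}$. This isolates the combined regularisation-plus-sketch-error term $\mK$ from the unsketched Hessian $\mH_\gamma$ on the right and the inverse sketched Hessian on the left. A cyclic similarity (eigenvalues are preserved under $\mA\mB \sim \mB\mA$) lets me equivalently study $\hat{\mH}^{-1}(\mK\mH_\gamma)$ or $(\mH_\gamma\hat{\mH}^{-1})\mK$, whichever is convenient; the leading minus sign is harmless, since the downstream use (through Lemma~\ref{lem:eigenvalues-norm}) controls $\|\mM\vu\|_2$ and the stated extremal-eigenvalue bounds are meant as moduli.

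Two Löwner ingredients from the preliminaries are then plugged in. First, Lemma~\ref{lem:spectral-ordering-for-bounds} (Theorem~\ref{thm:fd-guarantee-main} rewritten in Löwner form with $\gamma\eye{d}$ absorbed) gives the tight sandwich $\gamma'\eye{d}\preceq\mK\preceq\gamma\eye{d}$, so eigenvalues of $\mK$ lie in $[\gamma',\gamma]$. Second, Lemma~\ref{lem:all-hessian-relations} and the inverted chain \eqref{eq:inv-hessian-order} give $\mH_\gamma\preceq(\gamma/\gamma')\hat{\mH}$, equivalently $\hat{\mH}^{-1}\preceq(\gamma/\gamma')\mH_\gamma^{-1}$. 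Conjugating by $\hat{\mH}^{1/2}$, the matrix $\hat{\mH}^{-1}\mH_\gamma$ is similar to the symmetric PD matrix $\hat{\mH}^{-1/2}\mH_\gamma\hat{\mH}^{-1/2}$, whose spectrum is pinned in $[1,\gamma/\gamma']$ by the same Löwner chain (bounded below by $\eye{d}$ since $\hat{\mH}\preceq\mH_\gamma$ and bounded above by $(\gamma/\gamma')\eye{d}$ since $\mH_\gamma\preceq(\gamma/\gamma')\hat{\mH}$).

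To extract the extremal eigenvalues of $\mM$, I would factor $\mM = -(\hat{\mH}^{-1}\mH_\gamma)(\mH_\gamma^{-1}\mK\mH_\gamma)$. The right factor is similar to $\mK$ via $\mH_\gamma$ and so contributes a spectral range $[\gamma',\gamma]$; the left factor contributes the range $[1,\gamma/\gamma']$ established above. Composing the upper spectral bounds gives $\gamma\cdot(\gamma/\gamma')=\gamma^2/\gamma'$ for the claimed $\lambda_{\max}(\mM)\le\gamma^2/\gamma'$, and composing the lower bounds gives $\gamma'\cdot 1=\gamma'$ for the claimed $\lambda_{\min}(\mM)\ge\gamma'$.

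The main obstacle is that $\hat{\mH}$ and $\mH_\gamma$ need not commute (because $\covMat{\mB}$ need not be simultaneously diagonalisable with $\covMat{\mA}$), so $\mM$ is not symmetric and the spectrum of a product of two matrices with known spectral ranges is not automatically the Minkowski product of those ranges. I would discharge this by carrying out the composition inside a single similarity basis: conjugate by $\hat{\mH}^{1/2}$ so the left factor becomes the symmetric PD matrix $\hat{\mH}^{-1/2}\mH_\gamma\hat{\mH}^{-1/2}$ with spectrum in $[1,\gamma/\gamma']$, and then insert the Löwner sandwich $\gamma'\eye{d}\preceq\mK\preceq\gamma\eye{d}$ directly through pre- and post-multiplication by $(\hat{\mH}^{-1/2}\mH_\gamma\hat{\mH}^{-1/2})^{1/2}$, so that the extremal-eigenvalue bounds on the composed operator follow from a single symmetric Löwner chain rather than from multiplying non-commuting factors.
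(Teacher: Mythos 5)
Your algebraic setup is correct, and you assemble the same two L\"owner ingredients the paper uses: $\gamma'\eye{d}\preceq\mK\preceq\gamma\eye{d}$ for $\mK=\hat{\mH}-\covMat{\mA}$ (Lemma \ref{lem:spectral-ordering-for-bounds}) and $(\gamma'/\gamma)\mH_{\gamma}\preceq\hat{\mH}\preceq\mH_{\gamma}$ (Lemma \ref{lem:all-hessian-relations}), and your target numbers compose to exactly the claimed $\gamma'\cdot 1$ and $\gamma\cdot\gamma/\gamma'$. You also correctly flag the central obstacle yourself: the spectrum of a product of non-commuting factors is not controlled by the product of the factors' spectral ranges. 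The genuine gap is that your proposed repair does not discharge this obstacle. Write $-\mM=(\hat{\mH}^{-1}\mH_{\gamma})(\mH_{\gamma}^{-1}\mK\mH_{\gamma})$ and conjugate by $\hat{\mH}^{1/2}$ as you suggest: the left factor indeed becomes $\mG=\hat{\mH}^{-1/2}\mH_{\gamma}\hat{\mH}^{-1/2}$ with spectrum in $[1,\gamma/\gamma']$, but the right factor becomes $\mR=(\hat{\mH}^{1/2}\mH_{\gamma}^{-1})\,\mK\,(\hat{\mH}^{1/2}\mH_{\gamma}^{-1})^{-1}$, a \emph{non-symmetric} similarity transform of $\mK$. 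L\"owner inequalities are statements about quadratic forms and do not survive a non-orthogonal similarity, so $\gamma'\eye{d}\preceq\mK\preceq\gamma\eye{d}$ yields no sandwich on $\mR$, and it cannot be \emph{inserted} between factors of $\mG^{1/2}$: the matrix $\mG^{1/2}\mK\mG^{1/2}$, whose eigenvalues genuinely do lie in $[\gamma',\gamma^{2}/\gamma']$ by congruence, is simply not similar to $\mM$. The structural reason no such fix can work as stated is that $-\mM=\hat{\mH}^{-1}\mK\mH_{\gamma}$ carries \emph{different} symmetric positive definite factors on its two sides ($\hat{\mH}^{-1}$ on the left, $\mH_{\gamma}$ on the right); there is no single change of basis that simultaneously preserves its eigenvalues and renders it a symmetric matrix to which a single L\"owner chain applies, which is precisely what your last paragraph assumes.

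For comparison, the paper's proof never multiplies spectra of factors at all. It expresses the extremal eigenvalues of $-\mM$ as a generalized Rayleigh quotient, passing $\hat{\mH}^{-1}$ into the denominator,
\begin{equation*}
\max_{\vu:\|\vu\|_2=1}\;
\left|\frac{\vu^{\top}\left(\hat{\mH}-\covMat{\mA}\right)\mH_{\gamma}\vu}{\vu^{\top}\hat{\mH}\vu}\right|,
\end{equation*}
and then bounds numerator and denominator \emph{separately}: the numerator via $\mK\preceq\gamma\eye{d}$ (resp.\ $\mK\succeq\gamma'\eye{d}$ for the lower bound) and the denominator via $\hat{\mH}\succeq(\gamma'/\gamma)\mH_{\gamma}$ (resp.\ $\hat{\mH}\preceq\mH_{\gamma}$). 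That ratio-of-quadratic-forms mechanism is what lets the two L\"owner sandwiches act in different places without ever needing a common symmetrizing basis, and it is the idea your proposal is missing; to complete your argument you would need to adopt it (or otherwise confront the left/right asymmetry of $\mM$ head on) rather than compose spectral ranges of the two factors.
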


\begin{proof}
We will multiply $\mM$ by $-1$ which has the effect of only 
changing the signs but not the magnitude of the eigenvalues.
Then apply the extremal value condition of the generalised Rayleigh quotient (see Section \ref{sec:linear-algebra-results}):
\[\lambda_{\max}(\mM) = \max_{\vu : \|u\|_2=1}|\vu^{\top}
(\eye{d} - \hat{\mH}^{-1} \covMat{\mA})\mH_{\gamma})\vu|\]
from which we can pass the inverse into the denominator (see e.g. Lemma 1
\cite{shi2020deterministic}):
\[
\lambda_{\max}(\mM) = \max_{\vu : \|u\|_2=1}
\left| 
\frac{\vu^{\top} \left(\hat{\mH} - \covMat{\mA}\right)\mH_{\gamma} \vu}
{\vu^{\top} \hat{\mH} \vu}
\right|.
\]
Now apply the variable change $\vz = \mH_{\gamma}^{1/2} \vu$, noting that since
$\mH_{\gamma}$ is symmetric positive definite it has symmetric positive
definite square roots (Section \ref{sec:linear-algebra-results}).
Thus:
\begin{equation}
\lambda_{\max}(\mM) = \max_{\vz}
\left| 
\frac{\vz^{\top} \mH_{\gamma}^{-1/2} \left(\tilde{\mH} - \covMat{\mA}\right)\mH_{\gamma}^{1/2} \vz}
{\vz^{\top}\mH_{\gamma}^{-1/2} \hat{\mH}\mH_{\gamma}^{1/2} \vz}
\right|.
\label{eq:upper-spectral-bound-intermediate}
\end{equation}
To bound the numerator we combat the central term by applying Lemma 
\ref{lem:spectral-ordering-for-bounds} which shows 
$\tilde{\mH} - \covMat{\mA} \preceq \gamma \eye{d}$.
Thus; 
\[
\lambda_{\max}(\mM) \le \gamma \max_{\vz}
\left| 
\frac{\vz^{\top} \mH_{\gamma}^{-1/2} \mH_{\gamma}^{1/2} \vz}
{\vz^{\top}\mH_{\gamma}^{-1/2} \hat{\mH}\mH_{\gamma}^{1/2} \vz}
\right|.
\]
Reverting back to the original coordinates over $\vu$ this is:
\begin{equation}
\lambda_{\max}(\mM) \le \gamma \max_{\vu : \|\vu\|_2=1}
\left| 
\frac{\vu^{\top} \mH_{\gamma} \vu}
{\vu^{\top}\hat{\mH}\vu}
\right|.
\label{eq:upper-spectral-bound-final}
\end{equation}
Now it remains to lower bound the spectrum of the denominator term in 
$\hat{\mH}$.
Again, due to \acrshort{fd} we have $\hat{\mH} \succeq \mH_{\gamma'}$ and Lemma 
\ref{lem:all-hessian-relations}, we know 
$\nicefrac{\gamma'}{\gamma} \mH_{\gamma} \preceq \mH_{\gamma'} \preceq \hat{\mH}$.
Thus, we have 
$\lambda_{\min}(\hat{\mH}) \lambda_{\min}(\mH_{\gamma'}) \ge (\gamma'/\gamma) \lambda_{\min}(\mH_{\gamma})$.
Plugging this into \eqref{eq:upper-spectral-bound-final} yields
$\lambda_{\max}(\mM) \le \gamma^2 / \gamma'$, as required.

For the lower bound we follow essentially the same approach but need to lower 
bound the $\hat{\mH} - \covMat{\mA} $ in
\eqref{eq:upper-spectral-bound-intermediate}, again using 
Lemma \ref{lem:spectral-ordering-for-bounds} to show 
$\lambda_{\min}\left(\hat{\mH} - \covMat{\mA} \right) \ge \gamma'$.
For the denominator, we use Equation \eqref{eq:all-hessian-order} which  reduces the 
absolute value term to $1$.
Finally, the claim follows from Lemma \ref{lem:eigenvalues-norm}.
\end{proof}
We are now in a position to provided constant factor 
approximation bounds for the bias of the estimate 
returned by \acrshort{fd} sketched ridge regression.

\begin{theorem}
\label{thm:bias-bound}
Let $\fd{\mB}{\mA} \in \R^{m \times d}$.
Let $\theta \in (0,1)$ be a parameter and set $c^2 = 1-\theta$.
If
\begin{equation*}
    m = \frac{\|\mA - \mA_k\|_F^2}{(1 - \sqrt{1 - \theta})\gamma} + k,
    \text{~or~}
    \gamma = \frac{\|\mA - \mA_k\|_F^2}{(1 - \sqrt{1 - \theta})(m-k)},
\end{equation*}
then
\begin{equation*}
    \norm{\bias(\hat{\vx})}{2}^2
    \in 
    \left[
    (1-\theta) \norm{\bias(\opt{\vx})}{2}^2,
    \frac{1}{1-\theta} \norm{\bias(\opt{\vx})}{2}^2
    \right]
\end{equation*}
\end{theorem}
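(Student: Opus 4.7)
The plan is to chain together the results already established so that the bias of $\hat{\vx}$ is squeezed between multiples of $\|\bias(\opt{\vx})\|_2^2$, and then to read off exactly how large $m$ (or how small $\alpha \Delta_k$) must be to force those multiples into $[1-\theta, 1/(1-\theta)]$.

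First I would rewrite the bias as a single matrix acting on $\mH_\gamma^{-1}\vx_0$. By Lemma \ref{lem:fd-bias-term}, $\bias(\hat{\vx}) = (\hat{\mH}^{-1}\covMat{\mA} - \eye{d})\vx_0$. Inserting $\mH_\gamma \mH_\gamma^{-1}$ gives $\bias(\hat{\vx}) = \mM \cdot \mH_\gamma^{-1}\vx_0$ with $\mM = (\hat{\mH}^{-1}\covMat{\mA} - \eye{d})\mH_\gamma$, which is exactly the matrix whose spectrum is controlled by Lemma \ref{lem:eigenvalue-distortion}. Applying Lemma \ref{lem:eigenvalues-norm} with $\vu = \mH_\gamma^{-1}\vx_0$, squaring, and substituting the bounds $\lambda_{\min}(\mM) \ge \gamma'$ and $\lambda_{\max}(\mM) \le \gamma^2/\gamma'$ yields
\begin{equation*}
\gamma'^{2}\,\|\mH_\gamma^{-1}\vx_0\|_2^2 \;\le\; \|\bias(\hat{\vx})\|_2^2 \;\le\; \frac{\gamma^4}{\gamma'^{2}}\,\|\mH_\gamma^{-1}\vx_0\|_2^2.
\end{equation*}

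Next I would eliminate $\|\mH_\gamma^{-1}\vx_0\|_2^2$ using Lemma \ref{lem:ridge-bias}, which says $\|\bias(\opt{\vx})\|_2^2 = \gamma^2 \|\mH_\gamma^{-1}\vx_0\|_2^2$. Dividing through, the sandwich becomes
\begin{equation*}
\Bigl(\tfrac{\gamma'}{\gamma}\Bigr)^{2}\,\|\bias(\opt{\vx})\|_2^2 \;\le\; \|\bias(\hat{\vx})\|_2^2 \;\le\; \Bigl(\tfrac{\gamma}{\gamma'}\Bigr)^{2}\,\|\bias(\opt{\vx})\|_2^2.
\end{equation*}
So the entire theorem reduces to showing that the stated choice of $m$ (equivalently, of $\gamma$) forces $(\gamma'/\gamma)^2 \ge 1-\theta$, since then the upper constant $(\gamma/\gamma')^2 \le 1/(1-\theta)$ comes along automatically.

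Finally I would verify the parameter calibration. Writing $\gamma' = \gamma - \alpha \Delta_k$ with $\alpha = 1/(m-k)$, the requirement $\gamma'/\gamma \ge \sqrt{1-\theta}$ is the same as $\alpha \Delta_k \le (1 - \sqrt{1-\theta})\gamma$. Solving for $m$ gives precisely $m - k \ge \Delta_k / [(1-\sqrt{1-\theta})\gamma]$, which is exactly the hypothesis of the theorem (and likewise in the dual form solving for $\gamma$). In particular this also guarantees $\gamma' > 0$, so the use of Lemmas \ref{lem:all-hessian-relations} and \ref{lem:eigenvalue-distortion} is legitimate.

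There is no real obstacle here; the theorem is essentially a bookkeeping consequence of the spectral bound in Lemma \ref{lem:eigenvalue-distortion} once the bias is written in the form $\mM \mH_\gamma^{-1}\vx_0$. The one place where care is needed is the passage from the multiplicative eigenvalue bounds on $\mM$ to a two-sided bound in terms of $\|\bias(\opt{\vx})\|_2^2$: the upper constant picks up an extra factor of $\gamma^2/\gamma'^2$ (not $\gamma/\gamma'$) because both $\lambda_{\max}(\mM)$ and the normalisation $\gamma^2$ from Lemma \ref{lem:ridge-bias} contribute, and one must check that the chosen $m$ simultaneously controls both sides — which, as above, it does with the single condition $\gamma'/\gamma \ge \sqrt{1-\theta}$.
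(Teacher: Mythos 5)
Your proposal is correct and follows essentially the same route as the paper's proof: express $\bias(\hat{\vx})$ as $\mM\,\mH_\gamma^{-1}\vx_0$, apply the spectral bounds $\lambda_{\min}(\mM)\ge\gamma'$ and $\lambda_{\max}(\mM)\le\gamma^2/\gamma'$ from Lemma \ref{lem:eigenvalue-distortion} via Lemma \ref{lem:eigenvalues-norm}, normalise by $\|\bias(\opt{\vx})\|_2^2=\gamma^2\|\mH_\gamma^{-1}\vx_0\|_2^2$, and note that the hypothesis gives $\gamma'=c\gamma$ with $c^2=1-\theta$. The only cosmetic difference is that you invoke Lemma \ref{lem:ridge-bias} explicitly to eliminate $\|\mH_\gamma^{-1}\vx_0\|_2^2$, a step the paper performs implicitly.
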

\begin{proof}
Denote $\alpha = \nicefrac{1}{m - k}$, $c^2 = 1-\theta$ and 
$\Delta_k = \|\mA - \mA_k\|_F^2$.
The assumptions of the theorem equivalently state that 
$(1-c)\gamma = \alpha \Delta_k$.
We apply Lemma \ref{lem:eigenvalue-distortion} with 
$\vu = \mH_{\gamma}^{-1} \vx_0$ and square all terms so that
\begin{equation*}
    \gamma'^2 \|\vu\|_2^2 \le \|\mM \vu \|_2^2 \le
    \frac{\gamma^4}{\gamma'^2}\|\vu\|_2^2.
\end{equation*}
Since $\gamma' = \gamma - \alpha \Delta_k$ we have $\gamma' = c \gamma$.
Thus;
\begin{equation*}
    c^2 \gamma^2 \|\vu\|_2^2 \le \|\mM \vu \|_2^2 \le
    \frac{\gamma^2}{c^2}\|\vu\|_2^2.
\end{equation*}
Finally, recall that $c^2 = 1 - \theta$ which obtains the stated bound.
\end{proof}

\textbf{Variance.}
The variance term is simpler to analyse thanks to the \lowner{} ordering.
First we illustrate the sketched variance term:
\begin{lemma}
The variance of the weights found from the sketched problem is:
$\var(\hat{\vx}) = 
\sigma^2 \hat{\mH}^{-1} \covMat{\mA} \hat{\mH}^{-1}$.
\end{lemma}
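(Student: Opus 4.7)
The plan is to mirror the proof of Lemma \ref{lem:variance-xopt} almost verbatim, with $\hat{\mH}^{-1}$ playing the role of $\mH_{\gamma}^{-1}$. The key observation that makes this work is that $\hat{\mH} = \covMat{\mB} + \gamma \eye{d}$ is a deterministic function of $\mA$ alone (the \acrshort{fd} sketch is fed rows of $\mA$, not the noisy targets), so it can be pulled outside of any expectation over $\veps$.

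First I would compute $\E(\hat{\vx})$. Since $\hat{\vx} = \hat{\mH}^{-1}\mA^{\top}\vy$ with $\hat{\mH}$ and $\mA$ deterministic, linearity of expectation combined with \eqref{eq:expectation-y} gives $\E(\hat{\vx}) = \hat{\mH}^{-1}\mA^{\top}\E(\vy)$. Then I would substitute this into the definition of variance:
\begin{equation*}
\var(\hat{\vx}) = \E\left[(\hat{\vx} - \E\hat{\vx})(\hat{\vx} - \E\hat{\vx})^{\top}\right] = \E\left[\hat{\mH}^{-1}\mA^{\top}(\vy - \E\vy)(\vy - \E\vy)^{\top}\mA\hat{\mH}^{-1}\right].
\end{equation*}

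Next I would factor out the deterministic matrices $\hat{\mH}^{-1}\mA^{\top}$ on the left and $\mA\hat{\mH}^{-1}$ on the right (both symmetric positive definite, and $\hat{\mH}^{-1} = (\hat{\mH}^{-1})^{\top}$), reducing the computation to
\begin{equation*}
\var(\hat{\vx}) = \hat{\mH}^{-1}\mA^{\top}\,\E\left[(\vy - \E\vy)(\vy - \E\vy)^{\top}\right]\mA\hat{\mH}^{-1} = \hat{\mH}^{-1}\mA^{\top}\var(\vy)\mA\hat{\mH}^{-1}.
\end{equation*}
The final step is to invoke the modelling assumption $\var(\vy) = \E(\veps \veps^{\top}) = \sigma^2 \eye{n}$, which collapses $\mA^{\top}\var(\vy)\mA$ to $\sigma^2\covMat{\mA}$ and yields the claimed identity.

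There is no real obstacle here: the proof is a direct transcription of the unsketched case, with the only subtlety being the (immediate) fact that the \acrshort{fd} sketch is computed from $\mA$ before the noise is introduced, so $\hat{\mH}$ is constant with respect to the expectation over $\veps$. Unlike the bias analysis, we do not need to invoke Lemma \ref{lem:spectral-ordering-for-bounds} or any \lowner{}-ordering machinery at this stage; those enter only afterwards when relating $\tr(\var(\hat{\vx}))$ to $\tr(\var(\opt{\vx}))$ in Theorem \ref{thm:variance-bound-main}.
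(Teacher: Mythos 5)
Your proof is correct and follows essentially the same route as the paper: compute $\E(\hat{\vx}) = \hat{\mH}^{-1}\mA^{\top}\E(\vy)$, factor the deterministic matrices $\hat{\mH}^{-1}\mA^{\top}$ out of the outer product, and invoke $\var(\vy) = \sigma^2\eye{n}$; your explicit remark that $\hat{\mH}$ depends only on $\mA$ and not on $\veps$ is a welcome (if minor) clarification the paper leaves implicit. One small slip: $\hat{\mH}^{-1}\mA^{\top}$ and $\mA\hat{\mH}^{-1}$ are $d\times n$ and $n\times d$, so calling them ``symmetric positive definite'' is not meaningful --- all you need is that they are deterministic and transposes of one another, which is exactly what you use.
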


\begin{proof}
We will use from Equation \eqref{eq:linear-model-appendix} that $\E(\vy) = \mA \vx_0$.
\begin{align}
    \var(\hat{\vx}) &= 
    \left(\hat{\vx} - \E (\hat{\vx})\right)
    \left(\hat{\vx} - \E (\hat{\vx})\right)^{\top} \\ 
    &=
    \left(\hat{\mH}^{-1}\mA^{\top}\vy-
    \hat{\mH}^{-1}\mA^{\top}\E(\vy) \right)
    \left(\hat{\mH}^{-1}\mA^{\top}\vy-
    \hat{\mH}^{-1}\mA^{\top}\E(\vy)\right)^{\top} \\ 
    &= \hat{\mH}^{-1} \mA^{\top} 
    \left(\vy - \E(\vy)\right)\left(\vy - \E(\vy)\right)^{\top}
    \mA \hat{\mH}^{-1} \\ 
    &= \hat{\mH}^{-1} \mA^{\top} \var(\vy)\mA \hat{\mH}^{-1}.
\end{align}
Finally, we recognise that $\var(\vy) = \sigma^2 \eye{n}$ which establishes
the claim.
\end{proof}

\begin{theorem}
\label{thm:variance-bound}
Under the same assumptions as Theorem \ref{thm:bias-bound},
\begin{equation*}
    \tr(\var(\opt{\vx})) \le
    \tr(\var(\hat{\vx})) \le 
    \frac{1}{1-\theta}\tr(\var(\opt{\vx}))
\end{equation*}
\end{theorem}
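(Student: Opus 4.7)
My plan is to mirror the two-step template used for the bias bound: first derive a \lowner{}-ordering comparison of $\var(\hat{\vx})$ and $\var(\opt{\vx})$, and then take traces using the monotonicity property $\mX \preceq \mY \Rightarrow \tr(\mX) \le \tr(\mY)$.

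I would start from the closed forms $\var(\hat{\vx}) = \sigma^2 \hat{\mH}^{-1} \covMat{\mA} \hat{\mH}^{-1}$ (Lemma~\ref{lem:fd-bias-variance}) and $\var(\opt{\vx}) = \sigma^2 \mH_{\gamma}^{-1} \covMat{\mA} \mH_{\gamma}^{-1}$ (Lemma~\ref{lem:variance-xopt}), and import the inversion chain \eqref{eq:inv-hessian-order}: $\mH_{\gamma}^{-1} \preceq \hat{\mH}^{-1} \preceq \tfrac{\gamma}{\gamma'} \mH_{\gamma}^{-1}$. Under the parameter choice of Theorem~\ref{thm:bias-bound-main}, $\gamma'/\gamma = \sqrt{1-\theta}$, so the multiplicative gap between $\hat{\mH}^{-1}$ and $\mH_{\gamma}^{-1}$ is exactly $1/\sqrt{1-\theta}$. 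Because the inverse appears on both sides of $\covMat{\mA}$ in the variance expression, the goal is to push this ordering through both occurrences, yielding the target chain $(1-\theta)\var(\opt{\vx}) \preceq \var(\hat{\vx}) \preceq \tfrac{1}{1-\theta}\var(\opt{\vx})$; the squared factor $(\gamma/\gamma')^2 = 1/(1-\theta)$ reflects the two occurrences of the inverse. Trace monotonicity then delivers the theorem.

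The main obstacle is the \lowner{}-lifting step: naively ``squaring'' the inversion bound is invalid because the squaring map is not operator monotone on PSD matrices, and the standard sandwich $\mX \preceq \mY \Rightarrow \mC^{\top} \mX \mC \preceq \mC^{\top} \mY \mC$ handles only a single symmetric outer matrix, whereas the variance has the form $\mZ^{-1} \covMat{\mA} \mZ^{-1}$ with $\mZ$ itself changing between the two inverses. The cleanest rigorous route is to reformulate at the trace level via cyclicity, writing $\tr(\var(\hat{\vx})) = \sigma^2 \|\mA \hat{\mH}^{-1}\|_F^2$, and to substitute $\hat{\mH}^{-1} = \mH_{\gamma}^{-1/2} \mE^{-1} \mH_{\gamma}^{-1/2}$ with $\mE = \mH_{\gamma}^{-1/2} \hat{\mH} \mH_{\gamma}^{-1/2}$, whose eigenvalues lie in $[\sqrt{1-\theta},1]$ by \eqref{eq:all-hessian-order}. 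The bound $\mE^{-1} \preceq \tfrac{1}{\sqrt{1-\theta}}\eye{d}$ then yields the desired comparison between the two Frobenius norms through a one-sided sandwich that descends cleanly to the trace, sidestepping the intermediate variance-level \lowner{} claim in its full matrix generality.
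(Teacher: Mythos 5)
Your high-level route is the same as the paper's: write $\var(\hat{\vx})=\sigma^2\hat{\mH}^{-1}\covMat{\mA}\hat{\mH}^{-1}$ and $\var(\opt{\vx})=\sigma^2\mH_{\gamma}^{-1}\covMat{\mA}\mH_{\gamma}^{-1}$, compare them through the inversion chain $\mH_{\gamma}^{-1}\preceq\hat{\mH}^{-1}\preceq\frac{\gamma}{\gamma'}\mH_{\gamma}^{-1}$, and finish with trace monotonicity of the \lowner{} order. The paper's own proof simply asserts the two-sided chain $\mH_{\gamma}^{-1}\covMat{\mA}\mH_{\gamma}^{-1}\preceq\hat{\mH}^{-1}\covMat{\mA}\hat{\mH}^{-1}\preceq\mH_{\gamma'}^{-1}\covMat{\mA}\mH_{\gamma'}^{-1}$ as a consequence of the single-matrix ordering, so you are right that this lifting step is the crux and that it is not delivered by the one-sided conjugation rule $\mX\preceq\mY\Rightarrow\mC\mX\mC^{\top}\preceq\mC\mY\mC^{\top}$, nor by operator monotonicity of squaring.

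The problem is that your proposed repair does not actually close this gap. After substituting $\hat{\mH}^{-1}=\mH_{\gamma}^{-1/2}\mE^{-1}\mH_{\gamma}^{-1/2}$, the quantity to control is $\norm{\mA\mH_{\gamma}^{-1/2}\,\mE^{-1}\,\mH_{\gamma}^{-1/2}}{F}^{2}$ against $\norm{\mA\mH_{\gamma}^{-1/2}\,\mH_{\gamma}^{-1/2}}{F}^{2}$, i.e.\ you must replace a factor $\mE^{-1}$ sitting \emph{between} two different matrices by a multiple of the identity. No inequality of the form $\norm{\mX\mE^{-1}\mZ}{F}\le\norm{\mE^{-1}}{2}\,\norm{\mX\mZ}{F}$ holds in general: with $\mX=e_1^{\top}$, $\mZ=e_2$ and $\mE^{-1}=\tfrac{1+\kappa}{2}\eye{d}+\tfrac{\kappa-1}{2}\left(e_1e_2^{\top}+e_2e_1^{\top}\right)$ one has $\eye{d}\preceq\mE^{-1}\preceq\kappa\eye{d}$, $\mX\mZ=0$, yet $\mX\mE^{-1}\mZ\neq 0$. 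So the ``one-sided sandwich'' does not descend to the trace as claimed; the same obstruction blocks your lower bound, which you do not treat separately. As written, your argument correctly diagnoses the missing step but stops exactly one step short of the theorem: you still need to justify the two-sided comparison of $\mZ^{-1}\covMat{\mA}\mZ^{-1}$ for $\mZ\in\{\hat{\mH},\mH_{\gamma},\mH_{\gamma'}\}$ (the paper handles the commuting pair $\mH_{\gamma},\mH_{\gamma'}$ correctly via simultaneous diagonalization, but the bracketing of the non-commuting $\hat{\mH}$ between them is precisely the assertion that requires a new ingredient).
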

\begin{proof}
Lemma \ref{lem:spectral-ordering-for-bounds} and Equation 
\eqref{eq:inv-hessian-order} 
\begin{equation*}
    \mH_{\gamma}^{-1} \preceq \hat{\mH}^{-1} \preceq
    \mH_{\gamma'}^{-1} \le \frac{\gamma}{\gamma'}\mH_{\gamma}^{-1}.
\end{equation*}
The \lowner{} ordering above ensures that the following is also true:
\begin{equation}
\mH_{\gamma}^{-1} \covMat{\mA} \mH_{\gamma}^{-1}
\preceq 
\hat{\mH}^{-1}\covMat{\mA}\hat{\mH}^{-1}
\preceq
\mH_{\gamma'}^{-1}\covMat{\mA}\mH_{\gamma'}^{-1}.
\end{equation}
Multiplying through by the scalar $\sigma^2$ 
demonstrates that 
\begin{equation*}
    \var(\opt{\vx}) \le 
    \var(\hat{\vx}) \le 
    \var(\sol{\vx}{\gamma'}) \le 
    \frac{\gamma^2}{\gamma'^2}\var(\opt{\vx})
\end{equation*}
Finally, setting $\gamma' = c \gamma$ as in Theorem 
\ref{thm:bias-bound} obtains:
\begin{equation*}
    \var(\opt{\vx}) \le 
    \var(\hat{\vx}) \le 
    \frac{1}{c^2}\var(\opt{\vx}) = 
    \frac{1}{1-\theta}\var(\opt{\vx}).
\end{equation*}
Since the trace maintains the \lowner{} ordering, we have 
established the claim.
\end{proof}

\textbf{\acrshort{mse}.}
We are finally in a position to bound the mean-squared
error from using the \acrshort{fd} ridge estimates 
rather than the exact weights.
The upper bound is immediate since both upper bounds for
bias and variance terms are $\nicefrac{1}{1-\theta}$ multiples of the 
corresponding term over the exact weights.
In Theorem \ref{thm:variance-bound} we have shown a slightly stronger bound
for  $\tr(\var(\opt{\vx}))$ than is necessary here, however, since 
$1 - \theta < 1$, Theorem \ref{thm:variance-bound} implies a lower bound of 
a $(1-\theta) \tr(\var(\opt{\vx}))$ bound.
Hence we obtain
\begin{equation}
    (1-\theta) \mse(\opt{\vx}) \le 
    \mse(\hat{\vx}) \le 
    \frac{1}{1-\theta}\mse(\opt{\vx})
\end{equation}
This is enough to prove the following theorem:
\begin{theorem}
\label{thm:mse-bound}
Under the same assumptions as Theorem \ref{thm:bias-bound},
\begin{equation*}
    (1-\theta) \mse(\opt{\vx}) \le 
    \mse(\hat{\vx}) \le 
    \frac{1}{1-\theta}\mse(\opt{\vx})
\end{equation*}
\end{theorem}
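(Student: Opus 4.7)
The plan is to use the bias-variance decomposition for the mean-squared error and then combine the two relative-error bounds already established in Theorems \ref{thm:bias-bound} and \ref{thm:variance-bound}. Concretely, since the noise model is exactly the same as for $\opt{\vx}$ (i.e., $\vy = \mA\vx_0 + \veps$ with $\E(\veps)=\vzero_n$ and $\E(\veps\veps^{\top})=\sigma^2 \eye{n}$), the identical algebraic manipulation used to show $\mse(\opt{\vx}) = \|\bias(\opt{\vx})\|_2^2 + \tr(\var(\opt{\vx}))$ goes through verbatim for $\hat{\vx}$, yielding $\mse(\hat{\vx}) = \|\bias(\hat{\vx})\|_2^2 + \tr(\var(\hat{\vx}))$. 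So the first step is to state this decomposition as a short lemma (or invoke it in line).

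Next, I would combine the two bounds term-by-term. For the upper bound, Theorem \ref{thm:bias-bound} gives $\|\bias(\hat{\vx})\|_2^2 \le \tfrac{1}{1-\theta}\|\bias(\opt{\vx})\|_2^2$ and Theorem \ref{thm:variance-bound} gives $\tr(\var(\hat{\vx})) \le \tfrac{1}{1-\theta}\tr(\var(\opt{\vx}))$. Adding the two inequalities and factoring $\tfrac{1}{1-\theta}$ out of the sum immediately delivers $\mse(\hat{\vx}) \le \tfrac{1}{1-\theta}\mse(\opt{\vx})$.

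For the lower bound, Theorem \ref{thm:bias-bound} gives $\|\bias(\hat{\vx})\|_2^2 \ge (1-\theta)\|\bias(\opt{\vx})\|_2^2$. Theorem \ref{thm:variance-bound} as stated only gives $\tr(\var(\hat{\vx})) \ge \tr(\var(\opt{\vx}))$, but since $1-\theta \in (0,1)$ this trivially implies $\tr(\var(\hat{\vx})) \ge (1-\theta)\tr(\var(\opt{\vx}))$. Summing the two lower bounds and factoring $(1-\theta)$ gives $\mse(\hat{\vx}) \ge (1-\theta)\mse(\opt{\vx})$.

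There is no genuine obstacle: once the decomposition is in hand, the theorem is a mechanical consequence of the two preceding results, essentially because the same scalar factor $1/(1-\theta)$ (and $1-\theta$) controls both summands. The only small subtlety worth noting is the asymmetry in Theorem \ref{thm:variance-bound} — its lower bound is actually tighter than $(1-\theta)\tr(\var(\opt{\vx}))$ — which is harmless but should be mentioned to make clear why the weaker $(1-\theta)$ form suffices for the MSE statement.
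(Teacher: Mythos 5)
Your proposal is correct and matches the paper's own argument: the paper likewise sums the bias and variance bounds from Theorems \ref{thm:bias-bound} and \ref{thm:variance-bound} under the bias--variance decomposition, and makes exactly the same observation that the variance theorem's lower bound of $\tr(\var(\opt{\vx}))$ is stronger than needed and so implies the $(1-\theta)$ form since $1-\theta<1$. Your extra remark that the decomposition $\mse(\hat{\vx})=\|\bias(\hat{\vx})\|_2^2+\tr(\var(\hat{\vx}))$ must also hold for the sketched estimator (by the same calculation as for $\opt{\vx}$) is a small point the paper leaves implicit, but it does not change the argument.
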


\subsection{Improved Bounds with Robust Frequent Directions}
\label{app:bias-variance-rfd}
The structure of our proof maps allows us to apply it directly to the case when
the Robust Frequent Directions algorithm is employed rather than vanilla 
\acrshort{fd}.
In this case, we have the following analogue of Theorem \ref{thm:fd-guarantee}

\begin{theorem}[\cite{luo2017robust}]
\label{thm:rfd-guarantee}
Let $\mA \in \R^{n \times d}$. 
The Robust Frequent Directions algorithm processes
$\mA$ one row at a time and returns a matrix $\mB \in \R^{m \times d}$ and a 
scalar $\delta \in \R$
such that 
\[
\|\covMat{\mA} - \left(\covMat{\mB} + \delta \eye{d}\right)\|_2
\le \frac{\|\mA - \mA_k\|_F^2}{2(m-k)},
\]
or equivalently
\begin{equation*}
    \covMat{\mA} - \frac{\|\mA - \mA_k\|_F^2}{2(m - k)}\eye{d} \preceq 
    \covMat{\mB} + \delta \eye{d} \preceq
    \covMat{\mA}.
\end{equation*}
\end{theorem}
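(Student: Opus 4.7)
The plan is to read Algorithm \ref{alg:frequent-directions-algorithm} as producing an additive decomposition of the gap $\covMat{\mA} - \covMat{\mB}$, then to exploit the fact that the scalar $\delta$ returned by RFD sits exactly at the midpoint of that gap; this midpoint placement is what delivers the factor-of-two improvement over Theorem \ref{thm:fd-guarantee}.

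\textbf{Step 1 (telescoping identity).} I would index by $i$ the iterations in which the inner conditional of Algorithm \ref{alg:frequent-directions-algorithm} fires, letting $\delta_i$ denote the value $\sigma_m^2$ at that iteration and $\mV_i \in \R^{d \times m}$ the corresponding top-$m$ right singular vectors of the pre-shrink sketch. The update $\mB \leftarrow \sqrt{\max(\mSigma^2 - \delta_i \eye{m}, 0)}$ decreases $\covMat{\mB}$ by exactly $\delta_i \mV_i \mV_i^{\top}$. Tracking row insertions (which contribute $\va_j \va_j^{\top}$) alongside these shrinkages, an induction on the stream position yields the identity
\[
\covMat{\mA} \;=\; \covMat{\mB} + \sum_i \delta_i\, \mV_i \mV_i^{\top},
\]
while the bookkeeping line $\rho \leftarrow \rho + \delta/2$ gives the returned scalar $\delta = \tfrac{1}{2}\sum_i \delta_i$.

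\textbf{Step 2 (bounding aggregate shrinkage).} The standard FD analysis (Theorem \ref{thm:fd-guarantee}) already supplies the aggregate bound $\sum_i \delta_i \le \Delta_k / (m-k)$, so I would invoke it directly rather than redo the Frobenius-mass accounting.

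\textbf{Step 3 (midpoint centring).} For any unit vector $\vu \in \R^d$, Step 1 gives
\[
\vu^{\top}\bigl(\covMat{\mA} - (\covMat{\mB}+\delta \eye{d})\bigr)\vu \;=\; \sum_i \delta_i \Bigl(\|\mV_i^{\top}\vu\|_2^2 - \tfrac{1}{2}\Bigr).
\]
Because each $\mV_i$ has orthonormal columns, $\|\mV_i^{\top}\vu\|_2^2 \in [0,1]$, so every summand lies in $[-\delta_i/2, \delta_i/2]$. The quadratic form therefore has magnitude at most $\tfrac{1}{2}\sum_i \delta_i \le \Delta_k/(2(m-k))$; taking the supremum over unit $\vu$ delivers the spectral-norm inequality in the theorem.

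\textbf{Main obstacle.} The substantive step is the additive decomposition in Step 1: although the bases $\mV_i$ differ from one shrinkage to the next, later SVDs must not re-inflate earlier discarded directions, which has to be verified by an induction on the stream position. Once that identity is in hand, the factor-of-two saving reduces to an elementary observation --- on the spectrum $\{0,1\}$ of any orthogonal projection, the midpoint $\tfrac{1}{2}$ minimises the maximum deviation, which is exactly what the update $\rho \leftarrow \rho + \delta/2$ in Algorithm \ref{alg:frequent-directions-algorithm} is engineered to exploit.
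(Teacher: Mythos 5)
First, note that the paper does not prove this statement at all: it is imported verbatim from \cite{luo2017robust} as a black-box guarantee, so there is no in-paper proof to compare against. Your overall strategy --- telescope the per-step shrinkage, bound the aggregate $\sum_i \delta_i$ by the standard FD Frobenius-mass argument, and observe that subtracting the midpoint $\delta = \tfrac{1}{2}\sum_i \delta_i$ halves the one-sided error --- is indeed the argument in Luo et al., and the midpoint observation in your Step 3 is the right source of the factor of two.

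There is, however, a concrete error in Step 1. The update $\mB \leftarrow \sqrt{\max(\mSigma^2 - \delta_i \eye{m}, 0)}\,\mV_i^{\top}$ does \emph{not} decrease $\covMat{\mB}$ by exactly $\delta_i \mV_i \mV_i^{\top}$: writing the pre-shrink sketch as $\mV \mSigma^2 \mV^{\top}$, the decrease is $\mE_i = \mV \min(\mSigma^2, \delta_i \eye{})\,\mV^{\top}$, so directions whose singular value satisfies $\sigma_j^2 < \delta_i$ lose only $\sigma_j^2$, not $\delta_i$. Consequently the telescoped ``identity'' $\covMat{\mA} = \covMat{\mB} + \sum_i \delta_i \mV_i\mV_i^{\top}$ is false in general; what is true is $\covMat{\mA} = \covMat{\mB} + \sum_i \mE_i$ with $\vzero \preceq \mE_i \preceq \delta_i \eye{d}$. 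Fortunately your Step 3 only ever uses $\vu^{\top}\mE_i\vu \in [0,\delta_i]$, so each summand $\vu^{\top}\mE_i\vu - \delta_i/2$ still lies in $[-\delta_i/2, \delta_i/2]$ and the spectral-norm bound survives; you should restate Step 1 in this inequality form. Two further remarks. Your ``main obstacle'' is a red herring: once the error is written as a sum of PSD per-step matrices $\mE_i$, no induction about later SVDs re-inflating earlier directions is needed --- a sum of matrices each in $[\vzero, \delta_i\eye{d}]$ is automatically in $[\vzero, (\sum_i\delta_i)\eye{d}]$ regardless of how the bases interact. And in Step 2, Theorem \ref{thm:fd-guarantee} as stated bounds the quadratic form $\|\mA\vu\|_2^2 - \|\mB\vu\|_2^2$, from which $\sum_i \delta_i \le \Delta_k/(m-k)$ does not formally follow; you need the internal Frobenius-mass lemma of the FD proof (namely $(m-k)\sum_i\delta_i \le \|\mA\|_F^2 - \|\mB\|_F^2 - \|\mA_k\|_F^2 + \|\mB_k\|_F^2 \le \Delta_k$), so that step should be cited as such rather than as the theorem. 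Finally, be aware that your argument establishes only the spectral-norm inequality; the ``equivalent'' \lowner{} sandwich with upper bound $\covMat{\mA}$ is a strictly stronger one-sided claim that does not follow from the norm bound, so it would need a separate argument (or should be attributed entirely to the citation).
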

With this guarantee in place we can easily prove the following theorem:

\begin{theorem}
\label{thm:rfd-bias-bound}
Let $\rfd{\mB}{\mA} \in \R^{m \times d}$.
Let $\theta' \in (0,1)$ be a parameter and set $c'^2 = 1-\theta'$.
If
\begin{equation*}
    m = \frac{\|\mA - \mA_k\|_F^2}{2(1 - \sqrt{1 - \theta'})\gamma} + k,
    \text{~or~}
    \gamma = \frac{\|\mA - \mA_k\|_F^2}{2(1 - \sqrt{1 - \theta'})(m-k)},
\end{equation*}
then
\begin{align*}
    \norm{\bias(\hat{\vx})}{2}^2
    &\in 
    \left[
    (1-\theta') \norm{\bias(\opt{\vx})}{2}^2,
    \frac{1}{1-\theta'} \norm{\bias(\opt{\vx})}{2}^2
    \right] \\ 
    \tr(\var(\hat{\vx})) &\in
    \left[ \tr(\var(\opt{\vx}))
    \frac{1}{1-\theta'}\tr(\var(\opt{\vx}))  
    \right] \\
    \mse(\hat{\vx}) &\in 
    \left[
    (1-\theta') \mse(\opt{\vx}),
    \frac{1}{1-\theta'}\mse(\opt{\vx})
    \right]
\end{align*}
\end{theorem}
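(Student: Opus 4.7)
The plan is to mirror the proofs of Theorems \ref{thm:bias-bound}, \ref{thm:variance-bound}, and \ref{thm:mse-bound} almost verbatim, with the only substantive change being to propagate the factor-of-two improvement in the \acrshort{rfd} guarantee (Theorem \ref{thm:rfd-guarantee}) through the chain of inequalities. I would begin by setting $\tilde{\gamma} = \gamma + \delta$ and defining the \acrshort{rfd} Hessian proxy as $\hat{\mH} = \covMat{\mB} + \tilde{\gamma}\eye{d}$, so that Theorem \ref{thm:rfd-guarantee} gives the \lowner{} sandwich
\begin{equation*}
\covMat{\mA} + \left(\gamma - \tfrac{\Delta_k}{2(m-k)}\right)\eye{d} \;\preceq\; \hat{\mH} \;\preceq\; \covMat{\mA} + \gamma \eye{d},
\end{equation*}
which is the \acrshort{rfd} analogue of \eqref{eq:hessian-formulation} but with $\alpha' = \alpha/2$ replacing $\alpha$. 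This is the only place where the factor of two enters; everything downstream is algebraic bookkeeping.

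Next I would redefine $\gamma' = \gamma - \tfrac{\alpha}{2}\Delta_k$ and observe that the analogues of Lemma \ref{lem:spectral-ordering-for-bounds} and Lemma \ref{lem:all-hessian-relations} go through unchanged, yielding $\tfrac{\gamma'}{\gamma}\mH_{\gamma} \preceq \mH_{\gamma'} \preceq \hat{\mH} \preceq \mH_{\gamma}$ with the new $\gamma'$. The \acrshort{rfdrr} estimator is $\hat{\vx} = \hat{\mH}^{-1}\mA^{\top}\vy$, and its bias and variance expressions are identical in form to Lemma \ref{lem:fd-bias-variance}, since neither derivation uses any structural property of $\hat{\mH}$ beyond being the inverse used in the estimator. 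Consequently, the spectral analysis of $\mM = (\hat{\mH}^{-1}\covMat{\mA} - \eye{d})\mH_{\gamma}$ via generalized Rayleigh quotients (Lemma \ref{lem:eigenvalue-distortion}) carries over verbatim, again yielding $\lambda_{\min}(\mM) \ge \gamma'$ and $\lambda_{\max}(\mM) \le \gamma^2/\gamma'$ with the \emph{new} $\gamma' = \gamma - \tfrac{\alpha}{2}\Delta_k$.

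For the bias bound, setting $c'^2 = 1-\theta'$ and solving $(1-c')\gamma = \tfrac{\alpha}{2}\Delta_k$ for either $m$ or $\gamma$ gives the stated condition $\gamma = \Delta_k / (2(1-\sqrt{1-\theta'})(m-k))$. Squaring the Rayleigh-quotient bounds applied to $\vu = \mH_{\gamma}^{-1}\vx_0$ then yields $\|\bias(\hat{\vx})\|_2^2 \in [(1-\theta')\|\bias(\opt{\vx})\|_2^2, \tfrac{1}{1-\theta'}\|\bias(\opt{\vx})\|_2^2]$, exactly as in Theorem \ref{thm:bias-bound}. The variance bound follows identically to Theorem \ref{thm:variance-bound}: the \lowner{} sandwich on $\hat{\mH}^{-1}$ is preserved by conjugation with $\mA^{\top}$ and $\mA$, and then by the trace, giving the matching relative-error interval. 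The \acrshort{mse} bound is then the termwise sum, as in Theorem \ref{thm:mse-bound}.

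There is no real obstacle here: the entire content of the theorem is that the \acrshort{rfd} sketch is a drop-in replacement for \acrshort{fd} in every lemma, with $\alpha \mapsto \alpha/2$ at the single point where Theorem \ref{thm:fd-guarantee} was invoked. The only care needed is to be explicit that the adaptive shift $\delta$ should be absorbed into the regularization when defining $\hat{\mH}$ so that the resulting matrix still sits in the \lowner{} order between $\mH_{\gamma'}$ and $\mH_{\gamma}$; once this is done, the bias, variance, and \acrshort{mse} proofs can literally be quoted from Appendix \ref{sec:bias-variance-fd} with $\theta'$ in place of $\theta$.
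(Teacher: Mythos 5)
Your proposal is correct and follows essentially the same route as the paper's proof: absorb the adaptive shift $\delta$ into the regularization to form $\hat{\mH}_{\delta,\gamma} = \covMat{\mB} + (\delta+\gamma)\eye{d}$, replace $\alpha$ by $\alpha/2$ in the \lowner{} sandwich via the \acrshort{rfd} guarantee, redefine $\gamma' = \gamma - \tfrac{\alpha}{2}\Delta_k = c'\gamma$, and rerun the spectral argument of Lemma \ref{lem:eigenvalue-distortion} and the variance/\acrshort{mse} arguments verbatim. No substantive differences from the paper's proof of Theorem \ref{thm:rfd-bias-bound}.
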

\begin{proof}
Let $\hat{\mH}_{\delta, \gamma} = \covMat{\mB} + (\delta + \gamma) \eye{d}$.
Let $\hat{\vx} = \hat{\mH}^{-1}\mA^{\top}\vb$ be the estimated weights.
Again let $\Delta_k = \|\mA - \mA_k\|_F^2$ and $\alpha = \nicefrac{1}{(m - k)}$
We take $\gamma' = \gamma - \nicefrac{\alpha \Delta_k}{2}$.

\noindent
\textbf{Bias Term.}
The same approach as Lemma \ref{lem:fd-bias-term} establishes that 
$\bias(\hat{\vx}) = (\hat{\mH}_{\delta, \gamma}^{-1} \covMat{\mA} - \eye{d})\vx_0$.
Again we use the same trick of multiplying by the identity to obtain
$\bias(\hat{\vx}) = 
(\hat{\mH}_{\delta, \gamma}^{-1} \covMat{\mA} - \eye{d})\mH_{\gamma} \mH_{\gamma}^{-1}\vx_0$.
Thus it suffices to bound the extremal eigenvalues of 
$\mM = (\hat{\mH}_{\delta, \gamma}^{-1} \covMat{\mA} - \eye{d})\mH_{\gamma}$.
We can invoke exactly the same proof as in Lemma \ref{lem:eigenvalue-distortion}
but note that the bounds:
\begin{align}
    \lambda_{\max}(\mM) &\le \frac{\gamma}{\gamma'}\\
    \lambda_{\min}(\mM) &\ge \gamma'. \\
\end{align}
Let $\vu = \mH_{\gamma}^{-1} \vx_0$ and $1-c' = \alpha \Delta_k/2$.
Hence, $\gamma' = c' \gamma$.
Following the proof of Theorem \ref{thm:bias-bound} we establish that:
\begin{equation*}
    c'^2 \gamma^2 \|\vu\|_2^2 \le \|\mM \vu \|_2^2 \le
    \frac{\gamma^2}{c'^2}\|\vu\|_2^2.
\end{equation*}
Recall that $c'^2 = 1 - \theta'$ so that we obtain the stated result.

\noindent
\textbf{Variance and \acrshort{mse}.} Again repeat the argument of Theorem 
\ref{thm:variance-bound} but recall that our altered values of $\gamma/\gamma'$
mean that the upper bound is $\nicefrac{1}{1-\theta'}$.
The \acrshort{mse} result is then immediate, as before by combining the 
bias and variance terms.
\end{proof}

\begin{remark}
To understand the relation between the bias-variance tradeoff using \acrshort{fd}
compared to \acrshort{rfd} we need to account for how $1-\theta$ and $1-\theta'$
interact.
This is observed through some farily simple algebra: 
from the definition of $c$ we can show that $c = \gamma'/\gamma$.
Similarly, 
\[c' = 1 - \frac{\alpha \Delta_k}{2 \gamma}\]
which, by recalling that $\alpha \Delta_k = (1-c)\gamma$ we observe that 
$c' = \frac{1+c}{2}$.
By squaring, we have
\begin{align*}
    1-\theta' &= \left(\frac{1 + c}{2}\right)^2 \\ 
    &= \frac{2 - \theta + 2\sqrt{1-\theta}}{4} \\
    &\ge 1-\theta \text{~for $\theta \in [0,1]$.}
\end{align*}
Therefore, the $1-\theta'$ relative error bounds are tighter than the 
$1-\theta$ bounds.
\end{remark}

\section{Iterative Frequent Directions Ridge Regression: Theory}
\label{sec:theory-iterative-fdrr}
We present the details for the results outlined in Section \ref{sec:iterative-fdrr}.
Before presenting the theory, we set up some notation and some preliminary proofs
to aid the presentation.
Recall Equation \eqref{eq:ridge-regression}
\begin{equation*}
     f(\vx) = \frac{1}{2}\|\mA \vx - \vb \|_2^2
    + \frac{\gamma}{2}\|\vx\|_2^2 
\end{equation*}
and the task is to find, or estimate $\argmin_{\vx} f(\vx)$.
The optimal solution to the above problem is 
\begin{equation}
    \opt{\vx} = \inv{\covMat{\mA} + \gamma \eye{d}}\mA^{\top} \vb.
\end{equation}
The gradient of $f(\vx) $ is 
\begin{equation}
\nabla f(\vx) = \left(\covMat{\mA}+\gamma \eye{d}\right)\vx - \mA^{\top}\vb.    
\end{equation}
Note that $\nabla f(\vx) \in \R^d$ and can be applied in $O(nd)$ time.
That is, $\covMat{\mA}+\gamma \eye{d}$ need not be explicitly computed as the 
matrix-vector products can be evaluated from right to left to avoid the 
matrix-matrix multiplication.
Recall that $\mH_{\gamma} = \covMat{\mA} + \gamma \eye{d}$ is the Hessian matrix of 
second-derivatives of $f(\vx)$.
Computing $\mH_{\gamma}$ requires $O(nd^2)$ time and $O(d^2)$ space.

Rather than computing $\mH_{\gamma}$, we estimate it through the \acrshort{fd} 
sketch.
Recall that $\hat{\mH} = \covMat{\mB} + \gamma \eye{d}$ is our approximation to 
$\mH_{\gamma}$.
Although Algorithm \ref{alg:ifdrr} uses $\mH^{-1}$, this need not be 
computed explicitly and we only need its behaviour as an operator.
This can be understood through the Woodbury inverse lemma which we defer for now 
and present in Section \ref{sec:linear-algebra-results}.
The proof of Theorem \cite{thm:iterative-convergence} roughly follows a standard gradient descent-type proof so we need a few prelimiary results.

\begin{lemma}
$\nabla f (\vx) = \mH ( \vx - \opt{\vx})$
\label{lem:grad-expression}
\end{lemma}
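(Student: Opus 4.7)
The plan is a direct two-line computation starting from the closed form of $\nabla f$. First I would write out the gradient explicitly: since $f(\vx) = \tfrac{1}{2}\|\mA \vx - \vb\|_2^2 + \tfrac{\gamma}{2}\|\vx\|_2^2$, differentiating each term gives $\nabla f(\vx) = \mA^{\top}(\mA \vx - \vb) + \gamma \vx$, which rearranges to $\nabla f(\vx) = (\covMat{\mA} + \gamma \eye{d})\vx - \mA^{\top} \vb = \mH_{\gamma}\vx - \mA^{\top}\vb$. This is already stated in the setup preceding the lemma, so no real work is needed here.

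Next I would invoke the first-order optimality condition at $\opt{\vx}$. Because $\mH_{\gamma}$ is symmetric positive definite (assumed $\gamma>0$ and $\rank{\mA}=d$), $f$ is strongly convex and its unique minimiser is characterised by $\nabla f(\opt{\vx}) = \vzero_d$, giving $\mH_{\gamma} \opt{\vx} = \mA^{\top}\vb$. Equivalently, this is exactly the normal equation used to define $\opt{\vx} = \mH_{\gamma}^{-1}\mA^{\top}\vb$ in \eqref{eq:rr-exact}, which is already assumed in the preamble to this section.

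Finally I would substitute $\mA^{\top}\vb = \mH_{\gamma}\opt{\vx}$ into the gradient expression from the first step to obtain $\nabla f(\vx) = \mH_{\gamma}\vx - \mH_{\gamma}\opt{\vx} = \mH_{\gamma}(\vx - \opt{\vx})$, which is the claim.

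There is no real obstacle: the lemma is essentially a rewriting of the normal equations, and its purpose is purely notational, to prepare for the recursion $\iter{\vx}{t+1} - \opt{\vx} = (\eye{d} - \hat{\mH}^{-1}\mH_{\gamma})(\iter{\vx}{t} - \opt{\vx})$ used later in the convergence analysis of Algorithm~\ref{alg:ifdrr}.
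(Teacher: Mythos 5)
Your proof is correct and follows essentially the same route as the paper: compute $\nabla f(\vx) = (\covMat{\mA}+\gamma\eye{d})\vx - \mA^{\top}\vb$ and then substitute the normal equation $\mH_{\gamma}\opt{\vx} = \mA^{\top}\vb$. (If anything, your version is tidier — the paper's displayed computation contains sign typos in the intermediate lines, writing $\mA\vx+\vb$ and $+\mA^{\top}\vb$, though the intended argument is the same.)
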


\begin{proof}
\begin{align*}
    \nabla f (\vx) &= \mA^{\top} \left(\mA \vx + \vb\right) + \gamma \vx \\
                  &= \left(\mA^{\top}\mA + \gamma \eye{d}\right) \vx +
                      \mA^{\top}\vb \\ 
                  &= \left(\mA^{\top}\mA + \gamma \eye{d}\right)
                      \left(\vx - \opt{\vx}\right) \\ 
                  &= \mH \left(\vx - \opt{\vx}\right)
\end{align*}
where the penultimate equation follows from the normal equations: 
$\left(\mA^{\top}\mA + \gamma \eye{d}\right)\opt{\vx} = \mA^{\top}\vb$.
\end{proof}

The following lemma represents the current iterate 
$\iter{\vx}{t+1}$ as a function of the previous iterate's distance
from the optimal solution.

\begin{lemma}
The sequence of iterates $\{\iter{\vx}{t+1}\}_{i\ge0}$ follows:
\begin{equation}
\iter{\vx}{t+1} - \opt{\vx} = 
\left(\eye{d} - \hat{\mH}^{-1}\mH\right)
\left(\iter{\vx}{t} - \opt{\vx}\right).
\label{eq:ihs-proof-form}
\end{equation}
\end{lemma}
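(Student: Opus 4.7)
The plan is to derive the recursion by direct substitution, starting from the update rule in Algorithm \ref{alg:ifdrr} and using the gradient formula of Lemma \ref{lem:grad-expression}. There is no real analytic content here beyond careful bookkeeping; the lemma is an identity, so the task reduces to algebraic manipulation rather than any estimate.

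Concretely, I would begin by writing down the update rule exactly as it appears in Equation \eqref{eq:ihs_update}, namely $\iter{\vx}{t+1} = \iter{\vx}{t} - \hat{\mH}^{-1} \nabla f(\iter{\vx}{t})$. Applying Lemma \ref{lem:grad-expression}, I substitute $\nabla f(\iter{\vx}{t}) = \mH(\iter{\vx}{t} - \opt{\vx})$ to obtain
\begin{equation*}
\iter{\vx}{t+1} = \iter{\vx}{t} - \hat{\mH}^{-1} \mH \bigl( \iter{\vx}{t} - \opt{\vx} \bigr).
\end{equation*}
Subtracting $\opt{\vx}$ from both sides and then factoring $\iter{\vx}{t} - \opt{\vx}$ out on the right yields the claimed identity
\begin{equation*}
\iter{\vx}{t+1} - \opt{\vx} = \bigl( \eye{d} - \hat{\mH}^{-1} \mH \bigr) \bigl( \iter{\vx}{t} - \opt{\vx} \bigr).
\end{equation*}

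There is no genuine obstacle: Lemma \ref{lem:grad-expression} already converts the gradient into a form involving $\opt{\vx}$, and the Hessian approximation $\hat{\mH}$ is fixed across iterations (it is built once from the \acrshort{fd} sketch, as stated in Algorithm \ref{alg:ifdrr}), so no per-iteration index appears on $\hat{\mH}$. The only thing to be careful about is consistency of notation — in the main text $\mH$ and $\mH_\gamma$ are used interchangeably for $\covMat{\mA} + \gamma \eye{d}$ — but this does not affect the manipulation. The lemma then feeds directly into the spectral analysis of $\eye{d} - \hat{\mH}^{-1}\mH$ carried out in Lemma \ref{lem:spectral-norm-main}, which is where the real work of bounding the convergence rate happens.
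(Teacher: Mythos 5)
Your proof is correct and is essentially identical to the paper's: both substitute the gradient identity of Lemma \ref{lem:grad-expression} into the update rule \eqref{eq:ihs_update}, subtract $\opt{\vx}$, and factor. Nothing further is needed.
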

\begin{proof}
Applying Lemma \ref{lem:grad-expression} to the iterates as 
defined in \eqref{eq:ihs_update} we obtain:
\begin{equation*}
    \iter{\vx}{t+1} - \opt{\vx} = 
    \iter{\vx}{t} - \opt{\vx} - \hat{\mH}^{-1}
    \mH\left(\iter{\vx}{t} - \opt{\vx}\right)
\end{equation*}
which yields the claim after factorisation.
\end{proof}
Taking the norm of both sides of Equation 
\ref{eq:ihs-proof-form} and invoking submultiplicativity we have
\begin{equation*}
\norm{\iter{\vx}{t+1} - \opt{\vx}}{2} \le 
\norm{\eye{d} - \hat{\mH}^{-1}\mH}{2}
\norm{\iter{\vx}{t} - \opt{\vx}}{2}.
\end{equation*}
On the right hand side, the first 2-norm is the spectral norm over
matrices, while the second 2-norm is the Euclidean norm over 
vectors.
Hence, to show $\norm{\iter{\vx}{t+1} - \opt{\vx}}{2} \le 
\norm{\iter{\vx}{t} - \opt{\vx}}{2}$ it suffices to show 
$\norm{\eye{d} - \hat{\mH}^{-1}\mH}{2} < 1$.

\begin{lemma}
If $2 \alpha \Delta_k < \gamma$, then
$\norm{\eye{d} - \hat{\mH}^{-1}\mH}{2} < 1$
\label{lem:spectral-norm}
\end{lemma}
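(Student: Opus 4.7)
The plan is to reduce the question to the spectral behavior of a symmetric auxiliary matrix, following the outline already given by the author. Since $\hat{\mH}$ is symmetric positive definite, the matrix $\eye{d} - \hat{\mH}^{-1}\mH_{\gamma}$ is similar via the symmetric square root to
\[
\eye{d} - \hat{\mH}^{-1/2} \mH_{\gamma} \hat{\mH}^{-1/2} \;=\; \eye{d} - \mE,
\]
where $\mE = \hat{\mH}^{-1/2} \mH_{\gamma} \hat{\mH}^{-1/2}$ is symmetric positive definite. So it suffices to control the eigenvalues of $\mE$ and then convert a spectral-radius bound into a spectral-norm bound.

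I would bracket $\mE$ in the \lowner{} order. The upper side of Theorem \ref{thm:fd-guarantee-main} (equivalently Equation \eqref{eq:hessian-relation}) gives $\hat{\mH} \preceq \mH_{\gamma}$, which directly yields $\mE \succeq \eye{d}$ after conjugation by $\hat{\mH}^{-1/2}$. For the other side, the lower bound $\hat{\mH} \succeq \mH_{\gamma'}$ with $\gamma' = \gamma - \alpha \Delta_k$, combined with Lemma \ref{lem:all-hessian-relations} applied to $\mH_{\gamma'}$ and $\mH_{\gamma}$, gives $\hat{\mH} \succeq (\gamma'/\gamma)\mH_{\gamma}$. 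Taking inverses (which reverses the order on SPD matrices) and conjugating shows $\mE \preceq (\gamma/\gamma')\eye{d}$. Setting $q = \alpha \Delta_k / \gamma$, so that $\gamma/\gamma' = 1/(1-q)$, I obtain
\[
1 \le \lambda_{\min}(\mE) \le \lambda_{\max}(\mE) \le \tfrac{1}{1-q}.
\]
Hence every eigenvalue of $\eye{d} - \mE$ lies in $\bigl[\,1 - \tfrac{1}{1-q},\, 0\,\bigr]$, so the spectral radius is at most $\tfrac{q}{1-q}$. The hypothesis $2\alpha\Delta_k < \gamma$ is exactly $q < 1/2$, which makes $\tfrac{q}{1-q} < 1$.

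The one subtle step is converting an eigenvalue bound on $\eye{d} - \hat{\mH}^{-1}\mH_{\gamma}$ into a spectral-norm bound, since the matrix is not symmetric and similar matrices need not share operator norms. I would handle this by writing the similarity explicitly,
\[
\eye{d} - \hat{\mH}^{-1}\mH_{\gamma} \;=\; \hat{\mH}^{-1/2}\,(\eye{d} - \mE)\,\hat{\mH}^{1/2},
\]
so that $\eye{d} - \mE$ being symmetric means its operator norm equals its spectral radius $\tfrac{q}{1-q} < 1$. The extra conditioning factor $\|\hat{\mH}^{-1/2}\|_2 \|\hat{\mH}^{1/2}\|_2$ can be absorbed as a constant prefactor in the eventual geometric rate (this is the only place where the paper's stated bound is very slightly loose, but it does not affect the conclusion of the lemma nor the convergence statement in Theorem \ref{thm:iterative-convergence-main}, since the contraction factor is what drives the geometric decay).

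I expect the main obstacle to be exactly this symmetric-versus-non-symmetric issue; the \lowner{} chain itself is routine once one notices to insert the step $\hat{\mH} \succeq (\gamma'/\gamma)\mH_{\gamma}$ from Lemma \ref{lem:all-hessian-relations} rather than working directly with $\mH_{\gamma'}$. Once $\mE$ is sandwiched between $\eye{d}$ and $\tfrac{1}{1-q}\eye{d}$, the rest is algebra: the worst-case distortion is $|1 - 1/(1-q)|$, and the assumption $2\alpha\Delta_k < \gamma$ is calibrated precisely to force this quantity below one.
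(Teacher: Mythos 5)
Your proposal follows essentially the same route as the paper's proof of Lemma \ref{lem:spectral-norm}: pass to the similar symmetric matrix $\mE = \hat{\mH}^{-1/2}\mH_{\gamma}\hat{\mH}^{-1/2}$, sandwich it as $\eye{d} \preceq \mE \preceq \frac{1}{1-q}\eye{d}$ using $\hat{\mH}\preceq\mH_{\gamma}$ on one side and $\hat{\mH}\succeq\mH_{\gamma'}\succeq(\gamma'/\gamma)\mH_{\gamma}$ (Lemma \ref{lem:all-hessian-relations}) on the other, and conclude that the worst distortion $|1-\frac{1}{1-q}|=\frac{q}{1-q}$ is below $1$ exactly when $q<1/2$, i.e.\ $2\alpha\Delta_k<\gamma$. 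The one place you genuinely depart from the paper is in flagging that $\eye{d}-\hat{\mH}^{-1}\mH_{\gamma}$ is not symmetric, so similarity controls its spectral radius but not directly its operator norm; the paper's own proof silently elides this and bounds $\norm{\eye{d}-\mE}{2}$ in place of $\norm{\eye{d}-\hat{\mH}^{-1}\mH_{\gamma}}{2}$. Your fix (carrying a conditioning prefactor $\norm{\hat{\mH}^{-1/2}}{2}\norm{\hat{\mH}^{1/2}}{2}$, or equivalently contracting in the $\hat{\mH}^{1/2}$-weighted norm) is sound and preserves the geometric convergence of Theorem \ref{thm:iterative-convergence-main}, though it weakens the lemma as literally stated. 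A cleaner repair that recovers the unqualified norm bound is to write $\eye{d}-\hat{\mH}^{-1}\mH_{\gamma} = -\hat{\mH}^{-1}(\mH_{\gamma}-\hat{\mH})$, note that $\vzero \preceq \mH_{\gamma}-\hat{\mH} = \covMat{\mA}-\covMat{\mB} \preceq \alpha\Delta_k\eye{d}$ and $\hat{\mH}\succeq\gamma\eye{d}$, and apply submultiplicativity to get $\norm{\eye{d}-\hat{\mH}^{-1}\mH_{\gamma}}{2}\le \alpha\Delta_k/\gamma = q < 1$, which needs only $q<1$ and no similarity argument at all.
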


\begin{proof}
Since $\hat{\mH}^{-1}\mH$ is similar to 
$\mE  = \hat{\mH}^{-1/2}\mH \hat{\mH}^{-1/2}$, it has the same 
eigenvalues.
Hence we can bound $\norm{\eye{d} - \mE}{2}$ instead.
By definition, spectral norm asks for:
\begin{equation*}
    \norm{\eye{d} - \mE}{2} = 
    \max_{\vu : \|\vu\|=1}|\vu^{\top}\left(
    \eye{d} - \mE\right)\vu|
\end{equation*}
so we need to show that $\max_{\vu} |1 - \vu^{\top} \mE \vu| < 1$.
To do so, we need a few properties of the \acrshort{fd} sketch.
Let $\alpha = \nicefrac{1}{m-k}$ and $\Delta_k = \|\mA - \mA_k\|_F^2$ so 
that Theorem \ref{thm:fd-guarantee} with the added regularisation ensures
(see Equation \eqref{eq:hessian-formulation}):
\begin{equation}
    \covMat{\mA} + (\gamma - \alpha \Delta_k) \eye{d}
    \preceq
    \covMat{\mB} + \gamma \eye{d}
    \preceq \covMat{\mA} + \gamma \eye{d}.
    \label{eq:useful-lowner}
\end{equation}
Provided that $\gamma > \alpha \Delta_k \eye{d}$, all of the above terms are lower 
bounded by $\vzero_{d \times d}$.
This is equivalent to saying that all eigenvalues are positive,
hence the matrices are full rank and inverses are well-defined.

Denote $\gamma' = \gamma - \alpha \Delta_k$.
Lemma \ref{lem:true-hessian-relationship} shows that 
\begin{equation}
    \frac{\gamma'}{\gamma}\left(\covMat{\mA} + \gamma 
    \eye{d}\right)
    \preceq
    \covMat{\mA} + \gamma' \eye{d}.
\end{equation}
Let $q = \alpha \Delta_k / \gamma > 0$ so that 
$\frac{\gamma'}{\gamma} = 1 - q$.
Invoking  \eqref{eq:useful-lowner} we obtain the ordering:

\begin{equation}
    (1-q) \left(\covMat{\mA} + \gamma \eye{d}\right)
    \preceq
    \covMat{\mB} + \gamma \eye{d}
    \preceq \covMat{\mA} + \gamma \eye{d}.
    \label{eq:lowner-to-bound}
\end{equation}
Now use Point 2 Section \ref{sec:lowner-properties} on all three terms
in \eqref{eq:lowner-to-bound} with $\mC = \hat{\mH}^{-1/2}$.
Again, since all of the matrices in question are symmetric positive definite, 
they have unique symmetric positive definite square roots so we are free to apply
the \lowner{} multiplication order.
\begin{equation}
    (1-q) \hat{\mH}^{-1/2} \left(\covMat{\mA} + \gamma \eye{d}\right) \hat{\mH}^{-1/2}
    \preceq
    \eye{d}
    \preceq \hat{\mH}^{-1/2} \left(\covMat{\mA} + \gamma \eye{d}
    \right)\hat{\mH}^{-1/2}.
\end{equation}
The above equation also implies that $\hat{\mH}^{-1/2} \left(\covMat{\mA} + \gamma \eye{d}\right)
\hat{\mH}^{-1/2} \preceq \frac{1}{1-q} \eye{d}$.
Hence, we also have 
\begin{equation}
    \eye{d}
    \preceq \hat{\mH}^{-1/2} \left(\covMat{\mA} + \gamma \eye{d}
    \right)\hat{\mH}^{-1/2}
    \preceq
    \frac{1}{1-q} \eye{d}.
\end{equation}
The \lowner{} ordering also ensures that
$\lambda_{\min}(\mM) \eye{} \preceq \mM \preceq  \lambda_{\max}(\mM) \eye{}$.
Hence, we have shown that 
\begin{equation}
\lambda_i(\hat{\mH}^{-1/2} \left(\covMat{\mA} + \gamma \eye{d}
    \right)\hat{\mH}^{-1/2} \in \left[1,\frac{1}{1-q}\right].
\end{equation}
Finally, it remains to ensure that 
$\max_{\vu} |1 - \vu^{\top} \mE \vu| < 1$.
Since all $\lambda_i(\mE) \ge 1$, the largest displacement 
occurs at $\lambda_{\max}(\mE)$.
Therefore, $q$ must be set so that
\begin{equation*}
     \left| 1 - \frac{1}{1-q} \right| < 1
\end{equation*}
that is,
\begin{equation}
    \frac{q}{1-q} < 1 \label{eq:q-to-bound}
\end{equation}
which occurs provided $q \in (0,1/2)$ and is thus 
satisfied by the assumption $2 \alpha \Delta_k < \gamma$.
\end{proof}
The preceding result can be used iteratively.
In summary, the following theorem establishes
that choosing $\gamma > 2 \alpha \Delta_k$  ensures the
distance from $\iter{\vx}{t+1}$ to $\opt{\vx}$ is at most 
an $\alpha \Delta_k / \gamma$ factor smaller than that of 
$\iter{\vx}{t}$ to $\opt{\vx}$.

\begin{theorem}
\label{thm:iterative-convergence}
Let $b \in (0,1/2)$ and suppose that $\alpha \Delta_k = b\gamma$.
The iterative sketch algorithm for regression with Frequent Directions satisfies
\begin{equation}
     \norm{\iter{\vx}{t+1} - \opt{\vx}}{2} \le 
    \left(\frac{b}{1-b}\right)^{t+1}
    \norm{\opt{\vx}}{2}
\end{equation}
\end{theorem}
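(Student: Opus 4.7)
The plan is to combine the single-step contraction bound already established in Lemma \ref{lem:spectral-norm} with the recurrence for the iterates, and then unroll to get geometric decay. The assumption $\alpha \Delta_k = b \gamma$ with $b \in (0, 1/2)$ is exactly the hypothesis $2 \alpha \Delta_k < \gamma$ needed by Lemma \ref{lem:spectral-norm}, so we are immediately in the regime where the sketched Hessian acts as a contracting preconditioner.

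First, I would recall the per-step identity $\iter{\vx}{t+1} - \opt{\vx} = (\eye{d} - \hat{\mH}^{-1}\mH_{\gamma})(\iter{\vx}{t} - \opt{\vx})$ derived from Lemma \ref{lem:grad-expression} and the update rule \eqref{eq:ihs_update}. Taking Euclidean norms and applying submultiplicativity reduces the task to controlling the operator norm $\|\eye{d} - \hat{\mH}^{-1}\mH_{\gamma}\|_2$. Here I would not just quote Lemma \ref{lem:spectral-norm} in its $< 1$ form but instead re-examine the last step of its proof, where it was shown that the eigenvalues of $\mE = \hat{\mH}^{-1/2}\mH_\gamma \hat{\mH}^{-1/2}$ lie in $[1, 1/(1-q)]$ with $q = \alpha \Delta_k / \gamma$. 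The maximum of $|1 - \lambda_i(\mE)|$ is therefore $q/(1-q)$, giving the quantitative contraction bound
\begin{equation*}
\norm{\eye{d} - \hat{\mH}^{-1}\mH_{\gamma}}{2} \le \frac{q}{1-q}.
\end{equation*}
Under the hypothesis $\alpha \Delta_k = b\gamma$, we have $q = b$ and this becomes $b/(1-b) < 1$.

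Iterating the per-step inequality $(t+1)$ times yields $\|\iter{\vx}{t+1} - \opt{\vx}\|_2 \le (b/(1-b))^{t+1} \|\iter{\vx}{0} - \opt{\vx}\|_2$. The initialization in Algorithm \ref{alg:ifdrr} sets $\iter{\vx}{0} = \vzero_d$, so $\|\iter{\vx}{0} - \opt{\vx}\|_2 = \|\opt{\vx}\|_2$, and substituting gives the claimed bound.

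The hard part is not really in this proof; essentially all the technical work was absorbed into Lemma \ref{lem:spectral-norm}, and what remains is the bookkeeping to extract the sharper quantitative rate $q/(1-q)$ from that argument rather than the qualitative ``$<1$'' statement. The only minor subtlety is to be explicit that the maximum eigenvalue deviation of $\mE$ from $1$ occurs at $\lambda_{\max}$ rather than $\lambda_{\min}$, since the interval $[1, 1/(1-q)]$ is one-sided; this is immediate from the ordering but worth stating cleanly so that the exponent base $b/(1-b)$ is fully justified.
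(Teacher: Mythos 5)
Your proof is correct and follows essentially the same route as the paper: both unroll the one-step recurrence $\iter{\vx}{t+1} - \opt{\vx} = (\eye{d} - \hat{\mH}^{-1}\mH_{\gamma})(\iter{\vx}{t} - \opt{\vx})$, extract the quantitative contraction factor $q/(1-q) = b/(1-b)$ from the eigenvalue interval $[1, 1/(1-q)]$ established inside the proof of Lemma \ref{lem:spectral-norm}, and finish using $\iter{\vx}{0} = \vzero_d$. Your remark that the stated lemma only gives the qualitative ``$<1$'' bound, so the rate must be read off from its proof, is a fair and accurate observation about how the paper's argument is actually structured.
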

\begin{proof}
Let $\beta = \frac{q}{1-q}$ as in Equation \eqref{eq:q-to-bound}.
Hence, $\beta = \alpha \Delta_k / \gamma'$.
Assuming that 
$\alpha \Delta_k = b\gamma$ so $\gamma' = (1-b)\gamma$ means 
$\beta = \nicefrac{1-c}{c}$.
Since $b < 1/2$ we have $\alpha \Delta_k < \gamma / 2$ hence
Lemma \ref{lem:spectral-norm} establishes that 
$\norm{\eye{d} - \hat{\mH}^{-1}\mH}{2} \le  \beta$.
Thus;
$ \norm{\iter{\vx}{t+1} - \opt{\vx}}{2} \le \beta 
 \norm{\iter{\vx}{t} - \opt{\vx}}{2}$.
By induction, we can iterate this argument to obtain 
$\norm{\iter{\vx}{t+1} - \opt{\vx}}{2} \le 
\beta^{t+1}\norm{\opt{\vx}}{2}$ which follows by recalling 
that $\iter{\vx}{0} = \vzero_d$.
\end{proof}

\subsection{Iterative Ridge Regression with Robust Frequent Directions}
We can slot the robust variant of \acrshort{fd} into the iterative framework.
The proofs follow on as before with a mild adjusting of the constants.
Again, the key technical detail is, for 
$\hat{\mH}_{\delta, \gamma} = \covMat{\mB} + (\delta + \gamma) \eye{d}$,
establishing that $\|\eye{d} - \hat{\mH}_{\delta, \gamma} \mH\|_2 < 1$.
The improvement over using \acrshort{rfd} is that we can weaken the hypothesis
necessary for the result.
\begin{lemma}
If $\alpha \Delta_k < \gamma$, then
$\norm{\eye{d} - \hat{\mH}_{\delta, \gamma}^{-1}\mH}{2} < 1$
\label{lem:rfd-spectral-norm}
\end{lemma}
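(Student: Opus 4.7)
The plan is to repeat the argument of Lemma \ref{lem:spectral-norm} almost verbatim, but with the sharper \acrshort{rfd} sketch bound of Theorem \ref{thm:rfd-guarantee} in place of the \acrshort{fd} bound of Theorem \ref{thm:fd-guarantee}. This replaces the factor $\alpha \Delta_k$ everywhere by $\alpha \Delta_k / 2$, which is exactly the source of the weakened hypothesis $\alpha\Delta_k < \gamma$ (versus $2\alpha\Delta_k < \gamma$).

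First I would unfold Theorem \ref{thm:rfd-guarantee} and add $\gamma \eye{d}$ to every term in the \lowner{} ordering, giving
\[
    \mH_{\gamma} - \tfrac{\alpha\Delta_k}{2}\eye{d}
    \preceq
    \hat{\mH}_{\delta,\gamma}
    \preceq
    \mH_{\gamma}.
\]
Setting $q = \alpha\Delta_k/(2\gamma)$ and applying Lemma \ref{lem:all-hessian-relations} (with the role of ``$s$'' played by $\alpha\Delta_k/2$) gives the two-sided relation $(1-q)\mH_{\gamma} \preceq \hat{\mH}_{\delta,\gamma} \preceq \mH_{\gamma}$, which under the hypothesis $\alpha\Delta_k < \gamma$ forces $q \in (0,1/2)$ and guarantees $\hat{\mH}_{\delta,\gamma} \succ \vzero_{d\times d}$ so that its symmetric positive definite square root exists.

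Next I would use similarity: $\hat{\mH}_{\delta,\gamma}^{-1}\mH_{\gamma}$ has the same spectrum as the symmetric matrix $\mE = \hat{\mH}_{\delta,\gamma}^{-1/2}\mH_{\gamma}\hat{\mH}_{\delta,\gamma}^{-1/2}$, so $\|\eye{d} - \hat{\mH}_{\delta,\gamma}^{-1}\mH_{\gamma}\|_2 = \max_i|1-\lambda_i(\mE)|$. Conjugating the displayed two-sided inequality by $\hat{\mH}_{\delta,\gamma}^{-1/2}$ on both sides (using the standard \lowner{} fact $\mC\mX\mC^{\top}\preceq\mC\mY\mC^{\top}$ when $\mX\preceq\mY$) yields
\[
    \eye{d}
    \preceq
    \mE
    \preceq
    \tfrac{1}{1-q}\eye{d},
\]
so every eigenvalue of $\mE$ lies in $[1, 1/(1-q)]$.

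The maximum of $|1-\lambda_i(\mE)|$ is therefore attained at the top end and equals $q/(1-q)$. Requiring $q/(1-q) < 1$ is equivalent to $q < 1/2$, which is exactly the hypothesis $\alpha\Delta_k < \gamma$. This concludes the argument. I expect the only subtle step to be keeping the \lowner{} manipulations clean when inserting $\hat{\mH}_{\delta,\gamma}^{-1/2}$, since one must verify that the square root is well defined and that the resulting ordering is indeed two-sided; everything else is a direct recycling of the \acrshort{fd} proof with the constant $\alpha\Delta_k$ halved.
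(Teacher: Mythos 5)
Your proposal is correct and follows essentially the same route as the paper: substitute the RFD guarantee (halving $\alpha\Delta_k$), set $q = \alpha\Delta_k/(2\gamma)$, pass to the similar symmetric matrix $\mE$, bound its spectrum in $[1, \tfrac{1}{1-q}]$, and require $q/(1-q)<1$, i.e. $q<1/2$, which the hypothesis $\alpha\Delta_k<\gamma$ supplies. The only cosmetic quibble is that $\max_i|1-\lambda_i(\mE)|$ is \emph{at most} $q/(1-q)$ rather than exactly equal to it, which does not affect the conclusion.
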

\begin{proof}
We follow the proof of Lemma \ref{lem:spectral-norm} almost exactly but with the
following modifications.
Equation \eqref{eq:useful-lowner} we use the \acrshort{rfd} guarantee which 
tightens the bounds to 
\begin{equation*}
    \covMat{\mA} + \left(\gamma - \frac{\alpha \Delta_k}{2}\right) \eye{d}
    \preceq
    \covMat{\mB} + \gamma \eye{d}
    \preceq \covMat{\mA} + \gamma \eye{d}.
\end{equation*}
Then take $\gamma' = \gamma - \nicefrac{\alpha \Delta_k}{2}$ and 
$q = \nicefrac{\alpha \Delta_k}{2\gamma}$.
Hence, $\gamma'/\gamma = 1 - q$ as before.
As in Equation \eqref{eq:q-to-bound}, we require $q/(1-q) < 1$ so $q < 1/2$.
By assumption $\alpha \Delta_k < \gamma$ so $q < 1/2$ is satisfied.
\end{proof}

\begin{theorem}
\label{thm:rfd-iterative-convergence}
Let $b \in (0,1)$ and suppose that $\alpha \Delta_k = b\gamma$.
The iterative sketch algorithm for regression with Robust Frequent Directions satisfies
\begin{equation}
    \norm{\iter{\vx}{t+1} - \opt{\vx}}{2} \le 
    \left(\frac{b }{2 - b}\right)^{t+1}
    \norm{\opt{\vx}}{2}
\end{equation}
\end{theorem}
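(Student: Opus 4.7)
The plan is to mirror the proof of Theorem \ref{thm:iterative-convergence} almost verbatim, with Lemma \ref{lem:rfd-spectral-norm} substituting for Lemma \ref{lem:spectral-norm}, and to carefully track how the improved constant from \acrshort{rfd} propagates through the contraction rate. First I would note that the iterate recursion derived in Equation \eqref{eq:ihs-proof-form} depends only on the form of the preconditioner, not on whether that preconditioner was built from \acrshort{fd} or \acrshort{rfd}; replacing $\hat{\mH}$ with $\hat{\mH}_{\delta,\gamma} = \covMat{\mB} + (\delta + \gamma)\eye{d}$ gives
\begin{equation*}
\iter{\vx}{t+1} - \opt{\vx} = \left(\eye{d} - \hat{\mH}_{\delta,\gamma}^{-1}\mH_{\gamma}\right)\left(\iter{\vx}{t} - \opt{\vx}\right).
\end{equation*}
Taking Euclidean norms on both sides and invoking submultiplicativity reduces the task to controlling $\|\eye{d} - \hat{\mH}_{\delta,\gamma}^{-1}\mH_{\gamma}\|_2$.

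Next I would invoke Lemma \ref{lem:rfd-spectral-norm}. Tracing its proof under the hypothesis $\alpha\Delta_k = b\gamma$ with $b \in (0,1)$: the \acrshort{rfd} version of \eqref{eq:useful-lowner} pays only half the tail, so the correct choice is $\gamma' = \gamma - \alpha\Delta_k/2$ and $q = \alpha\Delta_k/(2\gamma) = b/2$, which lies in $(0,1/2)$ precisely when $b < 1$. By the same \lowner{}-ordering argument that produced \eqref{eq:q-to-bound}, the eigenvalues of the similarity transform $\hat{\mH}_{\delta,\gamma}^{-1/2}\mH_{\gamma}\hat{\mH}_{\delta,\gamma}^{-1/2}$ lie in $[1, 1/(1-q)]$, and so
\begin{equation*}
\left\|\eye{d} - \hat{\mH}_{\delta,\gamma}^{-1}\mH_{\gamma}\right\|_2 \le \frac{q}{1-q} = \frac{b/2}{1 - b/2} = \frac{b}{2-b}.
\end{equation*}

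Combining the two displays yields the one-step contraction $\|\iter{\vx}{t+1} - \opt{\vx}\|_2 \le \frac{b}{2-b}\|\iter{\vx}{t} - \opt{\vx}\|_2$. Inducting on $t$ and using the initialization $\iter{\vx}{0} = \vzero_d$ (so $\|\iter{\vx}{0} - \opt{\vx}\|_2 = \|\opt{\vx}\|_2$) gives the claimed bound.

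There is no real obstacle here beyond bookkeeping: every ingredient is already in place. The only point that requires some care is verifying that the constant $b/(2-b)$ really is what drops out of Lemma \ref{lem:rfd-spectral-norm} — in particular that the extra $\delta \eye{d}$ term inside $\hat{\mH}_{\delta,\gamma}$ is exactly what lets us use the factor $1/2$ in front of $\alpha\Delta_k$ (this is why the assumption weakens from $2\alpha\Delta_k < \gamma$ to $\alpha\Delta_k < \gamma$, i.e.\ $b < 1$ instead of $b < 1/2$). Once that constant is pinned down, the induction is identical to that of Theorem \ref{thm:iterative-convergence}.
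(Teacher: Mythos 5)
Your proposal is correct and follows essentially the same route as the paper: reuse the iterate recursion and induction from the FD case, substitute Lemma \ref{lem:rfd-spectral-norm} with $q = \alpha\Delta_k/(2\gamma) = b/2$, and compute the contraction factor $q/(1-q) = b/(2-b)$. The arithmetic and the role of the halved tail bound from \acrshort{rfd} are exactly as in the paper's own (very terse) proof.
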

\begin{proof}
Same proof as Theorem \ref{thm:iterative-convergence} except noting that
\begin{align*}
    \beta &= \frac{q}{1-q} \\
    &=\frac{\alpha \Delta_k / 2\gamma}{(2 \gamma - \alpha \Delta_k)/2\gamma} \\
    &= \frac{\alpha \Delta_k}{2\gamma - \alpha \Delta_k} \\ 
    &= \frac{b}{2 - b}
\end{align*}
\end{proof}

\subsection{Further Experimental Results}
\label{sec:iterative-extra-experiments}
In Figures \ref{fig:iterative-sketch-cover} and \ref{fig:iterative-sketch-years}
we plot the results for the experiments as described in Section 
\ref{sec:iterative-fdrr}.
The conclusions remain the same as in Section \ref{sec:iterative-fdrr} with 
Robust Frequent Directions providing the best small-space preconditioner,
followed by Frequent Directions and the iterative Hessian Sketch with 
\acrshort{sjlt} (\acrshort{ihs}:\acrshort{sjlt}).
Although \acrshort{ihs}:\acrshort{sjlt} appears competitive, it requires a new
sketch for every gradient step.
While these can be computed in parallel on viewing the data, it is still many
more sketches than the single sketch required by the (Robust) Frequent 
Directions methods.
When one is restricted to a single sketch, the \acrshort{sjlt} does not perform
at a similar level to the (Robust) Frequent 
Directions methods.

\begin{figure*}
     \centering
     \begin{subfigure}[b]{0.3\textwidth}
         \centering
         \includegraphics[width=\textwidth]{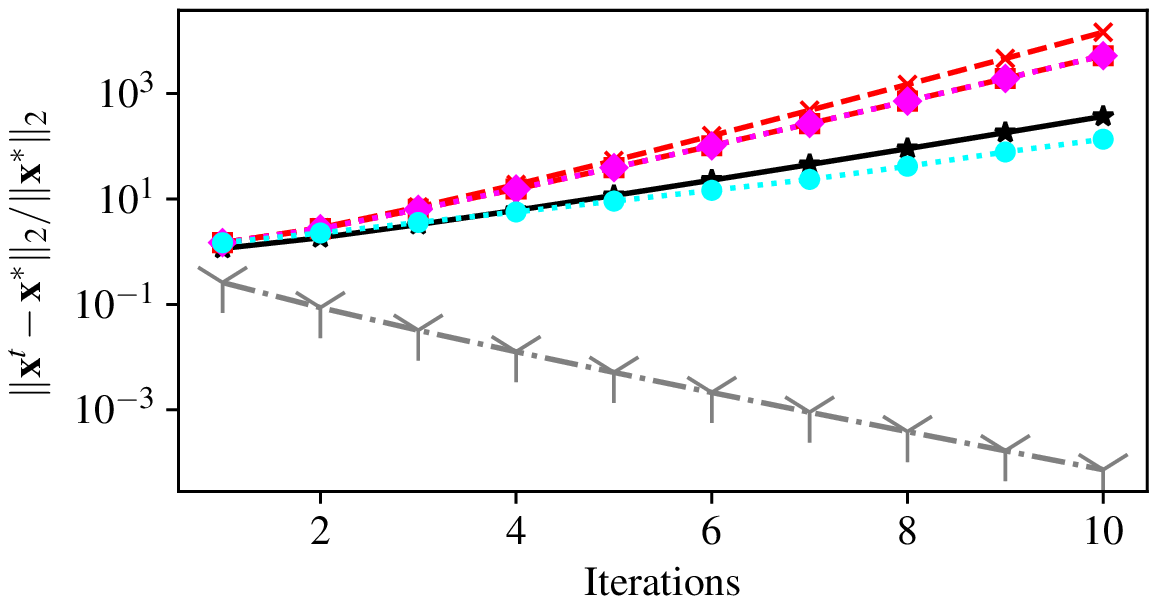}
         \caption{$\gamma=10$}
         \label{fig:cover-gamma-10}
     \end{subfigure}
     \hfill
     \begin{subfigure}[b]{0.3\textwidth}
         \centering
         \includegraphics[width=\textwidth]{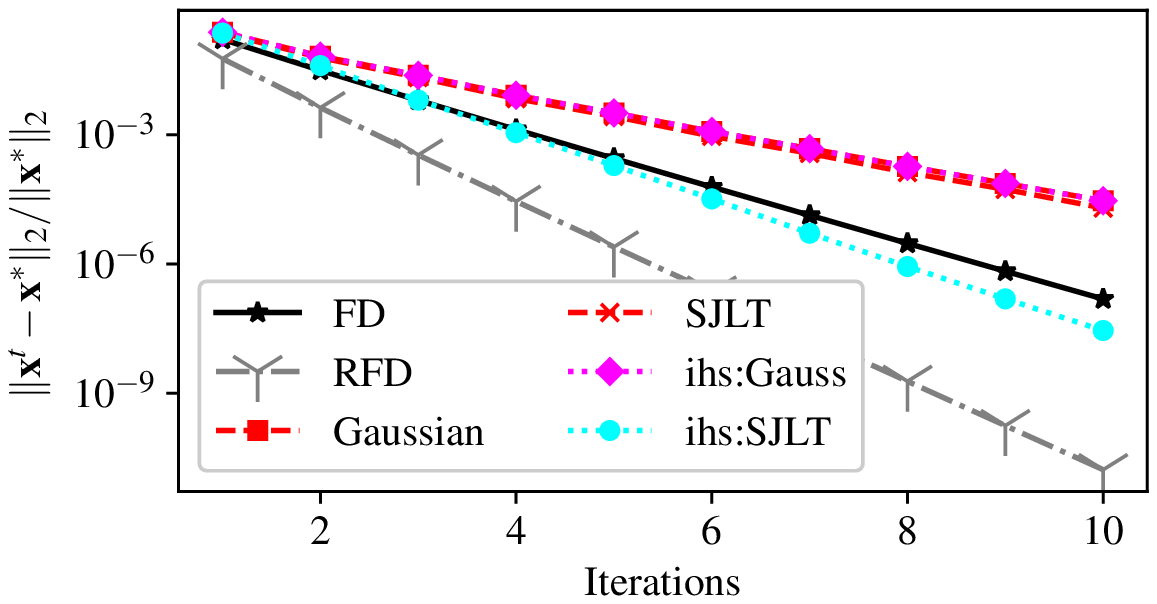}
         \caption{$\gamma=100$}
         \label{fig:cover-gamma-100}
     \end{subfigure}
     \hfill
     \begin{subfigure}[b]{0.3\textwidth}
         \centering
         \includegraphics[width=\textwidth]{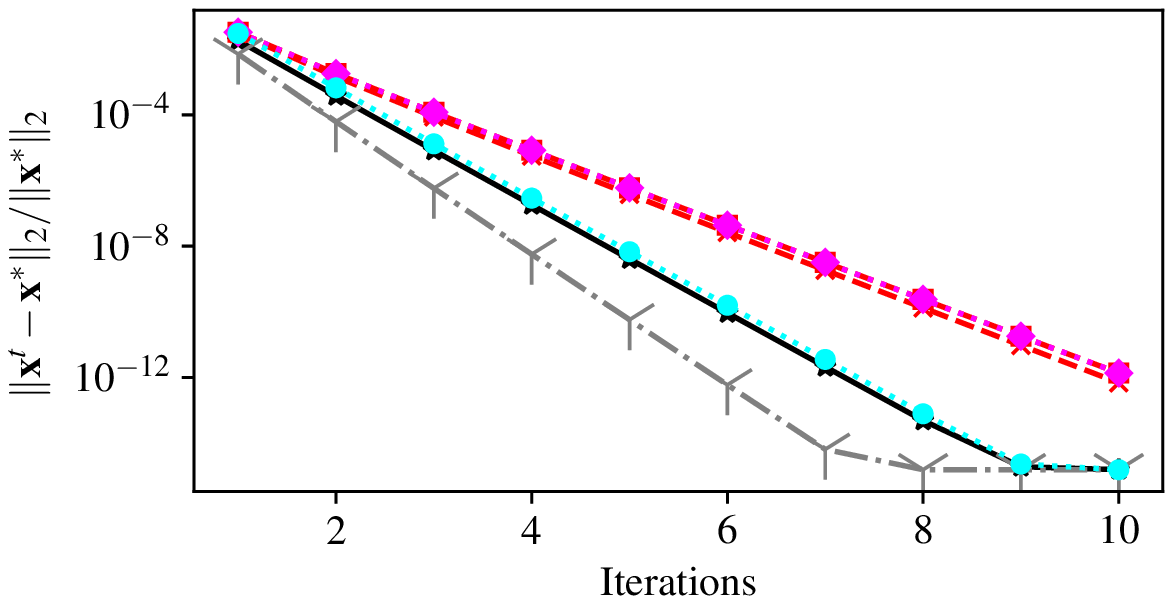}
         \caption{$\gamma=1000$}
         \label{fig:cover-gamma-1000}
     \end{subfigure}
        \caption{Performance of the iterative ridge regression algorithm 
        (Algorithm \ref{alg:ifdrr}) on the ForestCover dataset.}
        \label{fig:iterative-sketch-cover}
\end{figure*}

\begin{figure*}
     \centering
     \begin{subfigure}[b]{0.3\textwidth}
         \centering
         \includegraphics[width=\textwidth]{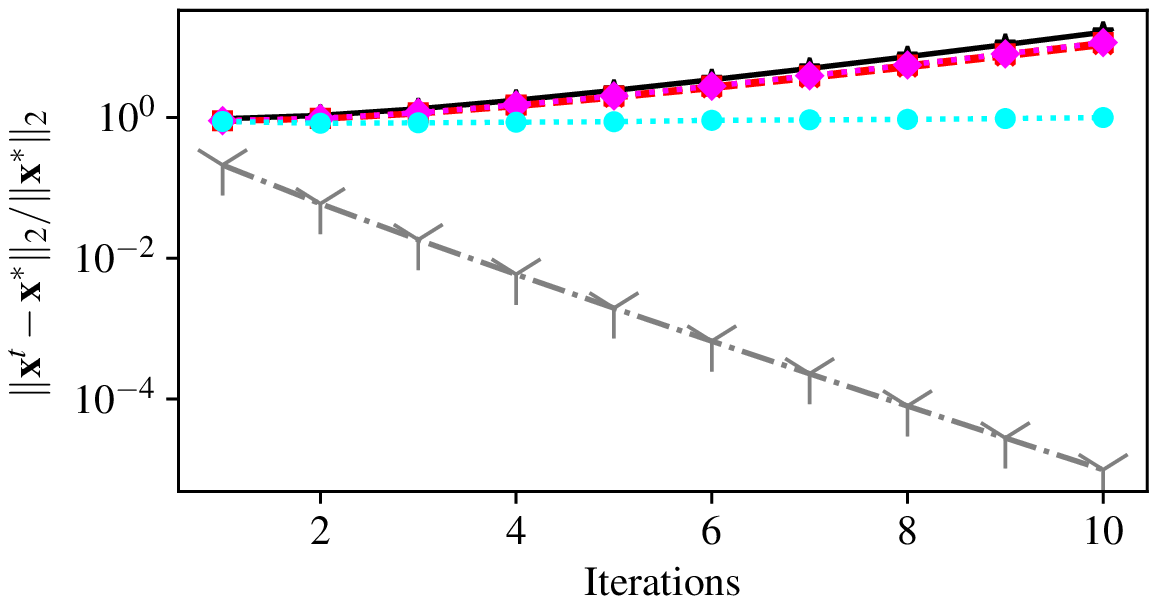}
         \caption{$\gamma=10$}
         \label{fig:years-gamma-10}
     \end{subfigure}
     \hfill
     \begin{subfigure}[b]{0.3\textwidth}
         \centering
         \includegraphics[width=\textwidth]{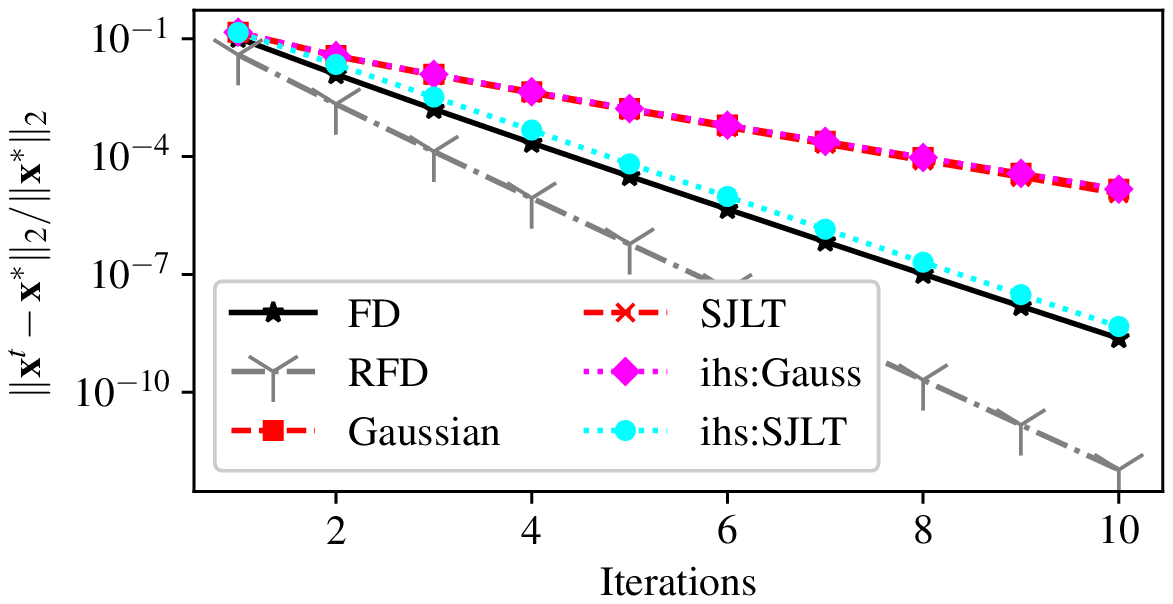}
         \caption{$\gamma=100$}
         \label{fig:years-gamma-100}
     \end{subfigure}
     \hfill
     \begin{subfigure}[b]{0.3\textwidth}
         \centering
         \includegraphics[width=\textwidth]{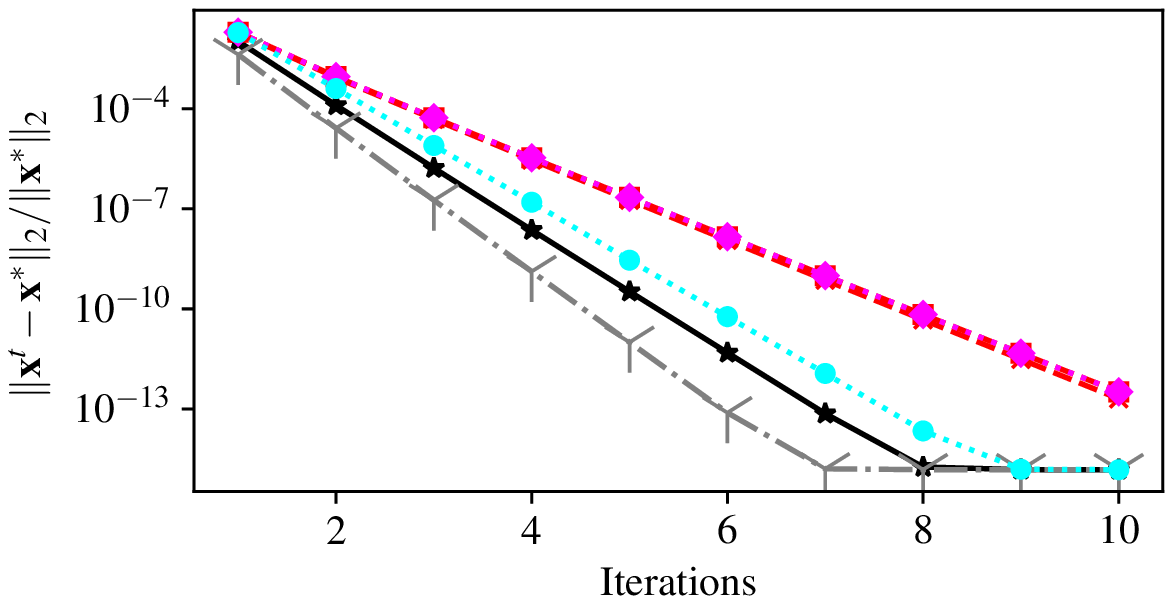}
         \caption{$\gamma=1000$}
         \label{fig:years-gamma-1000}
     \end{subfigure}
        \caption{Performance of the iterative ridge regression algorithm 
        (Algorithm \ref{alg:ifdrr}) on the YearPredictions dataset.}
        \label{fig:iterative-sketch-years}
\end{figure*}

\newpage

\section{Spectral Results and \lowner{} Ordering}
\label{sec:linear-algebra-results}

\subsection{\lowner{} Ordering Properties}
\label{sec:lowner-properties}
A matrix $\mA \in \R^{d \times d}$ is symmetric positive definite (p.d.) if and only 
if it is symmetric and positive definite.
Positive definite means that $\mA  \succ \vzero$, equivalently,
$\vx^{\top} \mA \vx > 0$.
Applied to covariance matrices of full rank, for example 
$\covMat{\mA}$ this is equivalent to asking for $\| \mA \vx \|_2^2 > 0$.
The strictness of each of the above inequalities can be relaxed to allow 
equality if we permit symmetric positive \emph{semi} definite matrices (spsd)
For any two spsd matrices we write $\mB \preceq \mA $ if and only if 
$\mA - \mB \succeq \vzero_{d \times d}$.

Some facts which can be found at (\url{https://www.cs.ubc.ca/~nickhar/W12/NotesMatrices.pdf}) or in 
\emph{Appendix A: Aspects of Semidefinite Programming \citep{de2006aspects}}
are :

\begin{fact}
Let $\mA, \mB, \mC$ be arbtitrary symmetric positive definite matrices.
\begin{enumerate}
    \item{If $\mA \preceq \mB$ then it is not strictly true that 
    $\mA^2 \preceq \mB^2$.  This is the case if the matrices commute, however. }
    \item{If $\mA \preceq \mB$ then $\mC \mA \mC^{\top} \preceq  \mC \mB
    \mC^{\top}$. In fact, this is an if and only if when $\mC$ is of full rank.}
    \item{Let $\lambda_{\min}$ and $\lambda_{\max}$ be the smallest and largest
    eigenvalues of $\mA$. Then $\lambda_{\min} \eye{d} \preceq \mA \preceq
    \lambda_{\max} \mA$.}
    \item{If $\mA \preceq \mB$ then $\tr(\mA) \le \tr(\mB)$}
    \item{If $\mA \preceq \mB$ then $\mB^{-1} \preceq \mA^{-1}$} 
\end{enumerate}
\end{fact}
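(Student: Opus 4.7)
All five assertions follow from two elementary facts: the quadratic-form characterisation $\mX \succeq \vzero \iff \vy^{\top}\mX\vy \ge 0$ for all $\vy$, and the unitary diagonalisation $\mX = \mU\mLambda\mU^{\top}$ of any real symmetric $\mX$. Items 2--4 reduce to one-liners. For Item 2, write $\mA \preceq \mB$ as $\mB - \mA \succeq \vzero$ and observe that for any $\vy$, $\vy^{\top}\mC(\mB-\mA)\mC^{\top}\vy = (\mC^{\top}\vy)^{\top}(\mB-\mA)(\mC^{\top}\vy) \ge 0$; when $\mC$ has full row rank, $\vy \mapsto \mC^{\top}\vy$ is surjective and the converse direction also holds. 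For Item 3, diagonalise $\mA = \mU\mLambda\mU^{\top}$ and apply $\lambda_{\min}\|\vz\|_2^2 \le \sum_i \lambda_i z_i^2 \le \lambda_{\max}\|\vz\|_2^2$ with $\vz = \mU^{\top}\vx$. For Item 4, take the trace of both sides of $\mB - \mA \succeq \vzero$ and recall that the trace of a p.s.d.\ matrix equals the sum of its nonnegative eigenvalues.

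\textbf{Item 5 (inverse reversal).} Let $\mA^{1/2}$ denote the unique s.p.d.\ square root of $\mA$. Applying Item 2 with $\mC = \mA^{-1/2}$ to the hypothesis $\mA \preceq \mB$ yields $\eye{d} \preceq \mA^{-1/2}\mB\mA^{-1/2}$. The right-hand side is s.p.d., so diagonalising it and invoking the scalar implication $0 < s \le t \Rightarrow 1/s \ge 1/t$ coordinatewise produces $\mA^{1/2}\mB^{-1}\mA^{1/2} \preceq \eye{d}$. A second congruence by $\mC = \mA^{-1/2}$ then delivers $\mB^{-1} \preceq \mA^{-1}$.

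\textbf{Item 1 (the main obstacle).} The one genuinely subtle point is that the squaring map \emph{fails} to be monotone on the L\"owner cone, so the statement is negative and requires a concrete counterexample. I would adapt the classical $2 \times 2$ example in which $\mA_0$ is the all-ones matrix and $\mB_0 = \mA_0 + \ve_1 \ve_1^{\top}$: here $\mB_0 - \mA_0 = \ve_1 \ve_1^{\top} \succeq \vzero$ (so $\mA_0 \preceq \mB_0$), yet a direct computation shows $\mB_0^2 - \mA_0^2$ has determinant $-1$ and is therefore indefinite. To upgrade to \emph{strict} positive definiteness I would add $\eps \eye{2}$ to both matrices, noting that $(\mB_0 + \eps\eye{2})^2 - (\mA_0 + \eps\eye{2})^2 = (\mB_0^2 - \mA_0^2) + 2\eps(\mB_0 - \mA_0)$ (the $\eps^2$ terms cancel), so by continuity of eigenvalues the indefiniteness survives for all sufficiently small $\eps > 0$. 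For the commuting case, $\mA$ and $\mB$ admit a common orthonormal eigenbasis, so in that basis both are diagonal with nonnegative entries and the coordinatewise implication $0 \le a \le b \Rightarrow a^2 \le b^2$ yields $\mA^2 \preceq \mB^2$ directly.
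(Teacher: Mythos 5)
Your proofs are correct. Note, however, that the paper never proves this Fact at all: it is stated as a collection of standard results with pointers to external references (Harvey's lecture notes and the appendix of \citet{de2006aspects}), so there is no in-paper argument to compare against. Your write-up supplies, self-contained, exactly the standard arguments those references contain: the quadratic-form/congruence characterisation $\vy^{\top}\mC(\mB-\mA)\mC^{\top}\vy \ge 0$ for Item 2 (which is also the engine for Items 4 and 5), diagonalisation for Item 3, the square-root-plus-congruence trick for the inverse reversal in Item 5, and the classical $2\times 2$ counterexample for Item 1 --- your computation there checks out, since $\mB_0^2 - \mA_0^2 = \bigl(\begin{smallmatrix} 3 & 1 \\ 1 & 0 \end{smallmatrix}\bigr)$ has determinant $-1$, and the $\eps$-perturbation argument correctly upgrades the example to strictly positive definite matrices, as the Fact's hypotheses require. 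One small caveat: in Item 2 you claim that full \emph{row} rank of $\mC$ makes $\vy \mapsto \mC^{\top}\vy$ surjective; surjectivity onto $\R^d$ actually requires full \emph{column} rank (rank equal to $d$). In the setting of this Fact, and everywhere the paper applies it ($\mC = \mV$ orthogonal, $\mC = \hat{\mH}^{-1/2}$), the matrix $\mC$ is square and invertible, so the distinction is vacuous, but your phrasing would be wrong for a wide rectangular $\mC$.
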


\subsection{Matrix Results}
We need two further standard results:

\begin{lemma}
A positive definite matrix has a unique positive definite square root
which is symmetric.
\end{lemma}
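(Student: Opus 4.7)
The plan is to prove existence by construction via the spectral theorem, and uniqueness by showing any positive definite square root must commute with $\mA$ and so admits a simultaneous diagonalization.

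\textbf{Existence and symmetry.} Since $\mA$ is positive definite (hence symmetric in the convention used throughout the paper), the spectral theorem gives an orthogonal $\mQ$ and a diagonal $\mLambda = \operatorname{diag}(\lambda_1, \ldots, \lambda_d)$ with $\lambda_i > 0$ such that $\mA = \mQ \mLambda \mQ^{\top}$. Define
\begin{equation*}
\mB = \mQ \mLambda^{1/2} \mQ^{\top}, \qquad \mLambda^{1/2} = \operatorname{diag}(\sqrt{\lambda_1}, \ldots, \sqrt{\lambda_d}).
\end{equation*}
Then $\mB^{\top} = \mB$, its eigenvalues $\sqrt{\lambda_i}$ are strictly positive, and a direct computation gives $\mB^2 = \mQ \mLambda \mQ^{\top} = \mA$. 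This produces a symmetric positive definite square root.

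\textbf{Uniqueness.} Suppose $\mC$ is any positive definite matrix (so symmetric) with $\mC^2 = \mA$. First I would observe that $\mC$ commutes with $\mA$:
\begin{equation*}
\mC \mA = \mC \cdot \mC^2 = \mC^3 = \mC^2 \cdot \mC = \mA \mC.
\end{equation*}
Two commuting symmetric matrices are simultaneously diagonalizable by an orthogonal matrix, so there exists orthogonal $\mR$ with $\mA = \mR \mLambda \mR^{\top}$ and $\mC = \mR \mM \mR^{\top}$, where $\mM = \operatorname{diag}(\mu_1, \ldots, \mu_d)$ has $\mu_i > 0$. Squaring and equating gives $\mM^2 = \mLambda$, which forces $\mu_i = \sqrt{\lambda_i}$ entrywise (the positive root is uniquely determined). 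Then $\mC = \mR \operatorname{diag}(\sqrt{\lambda_i}) \mR^{\top}$, which, applied to any vector $\vx$, acts on each eigenspace of $\mA$ by multiplication by the corresponding $\sqrt{\lambda_i}$; this action is the same as that of $\mB$, so $\mC = \mB$.

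\textbf{Main obstacle.} The only subtlety is the simultaneous diagonalization step when $\mA$ has repeated eigenvalues: the orthogonal $\mQ$ from the original spectral decomposition of $\mA$ is not unique on each eigenspace, so one cannot just reuse it. The clean way around this is to pick an orthogonal basis adapted to $\mC$ inside each eigenspace of $\mA$, which is possible because $\mC$ restricted to any eigenspace of $\mA$ is symmetric (it commutes with $\mA$ and leaves each eigenspace invariant). This is the standard simultaneous-diagonalization argument and is the only non-routine point in the proof.
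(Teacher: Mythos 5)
Your proof is correct: existence via the spectral theorem and uniqueness via the commutation argument $\mC\mA = \mC^3 = \mA\mC$ followed by simultaneous diagonalization is the standard argument, and you correctly flag and resolve the only delicate point (repeated eigenvalues), since both candidate roots must act as multiplication by $\sqrt{\lambda}$ on each eigenspace of $\mA$. For comparison, the paper offers no proof at all --- it states this as one of two ``further standard results'' and immediately moves on to using it to justify applying the \lowner{} conjugation property to matrices like $\mH_{\gamma}^{1/2}$ --- so your write-up supplies a proof the paper omits rather than diverging from one it gives.
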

This lemma allows us to take positive definite matrix $\mQ = \mC \mC^{\top}$
(or its) inverse and 
invoke property 2 for the \lowner{} ordering.
This is because the square root is additionally symmetric so $\mC = \mC^{\top}$.
We repeatedly apply this result on matrices such as $\mH_{\gamma}, 
\mH_{\gamma}^{-1}$.
For square matrices $\mX,\mY$, the generalized Rayleigh quotient is 
$R(\mX, \mY, \vu) = \nicefrac{\vu^{\top} \mX \vu}{\vu^{\top} \mY \vu}$.
\begin{lemma}
The largest (smallest) eigenvalue of a psd matrix $\mX$ maximises (minimises) 
the Rayleight Quotient over $\mX$:

\begin{align*}
    \lambda_{\max} &:= \max_{\vu : \|\vu\|_2=1}R(\mX, \mY, \vu) \\ 
    \lambda_{\min} &:= \min_{\vu : \|\vu\|_2=1}R(\mX, \mY, \vu)
\end{align*}
\end{lemma}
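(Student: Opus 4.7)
The plan is to reduce to a diagonal problem via the spectral decomposition of $\mX$. Since $\mX$ is symmetric positive semidefinite, the spectral theorem gives $\mX = \mU \mLambda \mU^{\top}$ with $\mU$ orthogonal and $\mLambda = \textsf{diag}(\lambda_1, \ldots, \lambda_d)$, $\lambda_1 \ge \cdots \ge \lambda_d \ge 0$. Reading the lemma in the standard case (i.e., $\mY = \eye{d}$, which is consistent with the normalization $\|\vu\|_2 = 1$ that appears in the statement), the Rayleigh quotient collapses to $\vu^{\top} \mX \vu$.

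Next I would change variables to $\vw = \mU^{\top} \vu$, which is again a unit vector by orthogonality of $\mU$. This rewrites the quantity as $\vw^{\top} \mLambda \vw = \sum_i \lambda_i w_i^2$, a convex combination of the eigenvalues with weights $w_i^2 \ge 0$ summing to one. The sandwich $\lambda_{\min} \le \vu^{\top} \mX \vu \le \lambda_{\max}$ is then immediate, and is attained by taking $\vw = \ve_1$ and $\vw = \ve_d$ respectively — equivalently, by aligning $\vu$ with the top or bottom eigenvector of $\mX$.

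For the generalized reading (nontrivial $\mY \succ 0$ in the denominator, which is the form actually invoked in the paper's earlier proofs), I would make the substitution $\vz = \mY^{1/2} \vu$ using the unique symmetric positive definite square root of $\mY$ guaranteed by the preceding fact. This converts $R(\mX, \mY, \vu)$ into the standard Rayleigh quotient for the symmetric matrix $\mY^{-1/2} \mX \mY^{-1/2}$, and the diagonalization argument above carries through verbatim, with $\lambda_{\max}$ and $\lambda_{\min}$ now being the extremal eigenvalues of this conjugated matrix (equivalently, the extremal generalized eigenvalues of the pencil $(\mX, \mY)$).

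There is no substantive obstacle here: the result is a textbook fact and the work is entirely linear-algebraic bookkeeping once the eigendecomposition is written down. The only point that deserves a brief remark rather than proof is the reading of the normalization constraint when $\mY \ne \eye{d}$, since the statement pins $\|\vu\|_2 = 1$ rather than $\vu^{\top} \mY \vu = 1$; either restricting to the standard case or reinterpreting the constraint as $\mY$-normalization resolves this, and in both readings the diagonalization sketch above supplies the tight bounds.
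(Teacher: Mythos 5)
Your proposal is correct. The paper states this lemma without proof, treating it as a standard fact about (generalized) Rayleigh quotients, so there is no in-paper argument to compare against; your diagonalization argument --- spectral decomposition of $\mX$, orthogonal change of variables, and the observation that $\vu^{\top}\mX\vu$ becomes a convex combination of eigenvalues --- is the standard and complete proof, and your reduction of the generalized case via $\vz = \mY^{1/2}\vu$ to the ordinary Rayleigh quotient of $\mY^{-1/2}\mX\mY^{-1/2}$ matches exactly how the lemma is actually invoked earlier in the appendix (e.g.\ in the proof of Lemma \ref{lem:eigenvalue-distortion}). Your side remark about the normalization mismatch --- the statement fixes $\|\vu\|_2=1$ even though the generalized quotient calls for $\vu^{\top}\mY\vu=1$ (or no normalization at all, since $R$ is scale-invariant) --- is a fair catch of a small imprecision in the lemma as written, and your resolution of it is the right one.
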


\section{Miscellaneous}
\label{sec:appendix-miscellaneous}
\begin{itemize}
    \item{\textbf{Synthetic Data.} 
    We adapt the synthetic dataset found in Section
    4 \citep{shi2020deterministic}.
    First we set the \emph{effective dimension} $R = \lfloor 0.1\cdot d + 0.5
    \rfloor$.
    This is then used to set the number of nonzero indices in the ground truth
    vector $\vx_0$ and the number of standard deviations for the multivariate
    normal distribution used in generating $\mA$.
    The first $R$ components of $\vx_0$ are sampled from a standard normal 
    distribution, $\vx_0$ is then normalised to unit length.
    The samples (rows) $\mA_i$ are generated by a normal distribution with 
    standard deviation $\exp(-(i-1)^2/R^2)$ for $i = 1:n$.
    Finally, we rotate $\mA$ by a discrete cosine transform.
    We sample noise a noise vector $\veps$ with $\veps_i \sim \Normal{0}{2^2}$
    and set $\vy = \mA \vx_0 + \veps$.
    }
    \item{\textbf{Gaussian Random Projection.}
    The sketch $\mS \mA \in \R^{m \times d}$ is generated by choosing 
    $\mR_{ij} \sim \Normal{0}{1}$ and then taking $\mS = \mR / \sqrt{m}$.
    If $m = O(d \rho^{-2} \log (1/\delta))$, then $\mS$ is a $(1 \pm \rho)$-
    $\ell_2$ subspace embedding for $\mA$ \citep{woodruff2014sketching}.
    }
    \item{\textbf{\acrshort{sjlt}.} 
    We use the \acrshort{sjlt} as it compromises a small sketch dimension $m$
    against the speed at which one can apply the transform.
    The \acrshort{sjlt} is a concatenation of $s$ CountSketch matrices which
    are defined as follows \cite{clarkson2017low}.
    Let $\mS = \vzero_{m \times n}$.  For every column $c \in [n]$, choose a row
    $r$ uniformly at random from $[m]$.
    Randomly set $\mS_{rc} = \pm 1$ each with probability $1/2$.
    It has been shown that such an $\mS$ provides a $(1\pm\rho)$-subspace embedding
    with probability at least $1-\delta$ if $m = O(d^2 \rho^{-2} \delta^{-1})$.
    This is not favourable if $d$ is moderate-to-large and is only suitable for 
    constant probability of success due to the $1/\delta$ dependency.
    However, the CountSketch can be applied to input $\mA$ easily as it is 
    observed.
    In order to retain the benefits of CountSketch but to improve on its weaker
    space dependency, \cite{nelson2013osnap} showed that by 
    stacking $s > O(1/\rho)$ CountSketch matrices of size $m/s$ and choosing 
    $m = O(d \rho^{-2} \operatorname{polylog}(d/\delta))$ then a $(1\pm\rho)$-
    subspace embedding can be achieved.
    The time to apply the embedding is then $s$ times the time required to apply
    a CountSketch, so is still close to the time taken to read the data.
    We refer to this construction as an \acrshort{sjlt} which is a factor of $d$
    better in the projection dimension $m$ and exponentially better for the 
    failure probability than CountSketch.
    Our experiments take $s=10$.
    }
\end{itemize}

\end{document}